\documentclass[10pt]{article}

\usepackage{fullpage}
\usepackage{hyperref}
\usepackage[numbers]{natbib}
\usepackage{tabularx}
\usepackage{booktabs}
\usepackage[dvipsnames]{xcolor}
\usepackage{amssymb,amsthm,amsfonts,amsmath}
\usepackage{mathtools}
\usepackage{nicefrac}
\usepackage{algorithm,algorithmic}
\usepackage{graphicx}

\newtheorem{definition}{Definition}

\newtheorem{theorem}{Theorem}
\newtheorem{lemma}{Lemma}
\newtheorem{proposition}{Proposition}

\newtheorem{remark}{Remark}

\newcommand{\eqdef}{\coloneqq} 
\newcommand{\cO}{{\cal O}}
\newcommand{\cC}{{\cal C}}

\newcommand{\R}{\mathbb{R}}
\newcommand{\N}{\mathbb{N}}
\newcommand{\C}{\mathbb{C}}

\newcommand{\E}{\mathbb{E}}
\newcommand{\B}{\mathbb{B}}
\newcommand{\U}{\mathbb{U}}
\newcommand{\sphere}{\mathbb{S}}

\def\Pr{{\rm Prob}}

\DeclareMathOperator*{\sign}{{\rm sign}}

\def\<{\left\langle}
\def\>{\right\rangle}
\def\[{\left[}
\def\]{\right]}
\def\({\left(}
\def\){\right)}

\hypersetup{
	colorlinks=true,       
	linkcolor=blue,        
	citecolor=blue,        
	filecolor=magenta,     
	urlcolor=blue
}

\begin{document}
	
\title{\bf Optimal Gradient Compression \\for Distributed and Federated Learning}
\author{\qquad Alyazeed Albasyoni   \qquad Mher Safaryan \qquad Laurent Condat \qquad  Peter Richt{\'a}rik \\
\phantom{XXX} \\
{\em King Abdullah University of Science and Technology (KAUST)}
}

\date{September 28, 2020}
\maketitle

\begin{abstract}
Communicating information, like gradient vectors, between computing nodes in distributed and federated learning is typically an unavoidable burden, resulting in scalability issues. Indeed, communication might be slow and costly. Recent advances in communication-efficient training algorithms have reduced this bottleneck by using compression techniques, in the form of sparsification, quantization, or low-rank approximation. Since compression is a lossy, or inexact, process, the iteration complexity is typically worsened; but the total communication complexity can improve significantly, possibly leading to large computation time savings. In this paper, we investigate the fundamental trade-off between the number of bits needed to encode compressed vectors and the compression error. 
We perform both worst-case and average-case analysis, providing tight lower bounds. In the worst-case analysis, we introduce an efficient compression operator, {\em Sparse Dithering}, which is very close to the lower bound. In the average-case analysis, we design a simple compression operator,  {\em Spherical Compression}, which naturally achieves the lower bound. Thus, our new compression schemes significantly outperform the state of the art. We conduct numerical experiments to illustrate this improvement.
\end{abstract}

\tableofcontents

\section{Introduction}

Due to the necessity of huge amounts of data to achieve high-quality machine learning models \cite{Sch,Vaswani2019-overparam}, modern large-scale training procedures are executed in a distributed environment \cite{bekkerman2011scaling,vogels}. In such a setup, both storage and computation needs are reduced, as the overall data (potentially too big to fit into a single machine) is partitioned among the nodes and computation is carried out in parallel. However, in order to keep the consensus across the network, compute nodes have to exchange some information about their local progress \cite{Stich2018:localsgd,localSGD-AISTATS2020,basu}.
The demand of information communication between all machines in a distributed setup is typically a burden, resulting in a scalability issue commonly referred to as {\em communication bottleneck} \cite{SFDLY, ZLKALZ, LHMWD}. To reduce the amount of information to be transferred, information is passed in a compressed or inexact form. {\em Information lossy compression} is a common practice, where original information is encoded approximately with essentially fewer bits, while introducing additional controllable distortion into the decoded message.

In the context of {\em Federated Learning} \cite{FEDLEARN,FL2017-AISTATS,Karimireddy2019}, communication between devices arises naturally, as data is initially decentralized and should remain so, for privacy purposes. Actually,
it might be desirable for each unit, or client, to compress/encode/encrypt 
the information they are going to share, in order to minimize private data disclosures.
Another practical scenario where compression methods are useful is when storage capabilities are scarce or there is no need to save complete versions of the data. In such cases, the representation of the data (encoding and decoding schemes) can be optimized and with little or no precision loss, one can allocate significantly less memory space.

\subsection{Related Work}

Recently, substantial amount of work has been devoted to the advances of communication-efficient training algorithms by utilizing various types of compression mechanisms, such as sparsification \cite{tonko,WSLCPW,Dryden2016:topk}, quantization \cite{AGLTV,terngrad,Cnat} and low-rank approximation \cite{vogels}. Typically, the information communicated by computing nodes consists of local gradients, to which compression operators are applied. For example, one popular example of such compression operator is Top-$k$ \cite{Alistarh-SparsGradMethods2018}, which transfers only $k$ coordinates of the gradient with largest magnitudes. 

The theoretical foundation of lossy compression has long history and is based on {\em Rate-Distortion Theory} introduced by Shannon in his seminal papers \cite{shannon-1,shannon-2}. Recently, rate-distortion theory has been utilized in the context of model compression \cite{GLWO-modelcompression,YWSV-modelcompression}. In contrast to this, another line of research is devoted to the lossless compression methods which is rooted in {\em Shannon's source coding theorem} \cite{elements_IT}. Both approaches exploit statistical properties of the input messages for analyses, which differs from our setting.

We investigate the problem of lossy compression, namely encoding vectors $x\in\R^d$ without prior knowledge on the distribution, for any $d\geq 1$,  into as few bits as possible, while introducing as little distortion as possible. Formally, we measure the distortion of a (possibly randomized) compression operator $\mathcal{C}\colon\R^d\to\R^d$ by its constant $\alpha\in[0,1]$  such that $\E\[\|\mathcal{C}(x)-x\|^2\] \le \alpha\|x\|^2$ for every $x$, where the norm is the Euclidean norm (see Definitions \ref{def:omega-compressor}, \ref{def:biased} and \ref{def:strict-contraction} for details).  We denote by $b$ the number of bits  (in the worst case or in expectation) needed to encode $\mathcal{C}(x)$.
Intuitively, $b$  and $\alpha$  cannot be too small at the same time: they are antagonistic and ruled by a fundamental rate-distortion trade-off. As a matter of fact, as shown in \cite{up_kashin_2020},
the following lower bound, referred to as {\em uncertainty principle for communication compression}, holds (if omitted, the base of $\log$ is assumed to be $2$):
\begin{equation}\label{up-alpha}
\alpha\, 4^{\nicefrac{b}{d}} \ge 1 \quad\text{or, equivalently,}\quad b\ge d \, \frac{1}{2}\log\frac{1}{\alpha}.
\end{equation}
In this work, we investigate this trade-off more deeply.
We perform two types of analyses: {\em worst case analysis (WCA)} and {\em average case analysis (ACA)}.
Then, capitalizing on this new knowledge, we design new efficient compression schemes.
Note that our derivations  deal with real numbers, compressed using a finite number of bits. We should keep in mind that numbers are represented by finite-precision, say 32 bits, floats in computers. We can safely omit this aspect in the derivations; we discuss this point in more details in the Appendix.

\subsection{Contributions}

Here we summarize our key contributions.

$\bullet$ {\bf (WCA) Tighter bounds on minimal communication.}
First, we construct a compression scheme with $\alpha$ distortion and $b$ encoding bits (in the worst case), which satisfies
\begin{equation}\label{covering-example-bound}
\alpha \, 4^{\nicefrac{b}{d}} \le \text{\rm poly}(d)^{\nicefrac{1}{d}} \quad\text{or}\quad b\le d \, \frac{1}{2}\log\frac{1}{\alpha} + \cO(\log{d}).
\end{equation}
This implies the asymptotic tightness of the bound (\ref{up-alpha}) as the dimension $d$ grows (see Theorem \ref{thm:tight-construction-alpha}). Then, we investigate the minimal number of bits (in the worst case) $b^*(\alpha, d)$ as a function of distortion $\alpha$ and dimension $d$ proving that
\begin{equation*}
b^*(\alpha, d) = -\log P(\alpha,d) + \log d + \frac{1}{2}\log\log d + e,
\end{equation*}
where $P(\alpha, d) \eqdef \frac{1}{2}I_{\alpha}(\frac{d-1}{2}, \frac{1}{2})$ with $I_{\alpha}$ being the regularized incomplete beta function, and $e$ is negligible additive error with $|e| \le \frac{1}{2}\log\log d + \cO(1)$ (see Theorem \ref{thm:tighter-bounds}), as opposed to $\cO(\log{d})$ in (\ref{up-alpha}) and (\ref{covering-example-bound}).

$\bullet$ {\bf (WCA) Near optimal and practical compressor.}
Motivated by these lower bounds we turn to the construction of a compression method which would be optimal and implementable in high dimensions. The example compression schemes in Theorem \ref{thm:tight-construction-alpha} ensuring (\ref{covering-example-bound}) or in Theorem \ref{thm:tighter-bounds} are optimal but impractical, due to the exponential computation time to compress a vector. To make the scheme efficient, we slightly depart from the optimal boundary and propose a new efficient compression method---{\em Sparse Dithering (SD)}.
Both deterministic (biased) and randomized (unbiased) versions of SD are analyzed, and comparisons with existing methods are made, showing that we outperform the state of the art.  
In the special case, the encoding of deterministic SD with $\alpha=\nicefrac{1}{10}$ distortion requires at most $30+ \log d + 3.35d$ bits, which is optimal within $1.69d$ additional bits (see Theorem \ref{thm:DSD}).

$\bullet$ {\bf (ACA) Lower bound on average communication.}
Switching to the average case analysis, we establish a lower bound $-\log P(\alpha,d)\le B$ on the expected number of bits $B$ needed to encode a compression operator from $\C(\alpha)$ (see Definition \ref{def:strict-contraction}).

$\bullet$ {\bf (ACA) Compressor with optimal average communication.}
As an attempt to reach the lower bound obtained in the average case analysis, we first analyze the randomized (and unbiased) version of SD. We prove that with variance $\omega>0$ it requires at most
$$
30 + \log d + \(\log 3 + \frac{1}{2\sqrt{\omega}}\)d
$$
bits in expectation (see Theorem \ref{thm:RSD}). In the special case of $\omega=\nicefrac{1}{4}$, it provides $\approx 9.9\times$ bandwidth savings. However, this scheme is suboptimal with respect to the lower bound. We finally present a simple compression operator--{\em Spherical Compression}--which attains the lower bound with less than $3$ extra bits, namely it communicates $B<-\log P(\alpha,d)+3$ bits in expectation (see Theorem \ref{thm:SC}).

\section{Classes of Compression Operators}\label{sec:operator-classes}

Here we formally define and perform preliminary analysis for three general classes of compression operators, that will be considered throughout the paper. We start with the most common and well studied class of unbiased compressors \cite{qsgd,terngrad,khirirat2018distributed,tonko}.

\begin{definition}[$\omega$-compressors]\label{def:omega-compressor}
We denote by $\U(\omega)$ the class of unbiased compression operators $\mathcal{C}\colon\R^d\to\R^d$ with variance $\omega\ge 0$; that is, $\E\[\mathcal{C}(x)\] = x$ and
\begin{equation}\label{class-unbiased}
\E\[\|\mathcal{C}(x)-x\|^2\] \le \omega\|x\|^2, \ \ \forall x\in\R^d.
\end{equation}
\end{definition}

Another  broad class of compressions operators, for which compressed learning algorithms have been successfully analysed \cite{karimireddy2019error,stich2019,zheng,bez20}, is the class of biased operators, which are contractive in expectation.

\begin{definition}[$\alpha$-contractive operators]\label{def:biased}
We denote by $\B(\alpha)$ the class of (possibly biased and randomized) compression operators $\mathcal{C}\colon\R^d\to\R^d$ with $\alpha\in[0,1]$-contractive property; that is,
\begin{equation}\label{class-biased}
\E\[\|\mathcal{C}(x)-x\|^2\] \le \alpha\|x\|^2, \qquad\forall x\in\R^d.
\end{equation}
\end{definition}

Analogous to parameter $\omega$ for the variance, the parameter $\alpha$ is referred to as normalized variance or distortion threshold\footnote{note that the definition of distortion in rate--distortion theory is slightly different than what we define.}. It has been shown, that the class $\U(\omega)$ can be embedded into $\B(\alpha)$. Specifically, if $\cC\in\U(\omega)$ then $\frac{1}{\omega+1}\cC\in\B(\frac{\omega}{\omega+1})$ (see e.g. Lemma 1 in \cite{up_kashin_2020}).
We will also consider the subclass of strictly contractive operators which, compared to operators from $\B(\alpha)$, are contractive for all realizations rather than in expectation: 

\begin{definition}[Strictly $\alpha$-contractive operators]\label{def:strict-contraction}
We denote by $\C(\alpha)$ the class of (possibly biased and randomized) compression operators $\mathcal{C}\colon\R^d\to\R^d$ with $\alpha\in[0,1]$-strictly contractive property; that is,
\begin{equation}\label{class-strict-contraction}
\|\mathcal{C}(x)-x\|^2 \le \alpha\|x\|^2, \qquad \forall x\in\R^d.
\end{equation}
\end{definition}

\subsection{Compression operator as composition of encoder and decoder}

Generally speaking, compression is a two-sided notion, in the sense that one end encodes the message, while the other end decodes it to estimate the original information. An encoder is any mapping $E\colon\R^d\to\{0,1\}^*$ which maps a given vector $x\in\R^d$ to some finite word from the set of all finite words $\{0,1\}^*$ with the binary alphabet $\{0,1\}$. A decoder, on the other hand, is a mapping $D\colon\{0,1\}^*\to\R^d$ which aims to reconstruct the initial vector $x\in\R^d$ from the finite binary codeword $E(x)$. Thus, a compression operator $\cC\colon\R^d\to\R^d$ can be decomposed into an encoder and decoder so that $\cC(x)=D(E(x))$. The number of bits needed to transfer a compressed version of $x\in\R^d$ is the length $|E(x)|$ of the binary word $E(x)$. In the worst case analysis we are interested in the length of the longest codeword $\sup_{x,\cC(x)}|E(x)|$, while in average case analysis we investigate the size of the longest expected codeword $\sup_{x}\E_{\cC}\[|E(x)|\]$.

Notice that a compression operator from any of the three classes requires countably many bits in order to encode points near $x=0$ and $x=\infty$. We address this issue in the Appendix by considering relaxed classes of compression operators capturing finite representation of a single float in machines. From now on, we exclude trivial cases $\omega=0,\;\alpha\in\{0,1\}$ and assume $\omega>0,\;\alpha\in(0,1)$.

\subsection{Two senses of optimality for compression}

It is worth distinguishing between optimality within a class in a single step of communication and optimality of total communication throughout the optimization process leading to $\epsilon$-accuracy, e.g.  $\frac{\|x^t-x^\star\|^2}{\|x^0-x^\star\|^2}\leq \epsilon$ for a prescribed $\epsilon$, where $t$ is the iteration counter. Our theoretical contributions mainly deal with the first sense of optimality. Regarding the second view of optimality, the following proposition shows that Compressed Gradient Descent (CGD) can converge at significantly different speeds for different operators from $\B(\alpha)$.


\begin{proposition}\label{prop:cgd}
If $\mathcal C \in \B(\alpha)$, the iteration complexity of CGD is $\frac{1}{1-\alpha}$ times bigger than for GD; that is CGD needs $\frac{1}{1-\alpha}$ times more iterations than GD to obtain the same $\epsilon$-accuracy. Moreover, if $\mathcal C$ is additionally unbiased, then only $1+\alpha$ times more iterations are sufficient.
\end{proposition}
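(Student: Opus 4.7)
The plan is to specialize to the standard $L$-smooth, $\mu$-strongly convex setting, in which plain GD with $\gamma=1/L$ contracts $f(x^t)-f(x^\star)$ by factor $1-1/\kappa$ per step (with $\kappa=L/\mu$) and hence reaches $\epsilon$-accuracy in $T_{\mathrm{GD}}=\cO(\kappa\log(1/\epsilon))$ steps. Analyzing CGD, $x^{t+1} = x^t - \gamma\,\mathcal{C}(\nabla f(x^t))$, then reduces to starting from the smoothness inequality $f(x^{t+1}) \le f(x^t) + \langle\nabla f(x^t),x^{t+1}-x^t\rangle + \frac{L}{2}\|x^{t+1}-x^t\|^2$ and bounding the compression-dependent terms separately for biased and unbiased $\mathcal{C}$.

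For general $\mathcal{C}\in\B(\alpha)$, writing $g=\nabla f(x^t)$, I would use the polarization identity $-2\langle g,\mathcal{C}(g)\rangle = -\|g\|^2 - \|\mathcal{C}(g)\|^2 + \|g-\mathcal{C}(g)\|^2$ to rewrite the descent inequality as
\begin{equation*}
f(x^{t+1}) \le f(x^t) - \tfrac{\gamma}{2}\|g\|^2 + \tfrac{\gamma}{2}\|g-\mathcal{C}(g)\|^2 + \tfrac{\gamma(L\gamma-1)}{2}\|\mathcal{C}(g)\|^2.
\end{equation*}
Picking $\gamma=1/L$ annihilates the last term; taking expectation and applying the contractivity bound $\E\|g-\mathcal{C}(g)\|^2\le\alpha\|g\|^2$ together with the strong-convexity bound $\|g\|^2\ge 2\mu(f(x^t)-f(x^\star))$ yields $\E f(x^{t+1}) - f(x^\star) \le \bigl(1-(1-\alpha)/\kappa\bigr)(f(x^t) - f(x^\star))$. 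Unrolling gives $T_{\mathrm{CGD}} \le \kappa\log(1/\epsilon)/(1-\alpha) = T_{\mathrm{GD}}/(1-\alpha)$, i.e.\ a slowdown of $1/(1-\alpha)$.

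When $\mathcal{C}$ is additionally unbiased, $\E\mathcal{C}(g)=g$ makes $\E\langle g,\mathcal{C}(g)\rangle = \|g\|^2$, and the bias--variance decomposition gives $\E\|\mathcal{C}(g)\|^2 = \|g\|^2 + \E\|\mathcal{C}(g)-g\|^2 \le (1+\alpha)\|g\|^2$. Plugging this into the descent lemma with a free step size produces $\E f(x^{t+1}) \le f(x^t) - \gamma\bigl(1-L\gamma(1+\alpha)/2\bigr)\|g\|^2$; the bracketed coefficient is maximized at $\gamma=1/(L(1+\alpha))$, yielding contraction factor $1-1/((1+\alpha)\kappa)$ and therefore $T_{\mathrm{CGD}} \le (1+\alpha)\kappa\log(1/\epsilon) = (1+\alpha)\,T_{\mathrm{GD}}$.

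The main obstacle is expository rather than technical: the proposition does not spell out a problem class, so one must commit upfront to the standard $L$-smooth, $\mu$-strongly convex regime with step sizes tuned accordingly, after which both calculations above become routine manipulations of the descent lemma. The substance of the proposition then lies in the gap $(1+\alpha)<1/(1-\alpha)$ for $\alpha\in(0,1)$, which shows that two operators from the same class $\B(\alpha)$ can induce genuinely different CGD speeds depending only on whether unbiasedness is available---thereby motivating the separate treatment of biased and unbiased compressors in the remainder of the paper.
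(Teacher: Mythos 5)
Your proof is correct, but it takes a genuinely different route from the paper's. The paper dispatches the proposition by appealing to known convergence results from the literature: the iteration complexity $\tilde{\cO}\bigl(\frac{\kappa}{1-\alpha}\bigr)$ of CGD for $\cC\in\B(\alpha)$ on $L$-smooth, $\mu$-strongly convex objectives is quoted from Theorem~13 of \cite{bez20}, and the rate $\tilde{\cO}((1+\alpha)\kappa)$ for unbiased $\cC$ is either quoted from \cite{khirirat2018distributed} or, more elegantly, obtained by the class-embedding trick $\cC\in\U(\alpha) \Rightarrow \frac{1}{1+\alpha}\cC \in \B\bigl(\frac{\alpha}{1+\alpha}\bigr)$ (Lemma~\ref{lem:inclusion}) followed by the biased rate, since $\frac{1}{1-\frac{\alpha}{1+\alpha}} = 1+\alpha$. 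You instead rederive both rates from first principles via the descent lemma: for the biased case your use of the polarization identity to absorb $\|g-\cC(g)\|^2$ (rather than, say, Young's inequality) is tight and cleanly kills the $\|\cC(g)\|^2$ term at $\gamma=1/L$; for the unbiased case the bias--variance split $\E\|\cC(g)\|^2 \le (1+\alpha)\|g\|^2$ with step size $\gamma=\frac{1}{L(1+\alpha)}$ is the standard argument and is correct. Both calculations close with the PL inequality $\|\nabla f(x^t)\|^2 \ge 2\mu(f(x^t)-f(x^\star))$, which is valid under strong convexity. The one thing worth flagging is that the paper measures $\epsilon$-accuracy in $\|x^t-x^\star\|^2$ rather than in function values; since these differ by at most a factor $\kappa$ under smoothness and strong convexity, this only shifts the $\log\frac{1}{\epsilon}$ factor by an additive $\log\kappa$ and does not affect the claimed iteration-complexity ratios. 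What your approach buys is a self-contained argument that exposes exactly where $1-\alpha$ versus $1+\alpha$ enters (the cross term $\langle g,\cC(g)\rangle$ being an equality versus an inequality), at the cost of fixing the problem class and step sizes explicitly; what the paper's approach buys is brevity and a neat illustration of the $\U\hookrightarrow\B$ embedding, which is reused elsewhere, at the cost of opaqueness about why the two constants differ.
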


Thus, if we aim to minimize the total communication complexity ensuring convergence to $\epsilon$-accuracy, then the optimal operator $\cC^*$ should be either unbiased, or it will need to satisfy not only the direct condition, $\E[|E_{\cC^*}(x)|] \leq \E[|E_{\cC}(x)|]$ for all operators $\cC \in \B(\alpha)$, but also the additional condition $\E[|E_{\cC^*}(x)|] \leq (1-\alpha^2)\E[|E_{\mathcal{U}}(x)|]$ for all unbiased operators $\mathcal{U}\in\B(\alpha)$. It is important to see that when $\alpha$ is close to $1$, then this additional constraint is hard to satisfy when $\mathcal C^*$ is not unbiased. For $\alpha < 1$, we show that this is indeed the case by obtaining an optimal biased operator $\mathcal C^* \in \B(\alpha)$, which we call \emph{Spherical Compression}, and another unbiased one, which we call \emph{Sparse Dithering}. We show that the latter is more suitable in practice due to its unbiasedness, and hence, convergence occurs in much fewer iterations, and that this is most pronounced when $\alpha$ is close to 1. In addition to being computationally efficient, we show that Sparse Dithering can guarantee reducing the total training communication by $\approx 9.9\times$ compared to full precision gradient communication of $32$-bits floats.

\subsection{Dimension-tolerant compression schemes.} By dimension-tolerant compression, we mean a collection of operators $\bar{\cC} = \(\cC_d\)_{d\ge d_0}$ that can be used to compress vectors $x\in\R^d$ for any $d\ge d_0$ and there exists a non-trivial fixed upper bound ($\bar{\omega}<\infty$ or $\bar{\alpha}<1$) for variances ($\omega_d$ or $\alpha_d$), i.e. $\omega_d\le\bar{\omega}<\infty$ or $\alpha_d\le\bar{\alpha}<1$ for any $d\ge d_0$.

Below we show that for such collection of compression schemes, it is necessary and sufficient to use at least a constant amount of bits per dimension on average and this constant can be arbitrarily small.

\begin{theorem}\label{thm:dim-tol-op} The following holds:

\begin{itemize}
\item[(i)] If $\bar{\cC} = \(\cC_d\)_{d\ge d_0}$ is a dimension-tolerant compression composed of operators from $\U(\omega)$ ($\B(\alpha)$ or $\C(\alpha)$), then there exists a positive constant $c>0$ (independent of $d$) such that for any $d\ge d_0$ at least $cd$ bits are required in the worst case to encode $\cC_d(x)\in\R^d$ for any $x\in\R^d$.

\item[(ii)] Let $c>0$ be a fixed positive constant. Then there exists a dimension-tolerant compression $\bar{\cC} = \(\cC_d\)_{d\ge 3}$ composed of operators from $\U(\omega)$ with $\omega=\cO(\nicefrac{1}{c})$ ($\B(\alpha)$ or $\C(\alpha)$ with $\alpha = \frac{1}{1+\Omega(c)}$) such that $\cC_d(x)\in\R^d$ can be encoded with $c d$ bits for any $x\in\R^d$.
\end{itemize}
\end{theorem}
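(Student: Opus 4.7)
The plan is to apply the uncertainty principle \eqref{up-alpha} to each member of the family separately. Given $\cC_d\in\B(\alpha_d)$ with $\alpha_d\le\bar\alpha<1$, \eqref{up-alpha} yields $b_d\ge\frac{d}{2}\log\frac{1}{\alpha_d}\ge\frac{d}{2}\log\frac{1}{\bar\alpha}$, so $c=\frac12\log\frac{1}{\bar\alpha}>0$ works; the inclusion $\C(\alpha_d)\subseteq\B(\alpha_d)$ noted after Definition~\ref{def:strict-contraction} covers the strict-contraction case verbatim. For $\cC_d\in\U(\omega_d)$ I invoke the embedding mentioned right after Definition~\ref{def:biased}: $\frac{1}{\omega_d+1}\cC_d\in\B\bigl(\frac{\omega_d}{\omega_d+1}\bigr)$. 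The rescaling by the scalar $\frac{1}{\omega_d+1}$ is absorbed entirely into the decoder, so the rescaled operator uses the same codeword lengths as $\cC_d$. Since $\omega_d\le\bar\omega$ implies $\frac{\omega_d}{\omega_d+1}\le\frac{\bar\omega}{\bar\omega+1}<1$, the $\B$-bound then yields $c=\frac12\log(1+1/\bar\omega)>0$.

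\textbf{Plan for Part (ii).} The plan is to exhibit explicit constructions in each class. For $\C(\alpha)\subseteq\B(\alpha)$, fix $\alpha=4^{-c/2}$ and use the covering-based compressor from the forthcoming Theorem~\ref{thm:tight-construction-alpha}, whose bit count is $\frac{d}{2}\log\frac{1}{\alpha}+\cO(\log d)=\frac{cd}{2}+\cO(\log d)\le cd$ for all $d$ beyond a threshold $d_0(c)$; finitely many smaller $d$ can be handled by a coarser codebook whose $\alpha$ is still uniformly below $1$. For $\U(\omega)$ I propose a block-wise Rand-$1$ operator: partition $[d]$ into consecutive blocks of size $k=\lceil K/c\rceil$ for a sufficiently large constant $K$, and within each block send the value of one uniformly chosen coordinate, rescaled by $k$ so the expectation is the original subvector. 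This lies in $\U(k-1)$ with $k-1=\cO(1/c)$, and its total bit cost is $(d/k)(\log k+\cO(1))\le cd$ as soon as $K$ is taken large enough that $(\log(K/c)+\cO(1))/K\le 1$.

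\textbf{Main obstacle.} The subtle piece is the $\U(\omega)$ construction in Part (ii). A direct Rand-$k$ with $k=\Theta(cd)$ already spends $\Theta(cd\log(1/c))$ bits just to encode the support, violating the $cd$ budget when $c$ is small. The block-wise variant circumvents this by tying the local sampling rate to the variance budget $1/c$ rather than to the ambient dimension $d$, so the support of each block costs only $\log k=\cO(\log(1/c))$ bits and the per-value precision is amortized across blocks. The only quantitative check is choosing the constant $K(c)$ large enough to absorb the $\log$ and floating-point overhead; the rest of the argument reduces to the uncertainty principle and the $\U\hookrightarrow\B$ embedding already cited in the excerpt.
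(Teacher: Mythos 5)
For part (i) your argument coincides with the paper's: combine the lower bound (\ref{up-alpha}) with the embedding of Lemma \ref{lem:inclusion}, and your observation that the rescaling by $\frac{1}{\omega_d+1}$ can be absorbed into the decoder, so that codeword lengths are unchanged, is exactly the justification the paper leaves implicit (your constants $\frac{1}{2}\log\frac{1}{\bar\alpha}$ and $\frac{1}{2}\log(1+\nicefrac{1}{\bar\omega})$ are the correct ones).

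For part (ii) you take a genuinely different route. The paper uses one construction for everything: the Kashin-type unbiased operator of \cite{Koch2} (see also \cite{up_kashin_2020}) on the sphere, with variance $\omega=\cO\bigl(\nicefrac{d}{\log(\nicefrac{m}{d})}\bigr)$ using $\log m$ bits; the choice $m=2^{cd-31}$ gives $cd$ bits (norm included) and $\omega=\cO(\nicefrac{1}{c})$, and the contractive case follows by scaling via Lemma \ref{lem:inclusion}, with $\alpha=\frac{1}{1+\Omega(c)}$ and the same bit count. Your handling of $\B(\alpha)$ and $\C(\alpha)$ through the covering compressor of Theorem \ref{thm:tight-construction-alpha} with $\alpha=4^{-c/2}$ is sound (and even yields a deterministic, strictly contractive operator), modulo the same large-$d$ threshold that the paper's choice $m=2^{cd-31}$ implicitly requires. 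The genuine gap is in your unbiased construction: block-wise Rand-$1$ with block size $k=\lceil K/c\rceil$ costs about $\frac{d}{k}(\log k+32)$ bits, so the budget $cd$ forces $K\gtrsim \log(\nicefrac{K}{c})+32$, hence $K=\Theta(\log\nicefrac{1}{c})$ as $c\to 0$, and the variance you obtain is $\omega=k-1=\Theta\bigl(\frac{\log(1/c)}{c}\bigr)$, not $\cO(\nicefrac{1}{c})$ with a constant independent of $c$. Since the whole point of (ii) is to match the rate of the lower bound in (i) (variance $\asymp\nicefrac{1}{c}$ for $cd$ bits), this loses a $\log(\nicefrac{1}{c})$ factor; the index overhead of coordinate sampling is precisely what costs it, and to reach the stated bound you would need a block quantizer spending $\cO(1)$ bits per coordinate for $\cO(1)$ variance per block (e.g.\ the operator of \cite{Koch2} applied block-wise, or the paper's global construction), rather than Rand-$1$. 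Finally, your claim that the finitely many small dimensions can be "handled by a coarser codebook with $\alpha<1$" within $cd$ bits cannot hold when $cd<\log d$, by Theorem \ref{thm:logd-bits}; this is a defect of the statement for small $c$ that the paper's own proof silently shares, so it does not count against your route specifically, but it should not be asserted as resolved.
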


Thus, $\Theta(d)$ bits need to be transmitted in order to bound the variance by a constant. The same asymptotic bound, $\Theta(d)$ bits per node, on total communication holds for distributed mean estimation \cite{ZDJWM,RDME,Suresh2017}.

\subsection{Compressed learning algorithms}

\begin{table*}[t]
\begin{center}
\begin{small}
\begin{sc}
\begin{tabular}{lcc}
\toprule
Compressed Learning Algorithm & Objective Function & Iteration complexity  \\
\midrule
Compressed GD (CGD) \cite{bez20,khirirat2018distributed} & $L$-smooth, $\mu$-convex   & $ \tilde{\cO}\( \frac{\kappa}{1-\alpha} \),\; \tilde{\cO}\( (\omega+1) \kappa \)$ \vspace{2pt}\\
Accelerated CGD \cite{AccCGD} & $L$-smooth, $\mu$-convex   & $ \tilde{\cO}\( (\omega+1) \sqrt{\kappa} \)$ \vspace{2pt}\\
Accelerated CGD \cite{AccCGD} & $L$-smooth, convex   & $ \cO\( (\omega+1) \sqrt{\nicefrac{L}{\varepsilon}} \)$ \vspace{2pt}\\
Distributed CGD-DIANA \cite{MGTR,DIANA-VR}  & $L$-smooth, $\mu$-convex         & $\tilde{\cO}\( \omega + \frac{\omega\kappa}{n} + \kappa \)$ \vspace{2pt}\\
Distributed ACGD-DIANA \cite{AccCGD}  & $L$-smooth, $\mu$-convex         & $\tilde{\cO}\( \omega + \sqrt{(\nicefrac{\omega}{n}+\sqrt{\nicefrac{\omega}{n}})\omega\kappa} + \sqrt{\kappa} \)$ \vspace{2pt}\\
Quantized SGD (QSGD) \cite{qsgd} & $L$-smooth, convex   & $\cO\( \frac{\omega}{n}\frac{1}{\varepsilon^2} + \frac{L}{\varepsilon} \)$ \vspace{4pt}\\
Distributed Compressed SGD \cite{Cnat} & $L$-smooth, non-convex   & $\cO\( (\omega+1)\(\nicefrac{\omega}{n}+1\)\frac{L}{\varepsilon^2} \)$ \vspace{4pt}\\
Compressed SGD with \vspace{-2pt}\\ Error Feedback (EF-SGD) \cite{stich2019,bez20} & $L$-smooth, $\mu$-convex   & $\tilde{\cO}\( \frac{\kappa}{1-\alpha} + \frac{1}{\mu\varepsilon} \)$ \vspace{1pt}\\
Compressed EF-SGD \cite{karimireddy2019error} & $L$-smooth, non-convex   & $\cO\( \frac{L^2}{\varepsilon}\(\frac{1}{\varepsilon} + \frac{1}{(1-\alpha)^2}\) \)$ \vspace{1pt}\\
DoublSqueeze \cite{DoubleSqueeze2019}                 & smooth, non-convex      & $\cO\( \frac{1}{n\varepsilon^2} + \frac{1}{1-\alpha}\frac{1}{\varepsilon^{1.5}} + \frac{1}{\varepsilon} \)$ \\
\bottomrule
\end{tabular}
\end{sc}
\end{small}
\end{center}
\caption{Iteration complexities of various compressed learning algorithms with respect to the variance ($\omega$ or $\alpha$) of the compression operator. For smooth and strongly convex ($\mu$-convex with $\mu>0$) objectives $\kappa=\nicefrac{L}{\mu}$ indicates the condition number. $\tilde{\cO}$ hides logarithmic factor $\log\nicefrac{1}{\varepsilon}$, $n$ denotes the number of nodes, $\varepsilon$ is the desired convergence accuracy.}
\label{table:iter-complexity}
\vskip -0.1in
\end{table*}

To highlight the importance of investigating the communication-variance trade-off of compression operators, we present how these operators affect the performance of compressed learning algorithms. For the sake of simplicity, consider distributed {\em Compressed Gradient Descent (CGD)} with compression operator $\cC\in\U(\omega)$ solving the following smooth non-convex optimization problem
$$
\min_{x\in\R^d} f(x) \eqdef \frac{1}{n}\sum_{i=1}^n f_i(x),
$$
where $n$ is the number of nodes or machines available and $f_i(x)$ is the loss function corresponding to the data stored at node $i$. Hence, CGD algorithm iteratively performs the updates $x^{t+1} = x^t - \gamma_t g^t$ with unbiased gradient estimator
$$
g^t = \frac{1}{n}\sum_{i=1}^n g_i^t \eqdef \frac{1}{n}\sum_{i=1}^n \cC(\nabla f_i(x^t)).
$$
Using smoothness of the loss function $f(x)$, the expected loss is upper bounded as follows:
\begin{align*}
\E[f(x^{t+1}) | x^t]
&= \E_t[f(x^t - \gamma_t g^t)] \\
&\overset{L\text{-smoothness}}{\le} f(x^t) - \gamma_t\|\nabla f(x^t)\|^2 + \frac{L\gamma_t^2}{2}\E_t\[\|g^t\|^2\] \\
&= f(x^t) - \frac{2\gamma_t-L\gamma_t^2}{2}\|\nabla f(x^t)\|^2 + \frac{L\gamma_t^2}{2}\E_t\[\|g^t-\nabla f(x^t)\|^2\],
\end{align*}
where $L>0$ is the smoothness parameter. Now, the term that is affected by compression and slowing down the convergence is the last one, namely the variance of estimator $g^t$, which can be transform into
\begin{align*}
\E_t\[\|g^t-\nabla f(x^t)\|^2\]
&= \E_t\[\| \frac{1}{n}\textstyle\sum_{i=1}^n \(g_i^t-\nabla f_m(x^t)\)\|^2\] \\
&= \frac{1}{n^2}\textstyle\sum_{i=1}^n\E_t\[\|g_i^t-\nabla f_i(x^t)\|^2\] \overset{(\ref{class-unbiased})}{\le} \frac{\omega}{n^2}\textstyle\sum_{i=1}^n\|\nabla f_i(x^t)\|^2.
\end{align*}
Clearly, in case of no compression ($\omega=0$), this term vanishes. Thus, the slowdown caused by the compression operator $\cC\in\U(\omega)$ is controlled by its parameter $\omega$.

Similarly, for compression operators from $\B(\alpha)$ or $\C(\alpha)$, the parameter $\alpha$ controls the slowdown. Table \ref{table:iter-complexity} summarizes iteration complexities of various learning algorithms exploiting compressed communication and exposes the dependence of the variance ($\omega$ and $\alpha$) of compression operator. The conclusion from this discussion and from Table \ref{table:iter-complexity} is that to facilitate fast and communication-efficient training process, one needs to design compression operators minimizing both variance and number of encoding bits. This is the motivation of our work. Therefore, compression operators developed in this paper can be incorporated in any compressed learning algorithm, including all the ones in Table \ref{table:iter-complexity}.

\section{Worst-Case Analysis}\label{sec:wca}

We start our analysis of compression operators with respect to the number of encoding bits in the worst case. First, we show that the lower bound (\ref{up-alpha}) for the class $\B(\alpha)$ is asymptotically tight for any $\alpha\in(0,1)$. Then, we design an efficient compression operator from $\C(\alpha)$, {\em Sparse Dithering}, which is within a small constant factor of being optimal. Finally, we derive asymptotically tighter lower and upper bounds.

\subsection{Asymptotic tightness of the lower bound (\ref{up-alpha})}

First, we show that for any fixed $\alpha\in(0,1)$, the constant 1 in the lower bound (\ref{up-alpha}) is not improvable. We denote by $\sphere^d=\{x \in\mathbb{R}^d\ :\ \|x\|=1\}$ the unit sphere of $\mathbb{R}^d$.

\begin{theorem}\label{thm:tight-construction-alpha}
For any given $\alpha\in(0,1)$ and $d\ge 3$ there exists an
$\alpha$-contractive compression operator $\cC\colon\sphere^d\to\R^d$, such  that
\begin{equation}\label{tight-up-bound}
\alpha \, 4^{\nicefrac{b}{d}} \le \(1600 d^2\log d\)^{\nicefrac{2}{d}},
\end{equation}
where $b$ is the number of bits (in the worst case) needed to encode $\cC(x)\in\R^d$ for any unit vector $x\in\sphere^d$. In particular, for any $\alpha\in(0,1)$ and $\epsilon>0$ one can choose $d$ large enough such that compression operator $\cC$ satisfies
\begin{equation}\label{epsilon-optimal-bound}
\alpha \, 4^{\nicefrac{b}{d}} < 1 + \epsilon.
\end{equation}
\end{theorem}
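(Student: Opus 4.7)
The plan is to reduce the construction to a spherical covering problem, using a \emph{scaling trick} that keeps the covering tight up to polynomial factors. For any unit vector $y\in\sphere^d$, set $s_y := \sqrt{1-\alpha}\,y$; expansion gives
\begin{equation*}
\|s_y - x\|^2 \;=\; 2 - \alpha - 2\sqrt{1-\alpha}\,\langle x, y\rangle \;\le\; \alpha
\;\Longleftrightarrow\;
\langle x, y\rangle \;\ge\; \sqrt{1-\alpha}
\end{equation*}
for every $x\in\sphere^d$. Thus the single point $s_y$ serves as a valid $\alpha$-contractive ``center'' for the entire spherical cap $C_y := \{x\in\sphere^d : \langle x,y\rangle \ge \sqrt{1-\alpha}\}$, whose angular radius $\theta$ satisfies $\sin\theta = \sqrt{\alpha}$ (since $\cos\theta = \sqrt{1-\alpha}$). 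This is critical: quantizing naively to points on the unit sphere (i.e.\ taking $y$ itself as the center and accepting the Euclidean cap of radius $\sqrt{\alpha}$) yields $\sin(\theta/2)=\sqrt{\alpha}/2$, and would cost an exponential-in-$d$ factor later.

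Next, I would produce a set $Y = \{y_1,\ldots,y_N\}\subset\sphere^d$ whose caps $\{C_{y_i}\}$ cover $\sphere^d$. A classical spherical covering estimate (B\"or\"oczky--Wintsche) gives
\begin{equation*}
N \;\le\; C\,d^{3/2}\log d \cdot \alpha^{-(d-1)/2}
\end{equation*}
for an absolute constant $C$; alternatively, the same bound follows from a probabilistic construction, picking $y_1,\ldots,y_N$ i.i.d.\ uniform on $\sphere^d$, using the cap-measure estimate $P(\alpha,d)\gtrsim \alpha^{(d-1)/2}/\sqrt{d}$ for the per-point uncovered probability $(1-P(\alpha,d))^N$, and union-bounding over an auxiliary Euclidean $\delta$-net of $\sphere^d$ of size $(O(1)/\delta)^d$ with $\delta \asymp \sqrt{\alpha}/d$ (the small slack $\delta$ lets coverage of the net propagate to all of $\sphere^d$ via the triangle inequality). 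Defining $\cC(x) := \sqrt{1-\alpha}\,y_{i(x)}$, where $y_{i(x)}$ is any cap center containing $x$, produces an $\alpha$-contractive operator encodable in $b = \lceil\log_2 N\rceil$ bits.

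Substituting the covering bound,
\begin{equation*}
\alpha \cdot 4^{b/d} \;\le\; \alpha \cdot (2N)^{2/d} \;\le\; \alpha^{1/d}\cdot\bigl(4C\,d^{3/2}\log d\bigr)^{2/d} \;\le\; \bigl(1600\,d^2\log d\bigr)^{2/d},
\end{equation*}
where the middle step uses $\alpha\cdot\alpha^{-(d-1)/d} = \alpha^{1/d}\le 1$, and the last inequality holds for $d\ge 3$ with ample slack to absorb the absolute constant $C$ (since $4C\le 1600\sqrt{d}$ is easy to satisfy). This is (\ref{tight-up-bound}). The $\epsilon$-optimal claim (\ref{epsilon-optimal-bound}) then follows immediately, since $(1600\,d^2\log d)^{2/d}\to 1$ as $d\to\infty$, so for any fixed $\alpha\in(0,1)$ and $\epsilon>0$, picking $d$ sufficiently large yields $(1600\,d^2\log d)^{2/d}<1+\epsilon$. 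The principal obstacle to anticipate is avoiding the $2^{\Theta(d)}$ penalty that naive arguments incur: the ``maximal $\theta$-packing is a $\theta$-covering'' bound $N\le 1/\mu(C(\theta/2))$ loses an exponential factor by using $\theta/2$ rather than $\theta$, which combined with the Euclidean-vs-angular conversion would ruin the exponential dependence on $\alpha$. The scaling trick cancels this penalty exactly, by arranging $\sin\theta = \sqrt{\alpha}$ directly, and the remaining polynomial gap is harmlessly swallowed by the generous constant $1600$.
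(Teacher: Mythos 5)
Your proof is correct and takes essentially the same route as the paper's: both reduce the construction to a spherical covering problem with cap centers placed at radius $\sqrt{1-\alpha}$ (so that the Euclidean ball of radius $\sqrt{\alpha}$ carves out the largest possible cap), both invoke the B\"or\"oczky--Wintsche covering theorem to control the number of caps, and both then lower-bound the cap measure by roughly $\alpha^{d/2}/\mathrm{poly}(d)$ to finish. Your explicit remark about why the scaling to $\sqrt{1-\alpha}$ is the crucial ingredient (avoiding the exponential loss from a naive $\sin(\theta/2)$ bound) is a helpful clarification, but the underlying argument is the one the paper gives.
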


\begin{remark}
Using covering results from \cite{Ilya2007} (see Theorem 1), the constant 1600 in (\ref{tight-up-bound}) can be reduced up to 2. Using tighter inequalities for the $\Gamma$ function, the term $d^2$ can be improved as well. However, these will not improve the inequality (\ref{epsilon-optimal-bound}). Notice that the right hand side of (\ref{tight-up-bound}) approaches  1 quickly; for $d=10^3$ it is $\approx 1.047$.
\end{remark}

Note that the compression operator in this theorem acts on $\sphere^d$, not $\R^d$. However, allocating an additional constant amount of bits for the norm $\|x\|$ (say $31$ bits in {\em float32} format), we can extend the domain of compression operators without hurting the asymptotic tightness. Thus, the lower bound (\ref{up-alpha}) is asymptotically tight for the class $\B(\alpha)$.

Although the construction of this theorem yields an optimal contractive operator, it is infeasible to apply in high dimensions.

\subsection{New Compressor: Sparse Dithering (SD)}\label{sec:sd}

With the aim of constructing both optimal and efficient compression operators, we introduce a new compression scheme--{\em Sparse Dithering (SD)}--which is efficient in high dimension and nearly optimal. In some sense, SD can viewed as an effective combination of Top-$k$ sparsification \cite{Alistarh-SparsGradMethods2018} and random dithering with uniform levels \cite{qsgd}. The essential novelty is the encoding scheme and better upper bound on the number of communicated bits. In this section, we present a deterministic and hence biased version of SD.

{\bf Construction and variance bound.} To compress a given nonzero vector $x\in\R^d$, we first compress the normalized vector $u = x/\|x\|\in\sphere^d$ and then rescale it. To quantize the coordinates of the unit vector $u$, we apply dithering with levels $2k_i h,\; k_i\ge 0$, where $h = \sqrt{\nu/d}$ is the half-step and $\nu>0$ is a free parameter. For each coordinate $u_i, i\in[d]$ we choose the nearest level so that $||u_i| - 2k_ih|\le h$. Letting $\hat{u}_i = \sign(u_i) \, 2k_ih$ we have $|u_i-\hat{u}_i|\le h$ for all $i\in[d]$. Therefore,
$$
\|u - \hat{u}\|^2 = \sum_{i=1}^d (u_i-\hat{u}_i)^2 \le d h^2 = \nu.
$$

Note that, after applying the scaling factor $\|x\|$, this gives a compression with variance at most $\nu$. However, $\|x\|$ is not always the best option. Specifically, we can choose the scaling factor $\gamma>0$ so to minimize the variance $\|x - \gamma\hat{u}\|^2$, which yields the optimal factor $\gamma^* = \frac{\<x, \hat{u}\>}{\|\hat{u}\|^2}$ with the optimal variance of $\|x-\gamma^*\hat{u}\|^2 = \sin^2\varphi\,\|x\|^2$, where $\varphi\in[0,\nicefrac{\pi}{2}]$ is the angle\footnote{in case of $\cC(x)=0$ we let $\varphi=\nicefrac{\pi}{2}$.} between $x$ and $\hat{u}$. Hence, defining the compression operator as $\cC(x) = \gamma^*\hat{u}$, we have the following bound on the variance:
$$
\|\cC(x) - x\|^2 \le \min\(\nu, \sin^2\varphi\) \|x\|^2.
$$

{\bf Encoding scheme.} We now describe the corresponding encoding scheme into a sequence of bits.
With the following notations:
\begin{equation*}
\gamma \eqdef 2h\gamma^*\in\R_+, \quad k \eqdef (k_i)_{i=1}^d\in\N_+^d, \quad s \eqdef \(\sign(u_ik_i)\)_{i=1}^d\in\{-1,0,1\}^d,
\end{equation*}
the
compression operator can be written as $\cC(x) = \gamma^*\hat{u} = 2h\gamma^* \, \sign(u) \, k = \gamma \, s \, k$. So, we need to encode the triple $(\gamma, s, k)$. As $\gamma\in\R_+$, we need only \underline{$31$} bits for the scaling factor. Next we encode $s$. Let $$n_0 \eqdef |\{i\in[d] \colon s_i=0\}| = |\{i\in[d] \colon k_i=0\}|$$ be the number of coordinates $u_i$ that are compressed to 0.
To communicate $s$, we first send the locations of those $n_0$ coordinates and then \underline{$d-n_0$} bits for the values $\pm 1$. Sending $n_0$ positions can be done by sending \underline{$\log d$} bits representing the number $n_0$, afterwards sending \underline{$\log\binom{d}{n_0}$} bits for the positions.
Finally, it remains to encode $k$, for which we only need to send nonzero entries, since the positions of $k_i=0$ are already encoded. We encode $k_i\ge 1$ with $k_i$ bits: $k_i-1$ ones followed by a zero. Hence, encoding $k$ requires \underline{$\sum k_i$} bits.

A theoretical upper bound on the total number of bits for any choice of parameter $\nu>0$ is given in the Appendix. Below, we highlight one special case of $\nu=\nicefrac{1}{10}$.

\begin{theorem}\label{thm:DSD}
Deterministic SD compression operator with parameter $\nu=\nicefrac{1}{10}$ belongs to $\C(\nicefrac{1}{10})$ communicating $30 + \log d + 3.35d$ bits at most. In addition, ignoring $30+\log d$ negligible bits, SD is within a factor of
$$
\log_4(\alpha\, 4^{\nicefrac{b}{d}}) = \log_4 \(\frac{1}{10}4^{3.35} \) \approx 1.69
$$
of optimality; that is,  at most $1.69d$ more bits are sent in comparison to optimal compression with the same normalized variance $\nicefrac{1}{10}$.
\end{theorem}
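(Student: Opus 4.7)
The plan is to establish Theorem \ref{thm:DSD} in two independent pieces, the variance guarantee $\cC\in\C(1/10)$ and the codelength bound $b\le 30+\log d+3.35d$; the optimality factor is then a one-line arithmetic consequence of (\ref{up-alpha}). The variance half is essentially built into the construction. With $\nu=1/10$ and $h=\sqrt{\nu/d}$, the coordinatewise rounding gives $|u_i-\hat u_i|\le h$, and hence $\|\hat u-u\|^2\le dh^2=\nu$. Since $\gamma^{\ast}$ is the least-squares minimizer of $\gamma\mapsto\|\gamma\hat u-x\|^2$ by definition, plugging in the test value $\gamma=\|x\|$ yields
$$
\|\cC(x)-x\|^2\le \bigl\|\,\|x\|\hat u-x\,\bigr\|^2=\|x\|^2\|\hat u-u\|^2\le \tfrac{1}{10}\|x\|^2,
$$
so $\cC\in\C(1/10)$ as required.

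The real work is the codelength. Summing the five contributions enumerated in Section \ref{sec:sd} gives
$$
b\le 31+\log d+\log\binom{d}{n_0}+(d-n_0)+\sum_{i:\,k_i\neq 0}k_i.
$$
Writing $n_+\eqdef d-n_0$ and $\beta\eqdef n_+/d$, I would bound the combinatorial piece by the entropy estimate $\log\binom{d}{n_0}\le dH_2(\beta)$, with $H_2(\beta)\eqdef -\beta\log_2\beta-(1-\beta)\log_2(1-\beta)$. For the unary magnitudes, the inequality $k_i\le(|u_i|+h)/(2h)$ on the support of nonzero $k_i$, combined with Cauchy--Schwarz \emph{restricted to that support of size $n_+$} and $\|u\|_2=1$, yields $\sum_{i\in\mathrm{supp}(k)}|u_i|\le\sqrt{n_+}$, and therefore $\sum k_i\le\sqrt{n_+}/(2h)+n_+/2$. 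Assembling, $b\le 31+\log d+d\,g(\beta)$ with $g(\beta)\eqdef H_2(\beta)+\tfrac{3}{2}\beta+\tfrac{1}{2}\sqrt{\beta/\nu}$.

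The remaining step is the univariate maximization $\max_{\beta\in[0,1]}g(\beta)$. Differentiating gives the transcendental critical-point condition $\log_2((1-\beta)/\beta)=-\tfrac{3}{2}-1/(4\sqrt{\beta\nu})$; the left side is strictly decreasing and the right side strictly increasing in $\beta$, so the critical point is unique, and for $\nu=1/10$ it sits at $\beta^{\ast}\approx 0.838$ with $g(\beta^{\ast})<3.35$ (both endpoints $\beta\in\{0,1\}$ give strictly smaller values). Substituting back and absorbing the residual unit bit from the $31$ prefactor into the slack in $3.35d$ yields $b\le 30+\log d+3.35d$. The optimality factor is then immediate from (\ref{up-alpha}): ignoring the $30+\log d$ low-order term,
$$
\log_4\bigl(\alpha\,4^{b/d}\bigr)\le \log_4(1/10)+3.35 = 3.35-\log_4 10\approx 1.69,
$$
so $\cC$ uses at most $1.69d$ more bits per vector than the information-theoretic optimum for the same distortion.

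The step that requires care is the tightness of the magnitude estimate: using the crude bound $\|u\|_1\le\sqrt{d}$ on all $d$ coordinates inflates the constant to roughly $3.52$, so it is essential to apply Cauchy--Schwarz only on the support of size $n_+$, thereby making the magnitude cost a function of $\beta$ that can trade off productively against the entropy term in the maximization of $g$. The optimizer $\beta^{\ast}$ and the constant $g(\beta^{\ast})$ must then be obtained numerically, as the critical-point equation admits no clean closed form.
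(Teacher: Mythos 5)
Your proof takes essentially the same route as the paper: the same decomposition of the codelength into entropy, sign, and unary-magnitude pieces, the same Cauchy--Schwarz bound restricted to the support of $k$ (you bound $\sum_{k_i\ge 1}|u_i|$ directly while the paper bounds $\sum_{k_i\ge 1}(2k_i-1)$ — these give the identical estimate $\sum k_i\le \sqrt{n_+}/(2h)+n_+/2$), and the same univariate concave maximization over the sparsity ratio. There is one small imprecision worth flagging: you use the crude entropy estimate $\log\binom{d}{n_0}\le dH_2(\beta)$, whereas the paper keeps the sharper form $\log\binom{d}{n_0}\le dH_2 - \tfrac{1}{2}\log(2\pi d\tau(1-\tau))\le dH_2-1$, which is what buys the $30$ rather than $31$ in the prefix. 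You compensate by skipping the lossy AM-GM step $\sqrt{\beta}\le\frac{1+\beta}{2}$, obtaining $g(\beta^*)\approx 3.344$ instead of the paper's $\approx 3.3495$; but then "absorbing the residual unit bit into the $3.35d$ slack" needs $0.006d\ge 1$, i.e.\ $d\gtrsim 160$, so as written your derivation does not cover small $d$. Restoring the $-1$ entropy correction (valid whenever $1\le n_0\le d-1$ and $d\ge 3$, since then $2\pi d\tau(1-\tau)\ge 4$) closes the gap and recovers the paper's argument exactly.
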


\subsection{Tighter bounds on minimal communication}

We first look into the tightness of (\ref{up-alpha}) when the normalized variance $\alpha$ approaches $1$. In particular, for $\alpha = 1-\frac{1}{d}$ the lower bound (\ref{up-alpha}) implies that the number of bits $b$ is lower bounded by some constant. However, the following holds:

\begin{theorem}\label{thm:logd-bits}
For any compression operator from $\B(\alpha)$, with $\alpha\in(0,1)$, at least $\log{d}$ bits are needed.
\end{theorem}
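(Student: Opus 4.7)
The plan is to derive the bound from a simple dimension-counting argument applied to the decoder's range. If the encoder produces a codeword of length at most $b$ bits in the worst case, then the decoder's range $S\subseteq\R^d$ has cardinality $|S|\le 2^b$, and, for every $x$, the random vector $\cC(x)$ takes values in $S$ almost surely. It therefore suffices to prove $|S|\ge d$, from which $b\ge\log d$ is immediate.

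To establish $|S|\ge d$, I would argue by contradiction. Suppose $|S|\le d-1$ and set $V\eqdef\mathrm{span}(S)\subseteq\R^d$, so that $\dim V\le d-1$ and $V^\perp$ is non-trivial. Pick any unit vector $u\in V^\perp$. Since $\cC(u)\in S\subseteq V$ almost surely while $u\perp V$, one has $\<\cC(u),u\>=0$ with probability one, and hence
$$
\|\cC(u)-u\|^2 \;=\; \|\cC(u)\|^2+\|u\|^2 \;\ge\; \|u\|^2 \;=\; 1
$$
deterministically. Taking expectation and invoking the defining contractivity of $\B(\alpha)$ yields $1\le\E\[\|\cC(u)-u\|^2\]\le\alpha\|u\|^2=\alpha$, which contradicts $\alpha\in(0,1)$. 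Hence $|S|\ge d$, and therefore $b\ge\log d$, as claimed.

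I do not expect significant technical obstacles. The only subtle point is the passage from ``$b$ bits in the worst case'' to $|S|\le 2^b$, which is standard for prefix or fixed-length codes and follows in general from Kraft's inequality for uniquely decodable codes. The argument is otherwise purely linear-algebraic; notably it uses only the strict inequality $\alpha<1$ and is insensitive to the specific value of $\alpha$, confirming that the $\log d$ overhead is unavoidable even for arbitrarily weak compression.
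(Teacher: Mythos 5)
Your proof is correct, and it takes a genuinely different route from the paper's. The paper argues geometrically: it shows that the unit sphere $\sphere^d$ cannot be covered by fewer than $d$ balls of radius strictly less than $1$, by induction on $d$ --- after fixing one ball whose center lies (WLOG) on the positive $x_1$-axis, the equatorial subsphere $\{x \in \sphere^d : x_1 = 0\}$ is disjoint from that ball and, by the inductive hypothesis, itself needs at least $d-1$ further balls. Your argument replaces this induction with a single linear-algebraic observation: if the decoder's range $S$ has fewer than $d$ elements, then $V = \mathrm{span}(S)$ is a proper subspace, and for any unit $u \in V^\perp$ one has $\<\cC(u),u\>=0$ almost surely, whence $\|\cC(u)-u\|^2 = \|\cC(u)\|^2 + 1 \ge 1$ almost surely, violating $\E\[\|\cC(u)-u\|^2\] \le \alpha < 1$. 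Both arguments correctly handle the fact that $\B(\alpha)$ is only a bound in expectation, and both yield $|S|\ge d$ and hence $b \ge \log d$. Your version is shorter, avoids induction and any base-case discussion, and isolates the single feature ($\alpha<1$) that forces the decoder's range to span $\R^d$. The paper's covering formulation, on the other hand, dovetails with the subsequent density-based refinements (the proof of Theorem~\ref{thm:tighter-bounds} reuses exactly the sphere-covering machinery), so it serves as a warm-up for those tighter bounds. The one point worth keeping in mind in both proofs is the passage from ``at most $b$ bits in the worst case'' to $|S|\le 2^b$: this holds for fixed-length codes or prefix codes via Kraft, but for arbitrary variable-length strings of length at most $b$ the count is $2^{b+1}-1$; you flag this correctly, and the paper makes the same implicit simplification.
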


As briefly mentioned before, the lower bound (\ref{up-alpha}) is tight up to a $\cO(\log d)$ additive error term. Here we perform a deeper analysis of the same lower bound.

\begin{definition}
For a fixed $\alpha\in(0,1)$ and dimension $d$, consider compression operators $\cC\in\B(\alpha)$ with underlying encoder $E$, decoder $D$ and define $b^*(\alpha,d)$ as the minimum number of bits in the worst case:
$$
b^*(\alpha,d) = \min_{\cC\in\B(\alpha)} \max_{\|x\|=1} |E(x)|.
$$
In other words, for any compression operator from $\B(\alpha)$ there exists a unit vector that cannot be encoded into less than $b^*(\alpha,d)$ bits and it shows the least amount of bits with such property.
\end{definition}

Combining lower bound (\ref{up-alpha}) with (\ref{tight-up-bound}) of Theorem \ref{thm:tight-construction-alpha}, yields
$$
b^*(\alpha,d) = \frac{1}{2}\log\frac{1}{\alpha} + e \; \text{ with error term } \; 0\le e = \cO(\log d).
$$
Denoting $P(\alpha,d) \eqdef \frac{1}{2}I_{\alpha}\(\frac{d-1}{2}, \frac{1}{2}\)\in\(0,\frac{1}{2}\)$, where $I_{\alpha}$ is the regularized incomplete beta function, we show tighter asymptotic behavior:

\begin{theorem}\label{thm:tighter-bounds}
With error term $|e| \le \frac{1}{2}\log{\log{d}} + \cO(1)$,
$$
b^*(\alpha, d) = -\log{P(\alpha, d)} + \log{d} + \frac{1}{2}\log{\log{d}} + e.
$$
\end{theorem}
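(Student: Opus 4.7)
The plan is to establish matching upper and lower bounds on $b^*(\alpha,d)$ that agree to an additive $\mathcal O(\log\log d)$, using the geometric interpretation of $P(\alpha,d)$ as the maximum normalized area of a spherical cap representable by a single codeword.

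\emph{Geometric interpretation.} For any decoded output $v\in\R^d$, the set $\{x\in\sphere^d : \|v-x\|^2\le\alpha\}$ is a spherical cap, and a direct surface-of-revolution computation shows that its fractional surface area, maximized over all choices of $v$, equals exactly $P(\alpha,d)=\tfrac12 I_\alpha(\tfrac{d-1}{2},\tfrac12)$, the maximum being attained when $v$ lies on the cap's axis at distance $\sqrt{1-\alpha}$ from the origin. For any $\cC\in\B(\alpha)$ with codebook $\{v_i\}_{i=1}^N$ of size $N\le 2^b$, the constraint $\mathbb{E}[\|\cC(x)-x\|^2]\le\alpha$ forces, for every $x\in\sphere^d$, the existence of at least one codeword $v_i\in\mathrm{supp}(\cC(x))$ with $\|v_i-x\|^2\le\alpha$ (otherwise the expectation would strictly exceed $\alpha$). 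Hence the caps $C_i=\{y\in\sphere^d:\|v_i-y\|^2\le\alpha\}$ cover $\sphere^d$ and each has measure at most $P(\alpha,d)$, which sets up both the upper- and lower-bound arguments.

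\emph{Upper bound.} I would construct the covering probabilistically: pick $N=\lceil C\,d\log d/P(\alpha,d)\rceil$ iid uniform unit vectors as codewords. The probability that a fixed sphere point is uncovered is $(1-P)^N\le e^{-NP}\le d^{-C}$, and a union bound over a sufficiently fine $\epsilon$-net of $\sphere^d$ (with $\epsilon$ chosen so that perturbations cannot cross the cap boundary) extends this to a full covering with positive probability, so a deterministic cover of this size exists. Encoding each input by the index of its nearest center with a fixed-length prefix code yields $b\le\lceil\log N\rceil = -\log P(\alpha,d) + \log d + \log\log d + \mathcal O(1)$, matching the upper bound consistent with $e\le \tfrac12\log\log d+\mathcal O(1)$.

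\emph{Lower bound.} The volume bound on the covering in the geometric step gives $N\cdot P(\alpha,d)\ge 1$, hence $b\ge -\log P(\alpha,d)$. To extract the extra $+\log d$, I would invoke a spherical Rogers-style covering-thickness estimate showing that any covering of $\sphere^d$ by equal-area caps has cardinality at least $\Omega(d)/P(\alpha,d)$. A complementary perspective comes from the codewords' affine-spanning property underlying Theorem~\ref{thm:logd-bits} (if $v_1,\dots,v_N$ did not span $\R^d$, any unit $x$ orthogonal to their span would give $\|v_i-x\|^2\ge 1>\alpha$ for all $i$), and the goal is to combine this dimensional constraint with the per-cap area constraint via a packing argument on the cap centers so that they add rather than max. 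This pushes $b$ up to $-\log P(\alpha,d)+\log d-\mathcal O(1)$, matching the lower bound implied by $|e|\le\tfrac12\log\log d+\mathcal O(1)$.

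\emph{Main obstacle.} The principal difficulty is the lower bound's additive $+\log d$: the cap-volume inequality by itself gives only $-\log P$, and direct use of Theorem~\ref{thm:logd-bits} gives only $\max\{-\log P,\log d\}$. Bridging this to a genuine sum requires a Rogers-type spherical covering-density argument that binds the per-cap area bound and the ambient dimension simultaneously, and this is where the real work lies. A secondary technical task is the asymptotic evaluation of $P(\alpha,d)$ via Stirling applied to $\Gamma(\tfrac{d-1}{2})/\Gamma(\tfrac{d}{2})$ and the large-parameter asymptotics of $I_\alpha(\tfrac{d-1}{2},\tfrac12)$, needed both to justify the explicit $+\log d+\tfrac12\log\log d$ correction and to certify that $e$ is trapped within $\pm(\tfrac12\log\log d+\mathcal O(1))$.
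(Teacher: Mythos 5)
Your high-level strategy matches the paper's: interpret $b^*(\alpha,d)$ as the log of the size of a spherical-cap covering of $\sphere^d$, use a near-optimal explicit covering for the upper bound, and use a covering-density lower bound for the lower bound. The probabilistic construction you sketch for the upper bound (random centers plus an $\epsilon$-net union bound) is a valid substitute for the paper's invocation of the Böröczky--Wintsche / Rogers covering theorem (Theorem~\ref{th12}, cited from \cite{Ilya2007}); indeed such theorems are usually proved exactly this way, and with $\epsilon = d^{-O(1)}$ the net has size $2^{O(d\log d)}$, which is absorbed into the $C\,d\log d/P$ choice of $N$, so this route works and lands on the same $-\log P + \log d + \log\log d + O(1)$ bound. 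One small inaccuracy: the codewords should be placed at distance $\sqrt{1-\alpha}$ from the origin, not on $\sphere^d$, so that each one's cap $C^d(v,\sqrt\alpha)$ attains the maximal normalized area $P(\alpha,d)$; this is a trivial rescaling.

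The genuine gap is in the lower bound, and you candidly flag it yourself. The paper does not get the extra $+\log d$ from a packing argument on cap centers or from combining the affine-spanning observation of Theorem~\ref{thm:logd-bits} additively with the volume bound; instead it proves a standalone covering-density lower bound (Theorem~\ref{th11}): any covering of $\sphere^d$ by equal spherical caps has density at least $Bd$ for an absolute constant $B$. Its proof splits on the cap radius: if the radius exceeds $\sqrt{1-\nicefrac{1}{d}}$, each cap covers a constant fraction of $\sphere^d$ and combining with the ``at least $d$ caps'' fact from Theorem~\ref{thm:logd-bits} gives density $\Omega(d)$; if the radius is smaller, the Coxeter--Few--Rogers simplex bound (again from \cite{Ilya2007}) directly gives density $\Omega(d)$. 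Once density $\ge Bd$ is in hand, the per-cap area bound $P(\alpha,d)$ and the count $V\le 2^b$ give $2^b\,P(\alpha,d)\ge Bd$, hence $b\ge -\log P(\alpha,d)+\log d+\log B$. Your proposal names the needed ``Rogers-type spherical covering-density'' input but leaves it unproved; without it the argument does not close, since the naive volume bound gives only $-\log P$ and the dimension bound gives only $\log d$, and these do not combine additively for free. A secondary point: you suggest evaluating $P(\alpha,d)$ asymptotically via Stirling, but this is not needed --- the theorem leaves $P(\alpha,d)$ explicit, and the claimed error bound $|e|\le\tfrac12\log\log d+\cO(1)$ follows directly from sandwiching $b^*$ between $-\log P+\log d+\log B$ and $-\log P+\log d+\log\log d+\log A$ and centering at $+\tfrac12\log\log d$.
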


\section{Average-Case Analysis}\label{sec:aca}

Now we switch to the average-case analysis for the class $\C(\alpha)$. First, we prove a lower bound for communicated bits in expectation. Then we analyze the randomized version of Sparse Dithering, which, having better theoretical guarantees than random dithering, is suboptimal in this analysis. Finally, we present a new compression operator from $\C(\alpha)$, {\em Spherical Compression}, which is provably optimal.

\subsection{Lower bound on average communication}

In this section, we consider compression operators from $\C(\alpha)$ and investigate the trade-off between normalized variance $\alpha$ and expected number of bits $$B=\sup_{\|x\|=1}\E_{\cC}\[|E(x)|\].$$ In other words, we study the trade-off for strictly $\alpha$-contractive operators that encode any unit vector with no more than $B$ bits in expectation. In such a setting, we show the following lower bound on $B$.

\begin{theorem}\label{thm:tighter-bounds-2}
Let $\cC\in\C(\alpha)$ be a compression operator such that $\cC(x)\in\R^d$ can be transferred with $B$ bits in expectation for any unit vector $x\in\sphere^d$. Then $-\log P(\alpha, d) \le B$.
\end{theorem}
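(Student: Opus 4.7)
The plan is to reduce the bound to a Shannon source--coding converse combined with a spherical--cap volume estimate. First, I would let $X$ be uniformly distributed on $\sphere^d$, denote by $\mu$ the normalized surface measure, and let $Y=E(X)$ be the (possibly random) codeword produced by the encoder on input $X$. By hypothesis, $\E_\cC[|E(x)|]\le B$ for every unit vector $x$, so averaging over $X$ gives $\E[|Y|]\le B$. Interpreting $E$ as producing uniquely decodable (e.g.\ prefix-free) binary codewords---the natural reading of ``number of bits needed to encode''---Shannon's noiseless source coding theorem yields $\E[|Y|]\ge H(Y)$, while a single--letter bound gives $H(Y)\ge -\log\max_c\P(Y=c)$. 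It therefore suffices to show $\P(Y=c)\le P(\alpha,d)$ for every codeword $c$.

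For the probability bound, I would fix a codeword $c$, set $y\eqdef D(c)\in\R^d$, and use strict contractivity. The event $\{Y=c\}$ forces $\|y-X\|^2\le\alpha\|X\|^2=\alpha$, so $X\in S_y\eqdef\{x\in\sphere^d:\|x-y\|^2\le\alpha\}$, giving $\P(Y=c)\le\mu(S_y)$. The remaining step is a purely geometric lemma: $\mu(S_y)\le P(\alpha,d)$ for every $y\in\R^d$. Expanding $\|x-y\|^2$ on $\sphere^d$ shows that $S_y$ is a spherical cap (empty when $y=0$, since $\alpha<1$), specifically $S_y=\{x\in\sphere^d:\langle x,y/\|y\|\rangle\ge t(\|y\|)\}$ with $t(r)=\tfrac{1-\alpha}{2r}+\tfrac{r}{2}$. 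A one--variable minimization yields $t_{\min}=\sqrt{1-\alpha}$ attained at $r=\sqrt{1-\alpha}$, so the cap is never wider than the one with $\cos\theta=\sqrt{1-\alpha}$, i.e.\ $\sin^2\theta=\alpha$. The normalized area of such a cap equals $\tfrac12 I_{\sin^2\theta}(\tfrac{d-1}{2},\tfrac12)=P(\alpha,d)$ by the classical regularized--incomplete--beta formula, giving $\mu(S_y)\le P(\alpha,d)$.

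Chaining the two steps yields $B\ge\E[|Y|]\ge H(Y)\ge -\log P(\alpha,d)$, as claimed. I expect the main obstacle to be the cap optimization: pinpointing $\|y\|=\sqrt{1-\alpha}$ as the worst-case decoder radius and recognizing the resulting cap area as exactly $P(\alpha,d)$; this is what forces the bound to be tight (and matches the upper bound attained by \emph{Spherical Compression} later in the paper). The Shannon step is routine, and the only modelling caveat is that ``encoding into $\{0,1\}^*$'' is read as uniquely decodable, so that Kraft/McMillan applies and $\E[|Y|]\ge H(Y)$ is valid.
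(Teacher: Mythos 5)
Your proof is correct and reaches the same bound, but the information-theoretic step is genuinely different from the paper's, and arguably cleaner. Both arguments start identically (uniform $X$ on $\sphere^d$, $\E[|E(X)|]\le B$) and both rest on the same geometric lemma that every cap $S_y=\{x\in\sphere^d:\|x-y\|\le\sqrt{\alpha}\}$ has normalized measure at most $P(\alpha,d)$, with the worst case at $\|y\|=\sqrt{1-\alpha}$; in the paper this is Lemma~\ref{lem:max-cap-center} applied to the cap area, and your one-variable minimization of $t(r)=\tfrac{1-\alpha}{2r}+\tfrac{r}{2}$ is the same fact.

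Where you diverge is the entropy chain. The paper sets $\hat{X}=\cC(X)$, invokes source coding to get $B\ge H(\hat{X})$, and then passes through mutual information: $H(\hat{X})\ge I(\hat{X},X)=H(X)-H(X|\hat{X})$, where $H(X)$ and $H(X|\hat{X})$ are \emph{differential} entropies of the continuous $X$ on the sphere. It then bounds $H(X|\hat{X}=v)\le\log A'(v)$ (the paper actually writes ``$=$'' with an appeal to uniformity of the conditional, which is not quite right in general, but the $\le$ direction is all that is needed since uniform maximizes differential entropy over a fixed-measure support). You instead work with the codeword $Y=E(X)$, which is genuinely discrete, and replace the mutual-information decomposition by the elementary min-entropy bound $H(Y)\ge-\log\max_c\P(Y=c)$, reducing everything to the purely combinatorial fact $\P(Y=c)\le\mu(S_{D(c)})\le P(\alpha,d)$. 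This avoids differential entropy and conditional-entropy bookkeeping entirely, and also sidesteps the data-processing step implicitly used when the paper transfers the source-coding bound from $E(X)$ to $\hat{X}=D(E(X))$. The one modelling assumption you flag (uniquely decodable codes, so Kraft--McMillan gives $\E[|Y|]\ge H(Y)$) is exactly the same assumption the paper makes when it invokes Shannon's source coding theorem, so nothing new is being assumed. Overall your route is a legitimate, somewhat more elementary alternative to the paper's proof.
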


\subsection{Randomized version of Sparse Dithering}

Here we randomize Sparse Dithering to make it unbiased and estimate the number of encoding bits it needs in expectation. First we decompose the to-be-compressed vector $x\in\R^d$ into the magnitude and unit direction $u=x/\|x\|$ as before. To randomize the scheme, each coordinate $u_i$ gets rounded to one of the two nearest neighbors, so as to preserve unbiasedness; that is, if $2k_ih \le |u_i| \le 2(k_i+1)h$ for some $k_i\ge0$, then $\hat{u}_i = \sign(u_i) 2\hat{k}_ih$ where
\begin{equation*}
\hat{k}_i =
\begin{cases}
    k_i     & \text{ with prob. }\  \frac{2(k_i+1)h-|u_i|}{2h}\\ 
    k_i+1   & \text{ with prob. }\  \frac{|u_i| - 2k_ih}{2h}.\\
\end{cases}
\end{equation*}
Clearly, $\E\[\hat{u}\] = u$ and defining $\cC(x) = \|x\|\hat{u}$, we maintain unbiasedness $\E\[\cC(x)\]=x$. The encoding scheme is the same as in the deterministic case. Upper bounding the expected number of bits and the variance, we obtain:

\begin{theorem}\label{thm:RSD}
Randomized SD compression with parameter $\nu=\omega$ belongs to $\U(\omega)$, communicating at most
$$
30 + \log d + \(\log 3 + \frac{1}{2\sqrt{\omega}}\)d
$$
bits in expectation. In particular, with $\omega = \nicefrac{1}{4}$ variance (ignoring $30+\log{d}$ negligible factors), it uses $\(1+\log 3\)d\approx 2.6d$ bits in each iteration (about $12$ times less than full precision case) and forces up to $1+\omega=\nicefrac{5}{4}$ times more iterations, leading to $\approx 9.9$ times bandwidth savings.
\end{theorem}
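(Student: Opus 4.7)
The plan is to establish two things: that randomized SD belongs to $\U(\omega)$, and that its expected codeword length satisfies the stated bound. Both decouple cleanly thanks to the independence of the rounding across coordinates.

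For class membership, by construction the rounding probabilities enforce $\E[\hat u_i] = u_i$, so $\E[\cC(x)] = \|x\|\,\E[\hat u] = x$ (unbiasedness). For the variance, stochastic rounding of a single coordinate between two grid points at distance $2h$ has variance $(2(k_i+1)h - |u_i|)(|u_i| - 2k_ih) \le h^2$, and summing across coordinates gives $\E\|\hat u - u\|^2 \le dh^2 = \nu$; hence $\E\|\cC(x) - x\|^2 = \|x\|^2\,\E\|\hat u - u\|^2 \le \omega\|x\|^2$.

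For the bit count I would reuse the encoding from Section \ref{sec:sd} with $(\gamma, s, k)$ replaced by $(\|x\|, s, \hat k)$. The total length decomposes as (i) a constant budget (here $30$ bits) for the scalar $\|x\|$, (ii) $\log d$ bits for the sparsity count $n_0 = |\{i : \hat k_i = 0\}|$, (iii) $\log\binom{d}{n_0}$ bits to locate the zeros, (iv) $d - n_0$ sign bits for the nonzeros, and (v) $\sum_i \hat k_i$ bits for the unary encoding of the magnitudes. The key combinatorial step is to collapse (iii)+(iv) via the identity $\sum_{k=0}^d \binom{d}{k}\,2^{d-k} = 3^d$, which yields $\log\binom{d}{n_0} + (d - n_0) \le d\log 3$ deterministically for every realization. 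For (v), the rounding probabilities give $\E[\hat k_i] = |u_i|/(2h)$, so $\E\!\left[\sum_i \hat k_i\right] = \|u\|_1/(2h)$; invoking $\|u\|_1 \le \sqrt d\,\|u\|_2 = \sqrt d$ together with $h = \sqrt{\omega/d}$ gives the bound $d/(2\sqrt\omega)$. Summing the five contributions delivers the claimed expected length.

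The main obstacle is the step (iii)+(iv): a naive bound $\binom{d}{n_0} \le 2^d$ would add an extra bit per coordinate and spoil the $\log 3 \approx 1.585$ coefficient; the binomial identity above is what matches the encoding rate to that of the natural ternary alphabet $\{-1, 0, +1\}$, and is the one non-routine ingredient. Beyond that, unbiasedness, the per-coordinate variance bound, and the expectation of the unary lengths are direct calculations.
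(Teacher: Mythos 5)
Your proof is correct and follows the paper's overall structure (unbiasedness by construction, per-coordinate stochastic-rounding variance bounded by $h^2$, then expected codeword length decomposed into the scalar, the sparsity count, the zero locations, the sign bits, and the unary magnitudes, with $\E[\sum_i \hat k_i] = \|u\|_1/(2h) \le d/(2\sqrt\omega)$ exactly as in the paper). The one step where you take a genuinely different route is the combinatorial bound. The paper applies the Stirling-type inequality $\log\binom{d}{n_0} \le d H_2(\tau) - 1$ with $\tau = n_0/d$ and then maximizes $H_2(\tau) + (1-\tau)$ over $\tau$, finding the optimum $\log 3$ at $\tau^* = 1/3$. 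You instead observe directly that $\binom{d}{n_0} 2^{d-n_0} \le \sum_k \binom{d}{k}2^{d-k} = 3^d$, giving $\log\binom{d}{n_0} + (d - n_0) \le d\log 3$ pointwise. Your argument is more elementary and dispenses with both the entropy inequality and the calculus; its only cost is that it forfeits the extra $-1$ bit that the paper extracts from the $-\frac{1}{2}\log(2\pi d\tau(1-\tau))$ correction term, which is how the paper's $31$ bits for the scalar becomes $30$ in the final tally. Your write-up simply asserts the constant is $30$ rather than $31$, so strictly speaking you land one bit above the stated bound unless you recover that saving; this is a cosmetic discrepancy in the float-encoding overhead, not a substantive gap, but it is worth flagging if the exact constant in the theorem statement matters to you.
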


As mentioned earlier, SD is similar to random dithering with uniform levels, namely with $\sqrt{d}$ levels. However, with a different parametrization $\nu$ and better encoding strategy, SD provides better theoretical guarantees. Indeed, random dithering with $\sqrt{d}$ levels communicates $\approx 2.8 d$ bits in expectation and requires $1+\omega=2$ times more iterations, resulting in a factor of  $\approx 5.7$ in bandwidth saving (see Theorem 3.2 and Corollary 3.3 of \cite{qsgd}). For more comparisons on bandwidth savings see Table 2 
in the Appendix.

\subsection{New Compressor: Spherical Compression (SC)}

It can be shown that randomized SD compression discussed in the previous section is suboptimal with respect to the lower bound of Theorem \ref{thm:tighter-bounds-2}. Here we provide a simple compression operator--{\em Spherical Compression (SC)}--that achieves this lower bound with less than $3$ overhead bits.

{\bf Construction and variance bound.}
As before, we transmit the magnitude and direction separately. For a given unit vector $x\in\sphere^d$, SC  generates a sequence $(x^t)_{t=1}^T$ of i.i.d. points with $\|x^t\|^2=1-\alpha$, and terminates once $\|x^T-x\|^2 \le \alpha$ for some $T\ge 1$. The last generated point $x^T$ is the compressed version of $x$ we need to communicate, that is $\cC(x)=x^T$. It follows directly from this construction that $\cC\in\C(\alpha)$.

{\bf Encoding scheme.}
The crucial part of the encoding scheme is that it is enough to communicate only $T$. Indeed, the communication process is the following. Importantly, the emitter and receiver have agreed on using the same random seed for generating i.i.d. points $(x^t)$, before the compression of any vector is performed. 

Then, upon receiving the number of trials $T$, the decoder can reproduce the same sequence $x^1, x^2, \dots, x^T$ and recover $x^T$. Consequently, it remains to encode the random integer $T$ into a binary code.

{\bf Upper bound on $B$.}
First we show that $T$ follows a geometric distribution with parameter $p=P(\alpha, d)$. Indeed, $T$ can be viewed as the number of trials before the first success happens after a series of failures.

\begin{figure}[t!]
\begin{center}
    \includegraphics[scale=0.53]{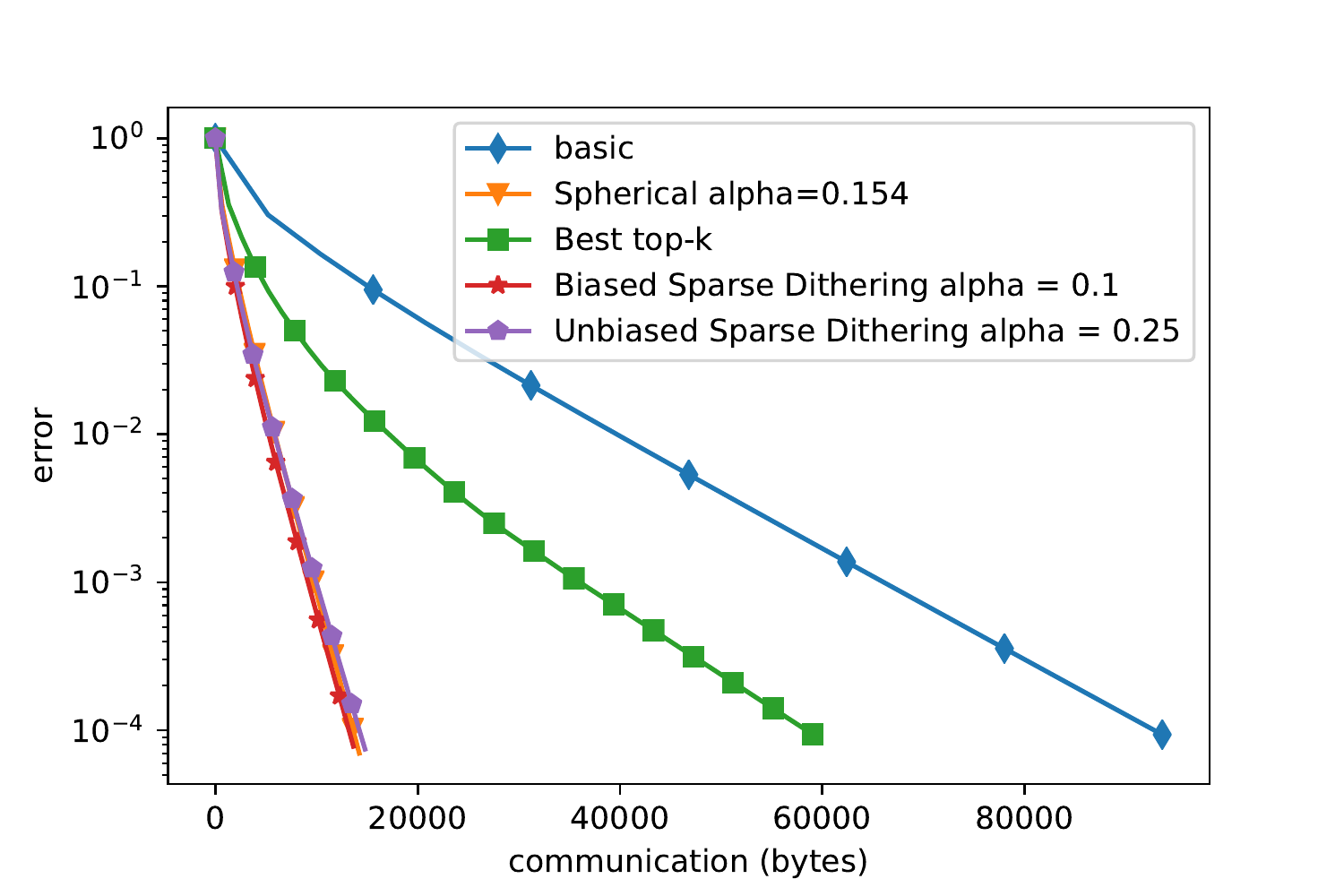}
    \includegraphics[scale=0.53]{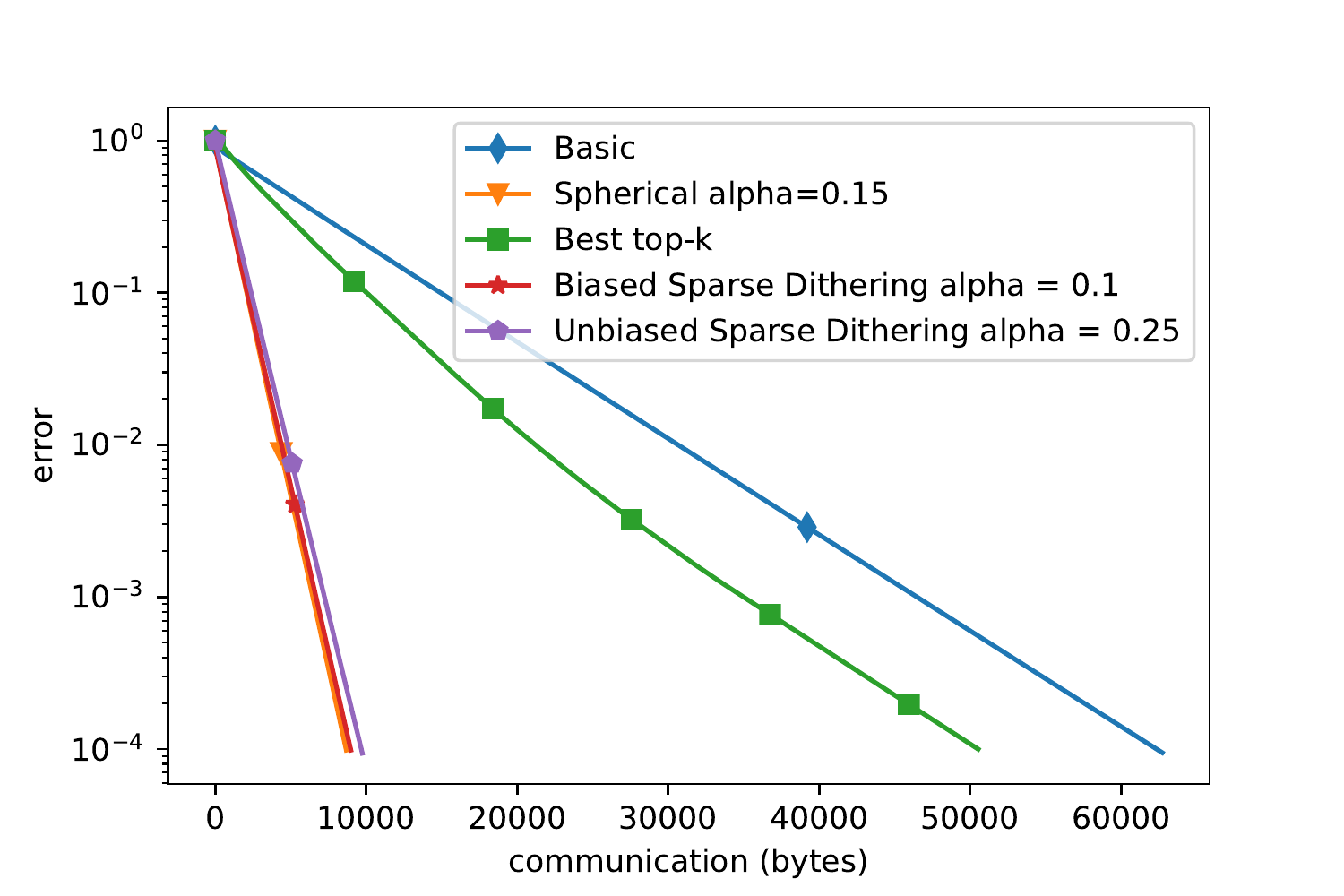}
    \includegraphics[scale=0.53]{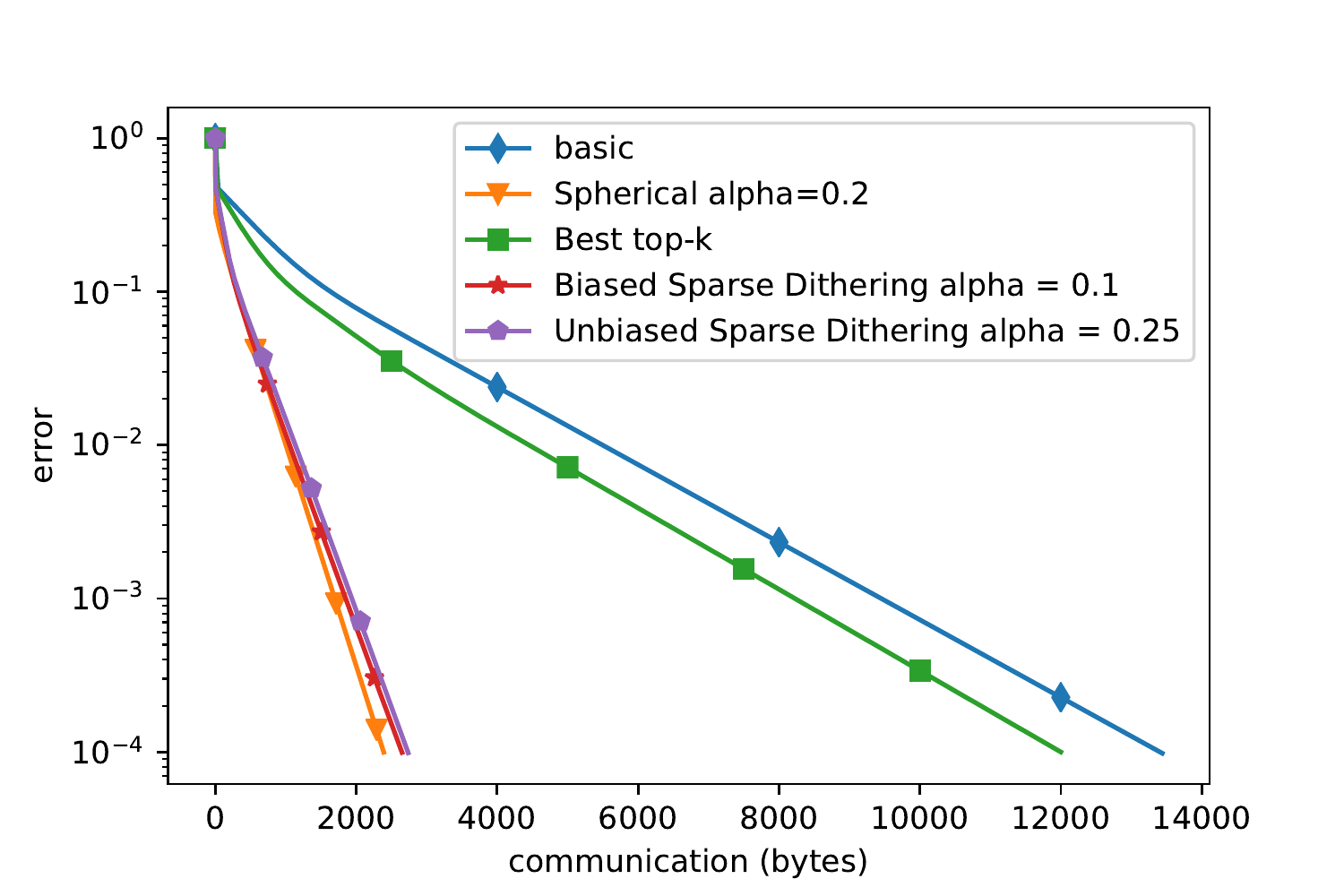}
    \includegraphics[scale=0.53]{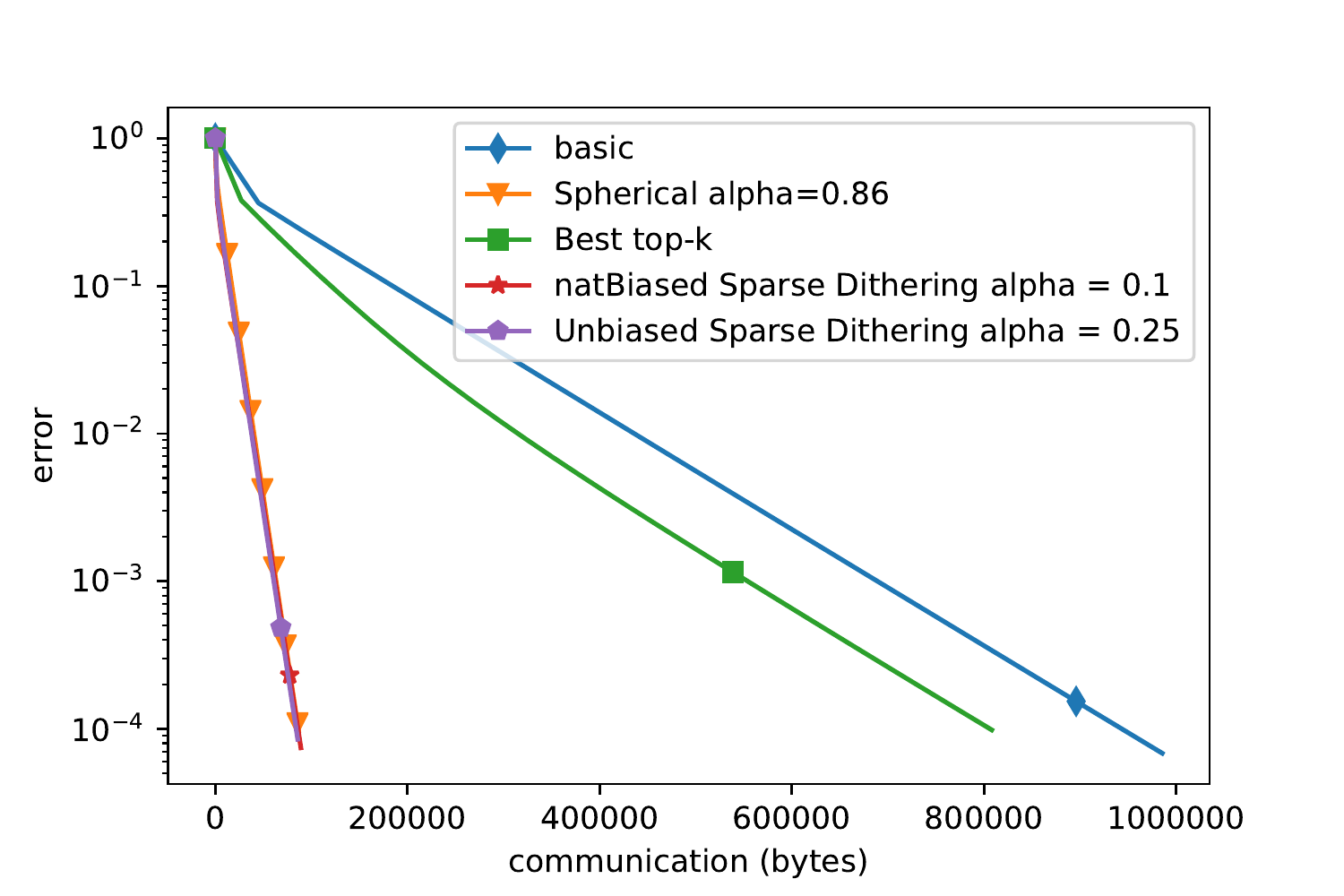}
    \includegraphics[scale=0.53]{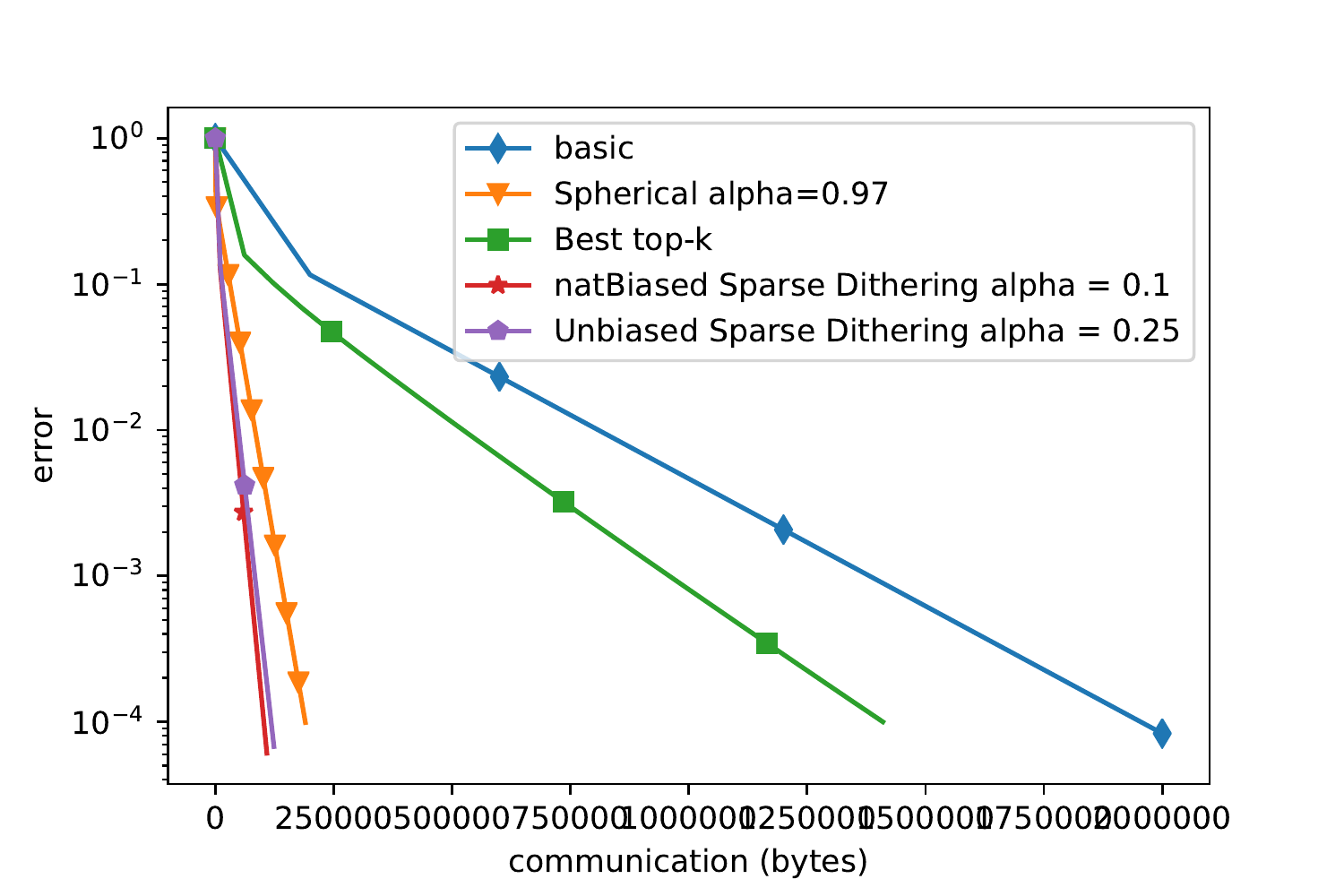}
\end{center} 
\caption{The first two plots correspond to ridge regression (Housing, Bodyfat datasets), while the next three plots correspond to regularized logistic regression (Breast Cancer, Madelon, Mushrooms datasets). This shows convergence as a function of total communication (in bytes), for various selected compression operators.}
\label{fig:1}
\end{figure}

\begin{figure}[t!]
\begin{center}
    \includegraphics[scale=0.53]{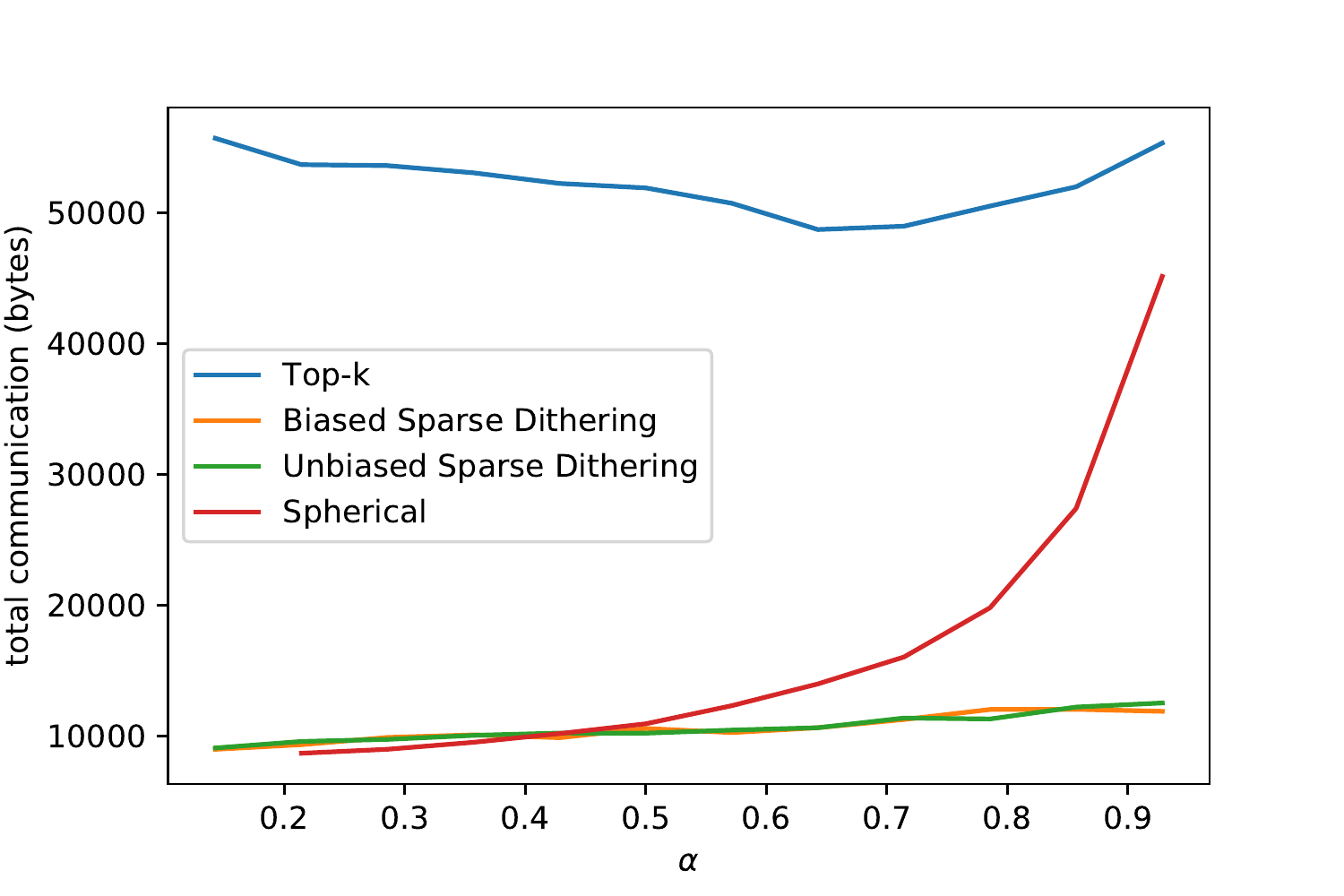}
    \includegraphics[scale=0.53]{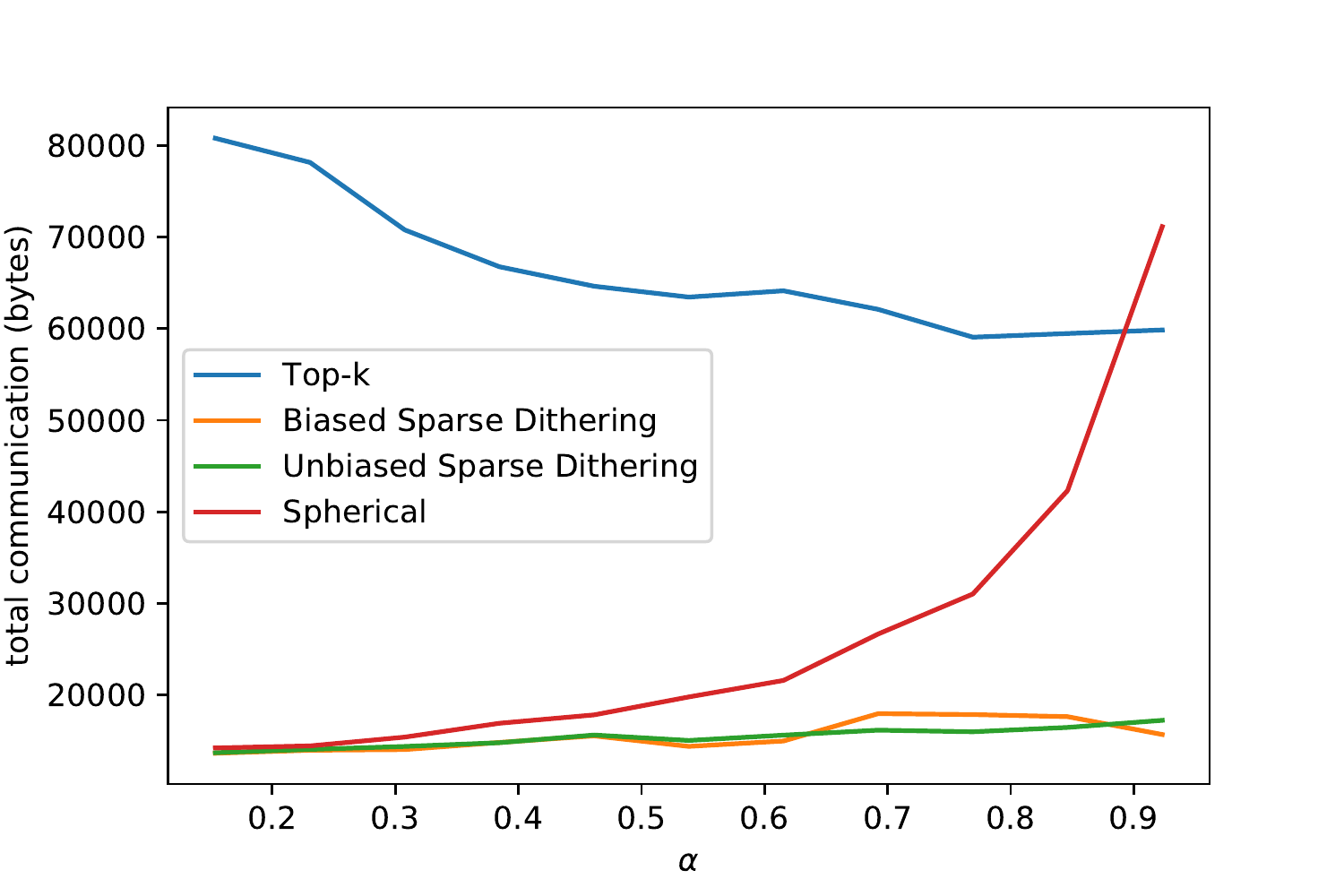}
    \includegraphics[scale=0.53]{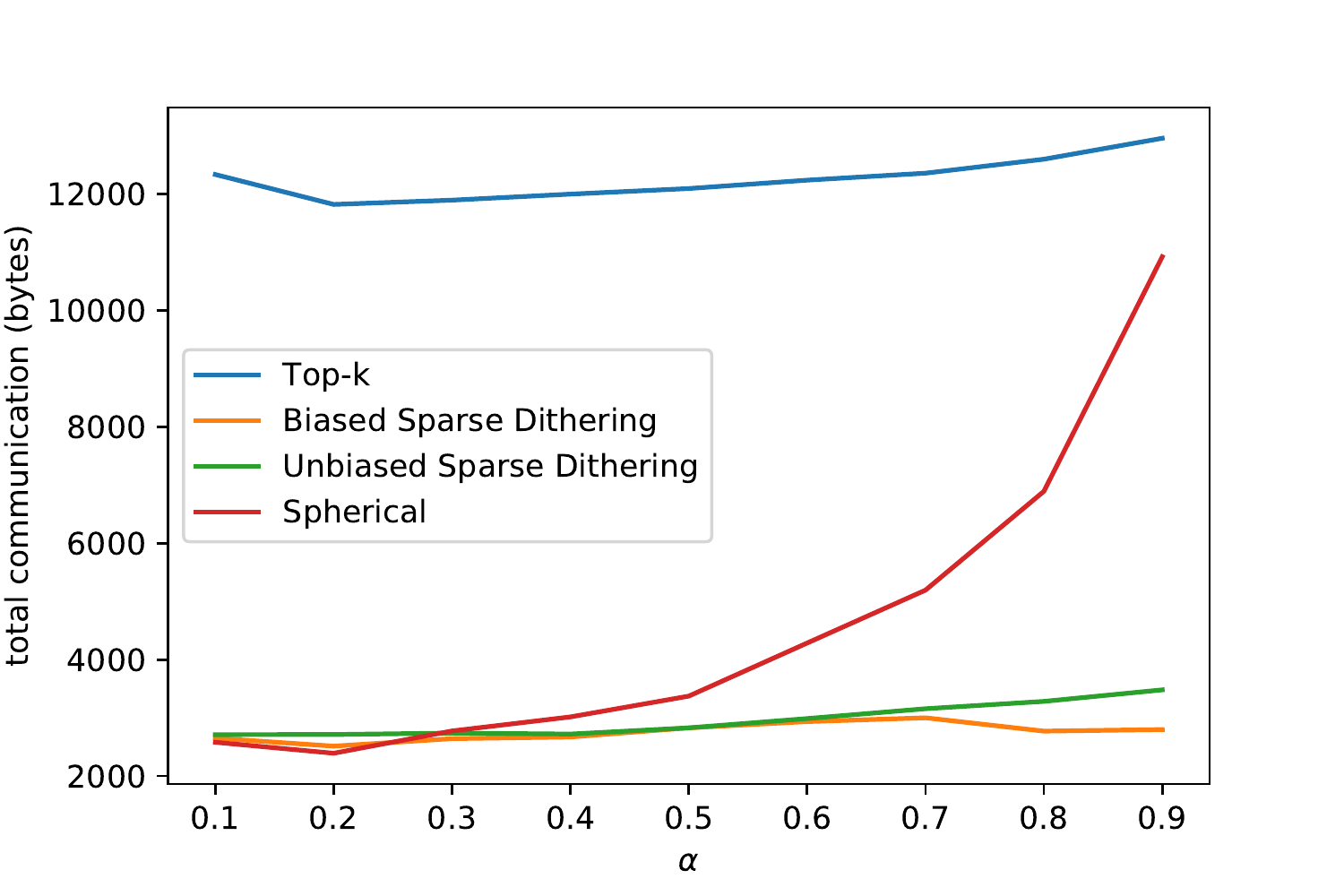}
    \includegraphics[scale=0.53]{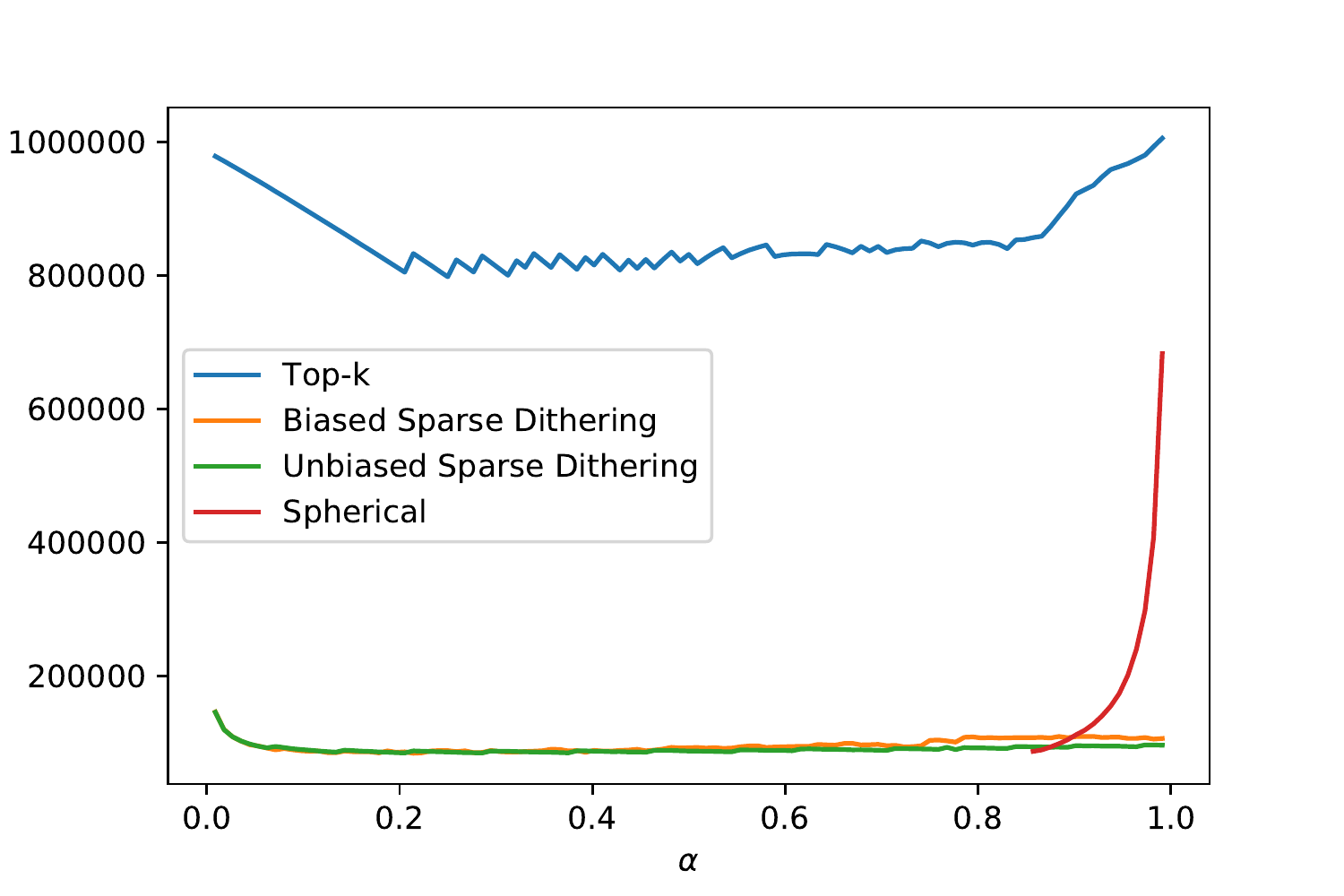}
    \includegraphics[scale=0.53]{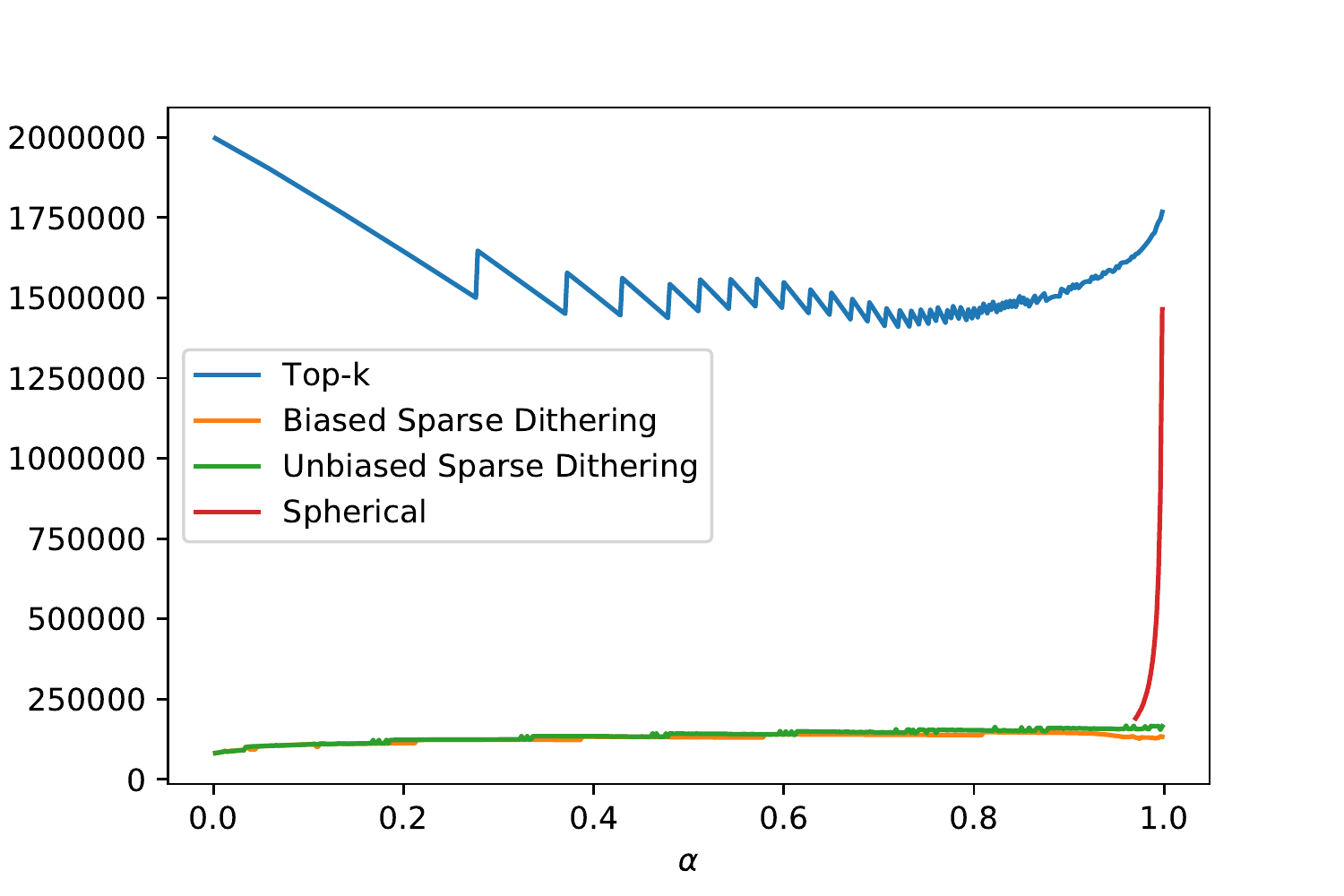}
\end{center} 
\caption{The first two plots correspond to ridge regression (Housing, Bodyfat datasets), while the next three plots correspond to regularized logistic regression (Breast Cancer, Madelon, Mushrooms datasets).  This shows total communication needed to achieve $\epsilon = 10^{-4}$ as a function of $\alpha$ for various operators. For Top-$k$, we set $\alpha = 1-\frac{k}{d}$, as predicted by the theory.}\label{fig:2}
\end{figure}


In our case, trials correspond to generating i.i.d. points $x^t$ and success means $x\in C^d(x^t,\sqrt{\alpha})$ which happens with probability $P(\alpha, d)\in(0,\nicefrac{1}{2})$. Therefore, the expected number of points $x^t$ we need to generate until we get into $\alpha$-vicinity of the initial point $x$ is $\E\[T\] = \nicefrac{1}{p}=\nicefrac{1}{P(\alpha,d)} > 2$.
Next, we encode $T$ with the Golomb--Rice coding scheme, which is known to be optimal for geometric distributions. Define integer $m\ge 0$ from $\nicefrac{1}{2p} \le 2^m < \nicefrac{1}{p}$ and decompose $T$ as $T = 2^m q + r$ with $q\ge 0,\; 0\le r < 2^m$. The quotient $q$ is encoded with unary coding as a string of $q$ zeros followed by a 1. The remainder $r$ is communicated with exactly $m$ bits using truncated binary coding. There is no need to send the value of $m$ as it can be computed from $p$, which depends only on $\alpha$ and $d$. Hence, the total number of bits to encode $T$ is no more than $q+m+1$. Note that $m < \log{\nicefrac{1}{p}} = -\log{p}$ is fixed, while $q$ depends on $T$ and $q\le 2^{-m}T$. Hence,
\begin{equation*}
B = \E\[q+m+1\] < 2^{-m}\E\[T\] - \log{p} + 1 = \frac{1}{2^m p} - \log{p} + 1\le - \log{p} + 3,
\end{equation*}
which implies:

\begin{theorem}\label{thm:SC}
In the average-case analysis, Spherical Compression is optimal up to $3$ extra bits; that is, it communicates $B < -\log P(\alpha, d) + 3$ bits in expectation.
\end{theorem}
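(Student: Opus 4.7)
The plan is to follow the three-step program sketched just before the statement; the single nontrivial probabilistic input is that the stopping time $T$ is geometrically distributed with parameter exactly $p = P(\alpha,d)$, and once this is in hand the result reduces to a standard analysis of Golomb--Rice coding.

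First I would verify the geometric distribution of $T$. Since $\|x^t\|^2 = 1-\alpha$ and $\|x\|^2 = 1$, expanding
$$\|x^t - x\|^2 = (1-\alpha) - 2\langle x^t, x\rangle + 1 = 2-\alpha - 2\langle x^t, x\rangle$$
shows that the success event $\|x^t - x\|^2 \le \alpha$ is equivalent to $\langle x^t, x\rangle \ge 1-\alpha$. Writing $x^t = \sqrt{1-\alpha}\, y^t$ with $y^t$ uniform on $\sphere^d$, this becomes $\langle y^t, x\rangle \ge \sqrt{1-\alpha}$, i.e., $y^t$ lies in a spherical cap whose $\sphere^d$-measure is given by the classical cap formula $\frac{1}{2}I_{1-c^2}(\frac{d-1}{2}, \frac{1}{2})$ evaluated at $c=\sqrt{1-\alpha}$, i.e., $\frac{1}{2}I_{\alpha}(\frac{d-1}{2}, \frac{1}{2}) = P(\alpha,d)$. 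Independence of the trials then yields $T \sim \mathrm{Geom}(p)$ with $\E[T] = 1/p$.

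Second, I would make the Golomb--Rice accounting precise. Since $p < 1/2$, there is a unique nonnegative integer $m$ with $\frac{1}{2p} \le 2^m < \frac{1}{p}$, which simultaneously gives $m < -\log p$ and $2^{-m}/p \le 2$. Writing $T = 2^m q + r$ with $0 \le r < 2^m$, encoding the quotient $q$ in unary ($q+1$ bits) and the remainder $r$ in truncated binary ($m$ bits) produces a codeword of length $q + m + 1$; no bits are needed for $m$ itself since it is a deterministic function of the shared parameters $\alpha,d$. Using $q \le T/2^m$ and taking expectations,
\begin{equation*}
B \;\le\; \frac{\E[T]}{2^m} + m + 1 \;=\; \frac{1}{2^m p} + m + 1 \;\le\; 2 + m + 1 \;<\; -\log P(\alpha,d) + 3,
\end{equation*}
which is the claimed bound.

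The main obstacle is really the first step: the cap-measure identification with $P(\alpha,d)$. I would invoke the standard reduction of the marginal density of $\langle y, x\rangle$ under $y \sim \mathrm{Unif}(\sphere^d)$ to a Beta integral (via the substitution $t = 1-z^2$), which turns the cap probability into exactly $\frac{1}{2}I_{1-c^2}(\frac{d-1}{2}, \frac{1}{2})$, and verify that $P(\alpha,d) \in (0,1/2)$ so the integer $m\ge 0$ exists. Everything downstream -- the two bounds on $m$, the quotient bound $q \le T/2^m$, and the strict inequality $B < -\log p + 3$ -- is routine arithmetic once the geometric law and the shared-seed encoding scheme are in place.
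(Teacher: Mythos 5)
Your proof is correct and follows the same approach as the paper: identify $T$ as geometric with parameter $p = P(\alpha,d)$ via the spherical-cap measure, then run the Golomb--Rice accounting with $\nicefrac{1}{2p}\le 2^m<\nicefrac{1}{p}$, $q\le T/2^m$, and $m<-\log p$. The additional detail you supply in step one --- reducing the success event to $\langle y^t, x\rangle\ge\sqrt{1-\alpha}$ and matching it to $\frac{1}{2}I_{\alpha}(\frac{d-1}{2},\frac{1}{2})$ --- is a correct elaboration of the cap-measure identification the paper takes for granted.
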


\begin{remark}
It is worth mentioning that the above compression operator satisfies $\|\cC(x) - x\|^2 \le \alpha\|x\|^2$ in the worst case, not in expectation. Moreover, because of the symmetry of spheres and caps $C^d(x^t,\sqrt{\alpha})$, it can be seen from the construction that $\E\[\cC(x)\]$ points to the same direction as the initial vector $x$. Thus, with an appropriate (fixed) scaling factor, it can be made unbiased as well.
\end{remark}

\section{Experiments}

\subsection{Setting}
We consider both $l_2-$regularized logistic regression and ridge regression. In both cases, we use regularizing coefficient $\lambda = \frac{1}{n}$. We run this on multiple datasets, and show that our compression methods provide significant savings in communication (measured in bytes). The algorithm we used is Compressed Gradient Descent, which consists in iterating $$x^{t+1} = x^t- \frac{1}{L} \cC\(\nabla{f(x^t)}\),$$ where $f$ is the loss function, $L$ is the smoothness constant of $f$ computed explicitly. We stop the process whenever $\frac{\|x^t-x^\star\|^2}{\|x^0-x^\star\|^2} \le 10^{-4}$, where $x^\star$ is the minimizer of $f$ and is computed beforehand for all problems. \medskip

\subsection{Communication versus Convergence}
In this experiment, we look at convergence, measured as $\frac{\|x^t-x^\star\|^2}{\|x^0-x^\star\|^2}$ with respect to the number of bits communicated, for various compression operators. As shown in Figure \ref{fig:1},  our compression operators significantly outperform all  other operators. It is important that we compare our methods with the benchmark `Basic', which sends a 32-bits float for every element in the gradient, sending a total of $32d$ bits at every iteration. In addition, we run these experiments with Top-$k$  for all $1 \le k \le d$ and pick the best representative in the comparison, naming it `Best Top$-k$'. \medskip



\subsection{Total Communication as a Function of $\alpha$}
Here, we let $\alpha = 1-\nicefrac{k}{d}$ vary and in Figure \ref{fig:2}  we show the total number of bits communicated before converging to $\epsilon$-accuracy, with $\epsilon = 10^{-4}$. This clearly shows the superiority of our methods. It is important to note that Sparse Dithering can beat the optimal Spherical Compression, because it is unbiased, so it requires significantly less iterations. It's important to note that for $1 \le k \le d$, we plot Top-$k$ at $\alpha = 1-\nicefrac{k}{d}$.

\begin{figure}[t!]
\begin{center}
    \includegraphics[scale=0.53]{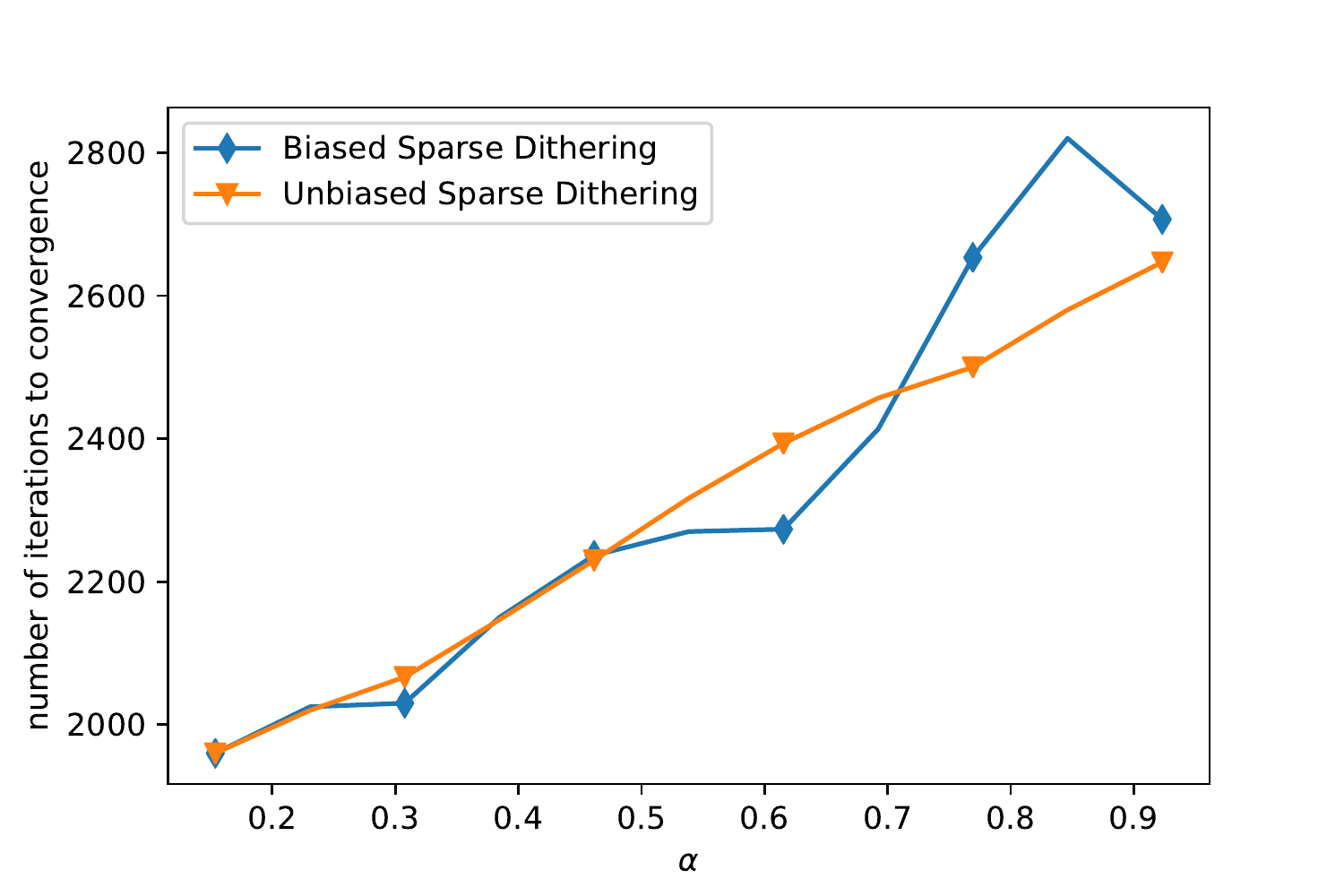}
    \includegraphics[scale=0.53]{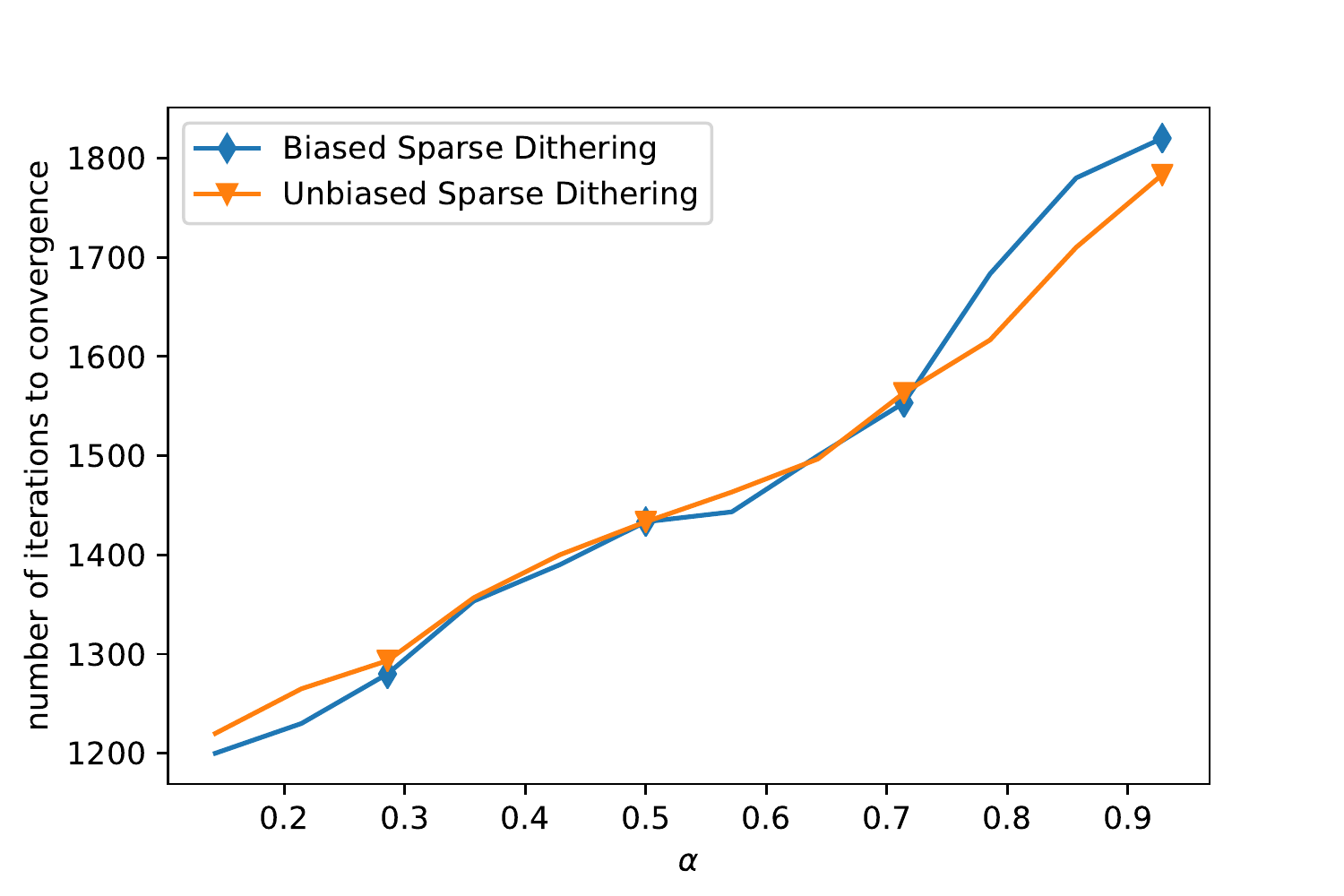}
    \includegraphics[scale=0.53]{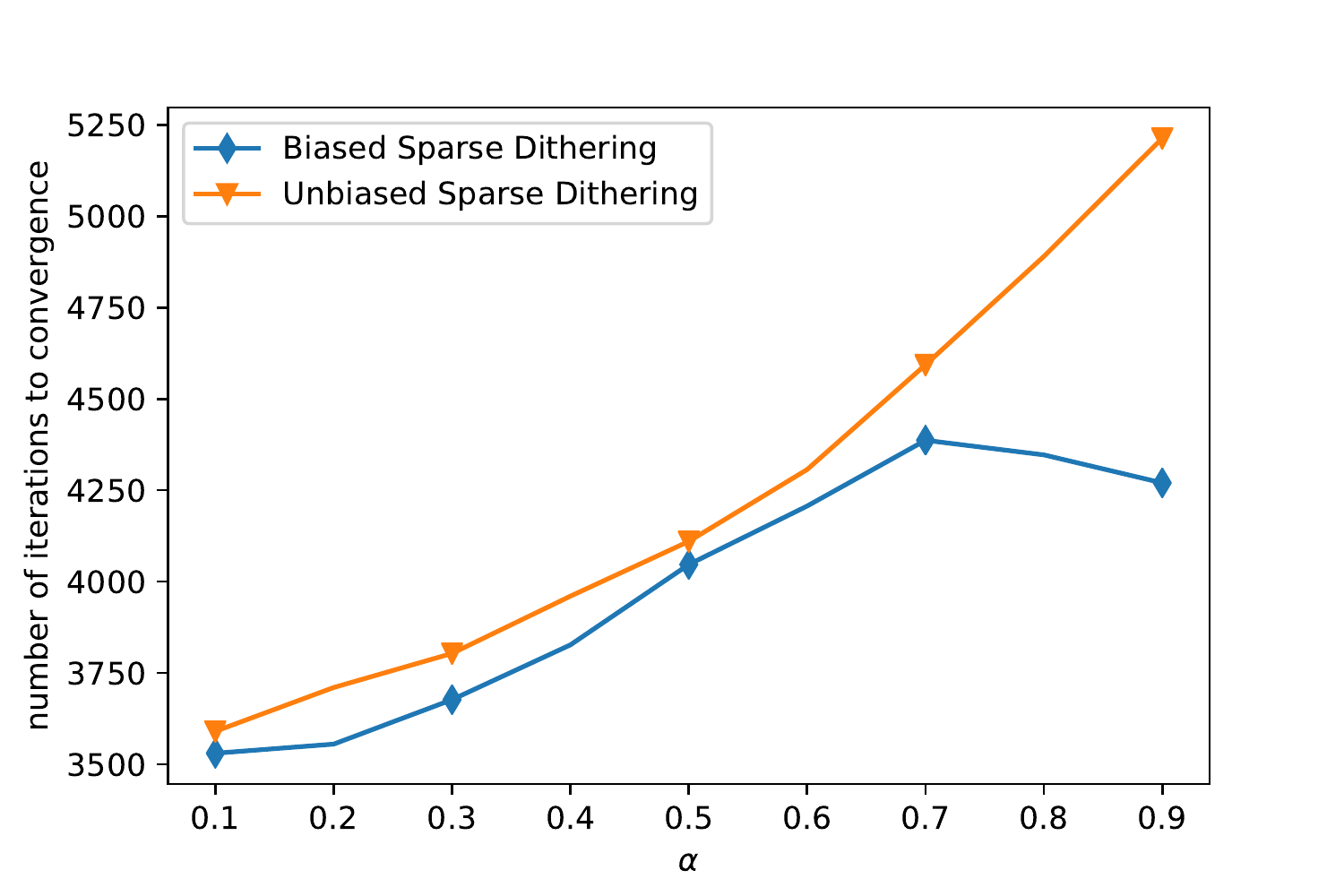}
    \includegraphics[scale=0.53]{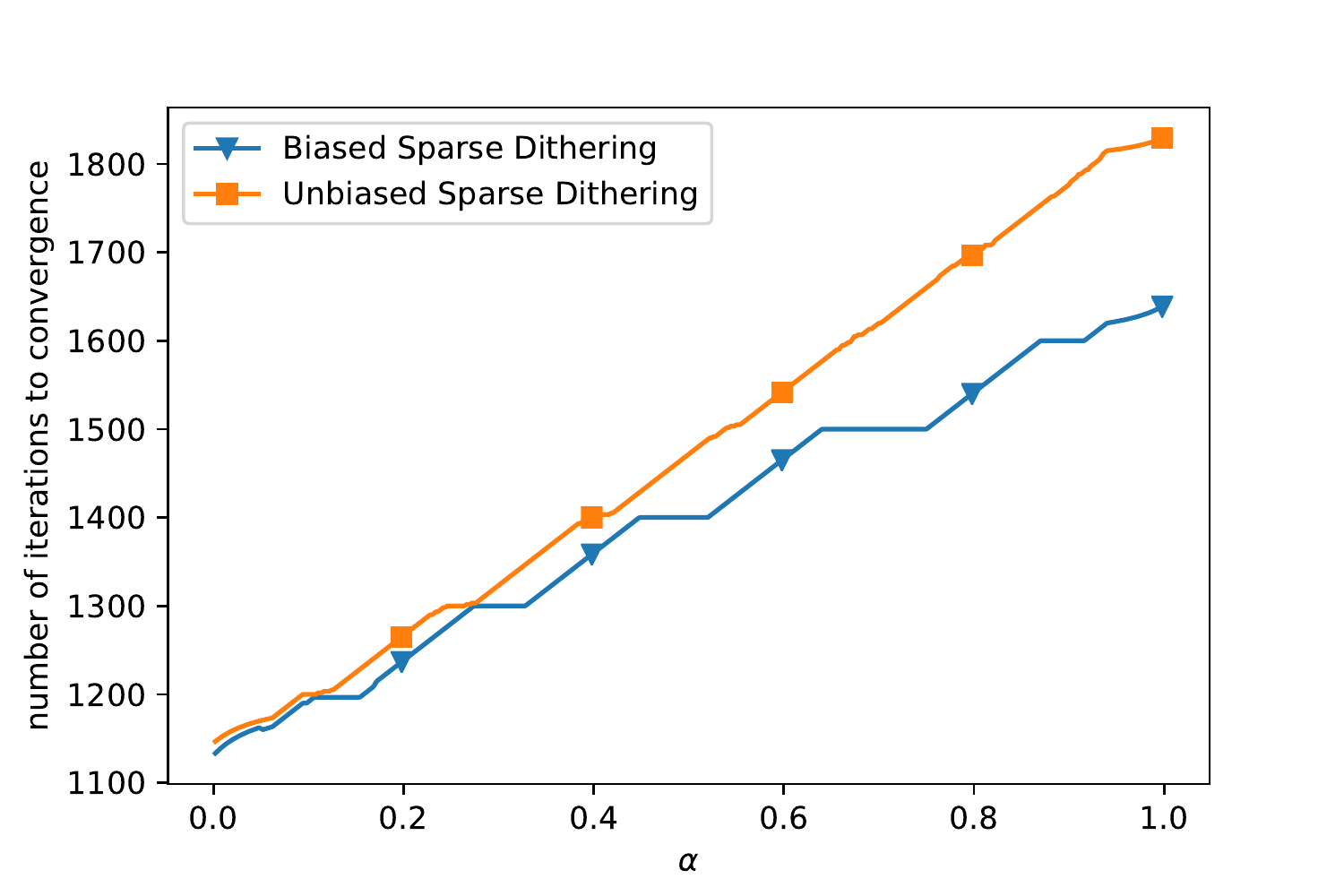}
    \includegraphics[scale=0.53]{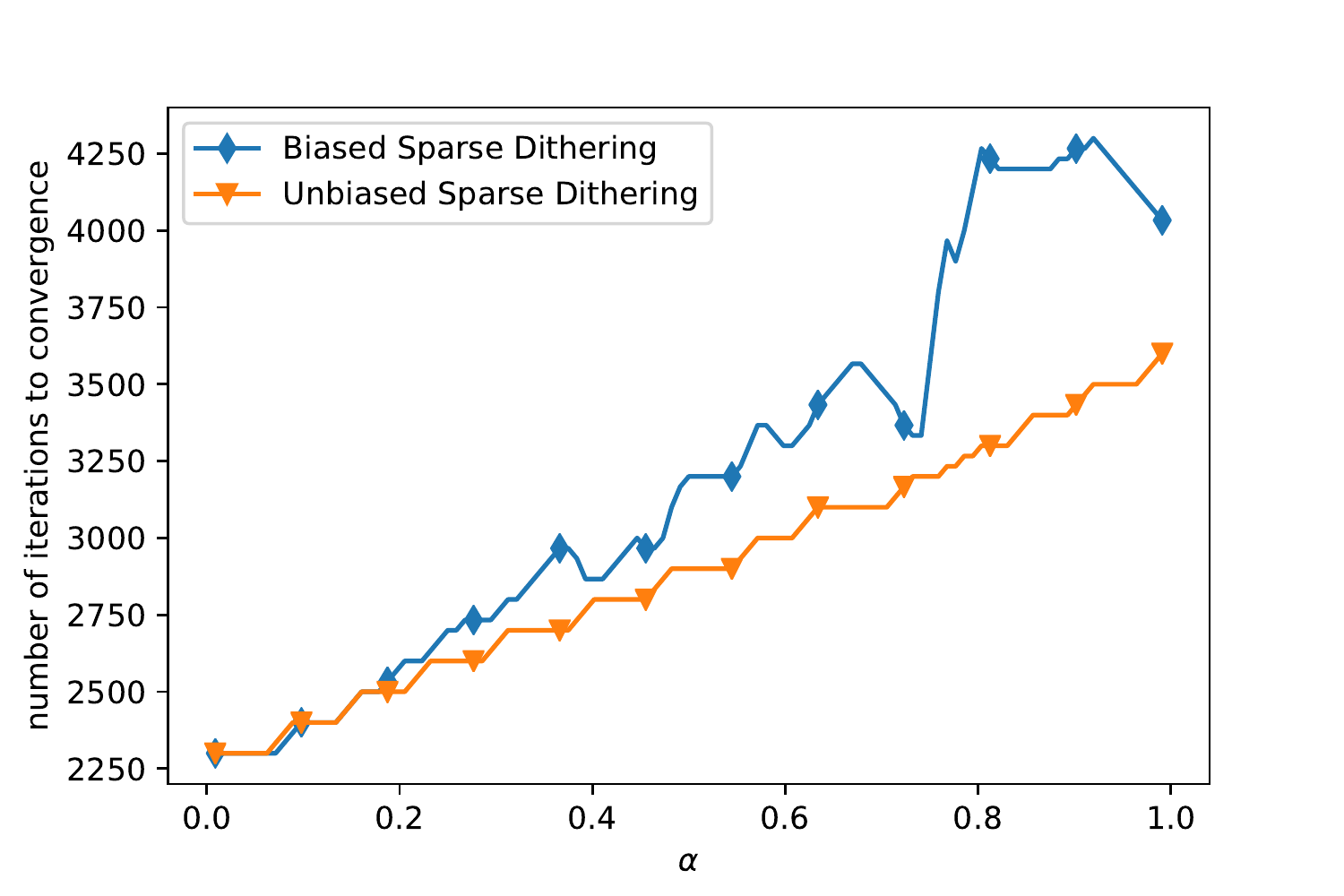}
\end{center} 
\caption{The first two plots correspond to ridge regression (Housing, Bodyfat datasets), while the next three plots correspond to regularized logistic regression (Breast Cancer, Madelon, Mushrooms datasets).   This shows the number of iterations as a function of $\alpha$ for both sparse dithering methods. Both curves look like $Y=1+X$, as predicted by the theory.}
\end{figure}

\begin{figure}[t!]
\begin{center}
    \includegraphics[scale=0.53]{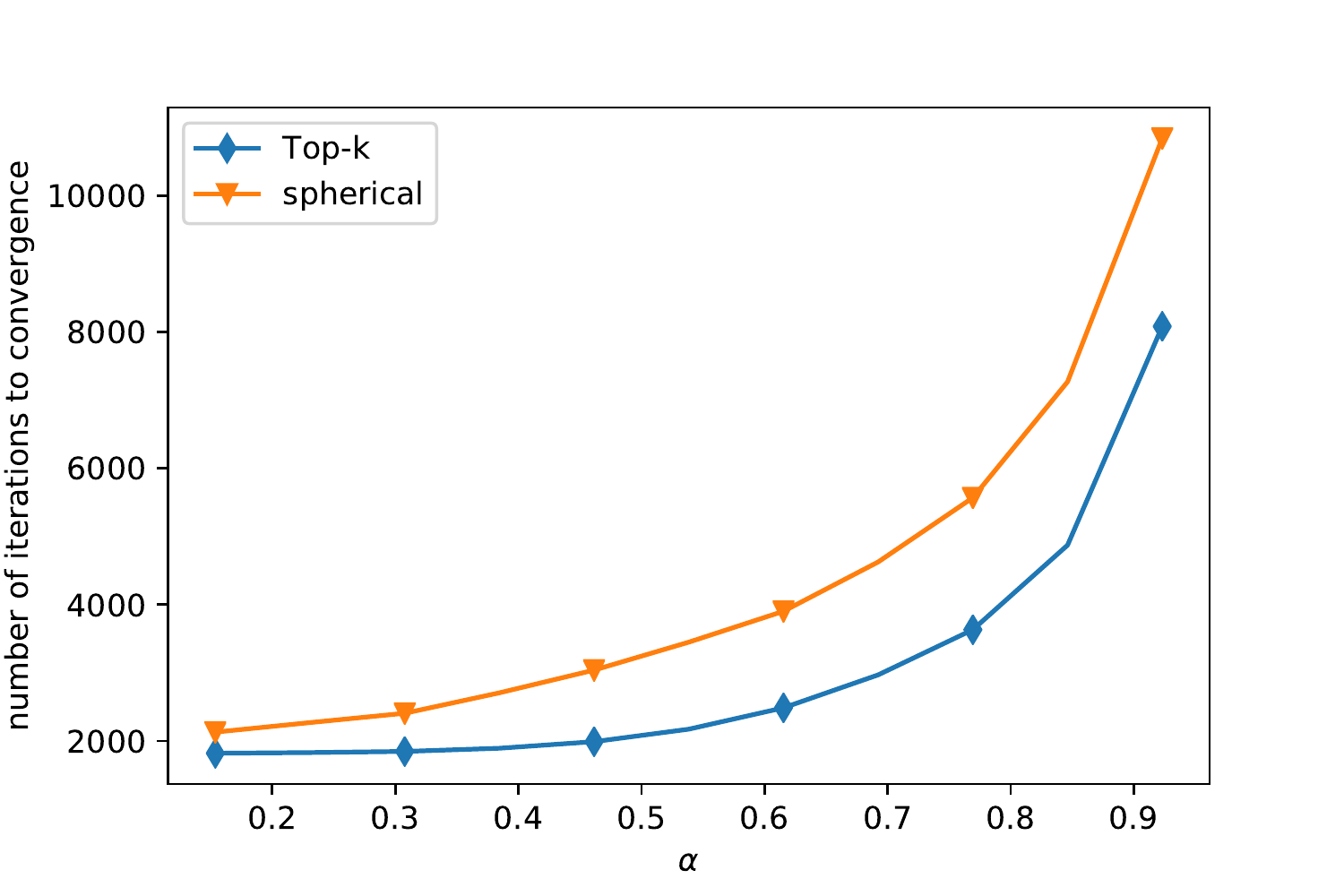}
    \includegraphics[scale=0.53]{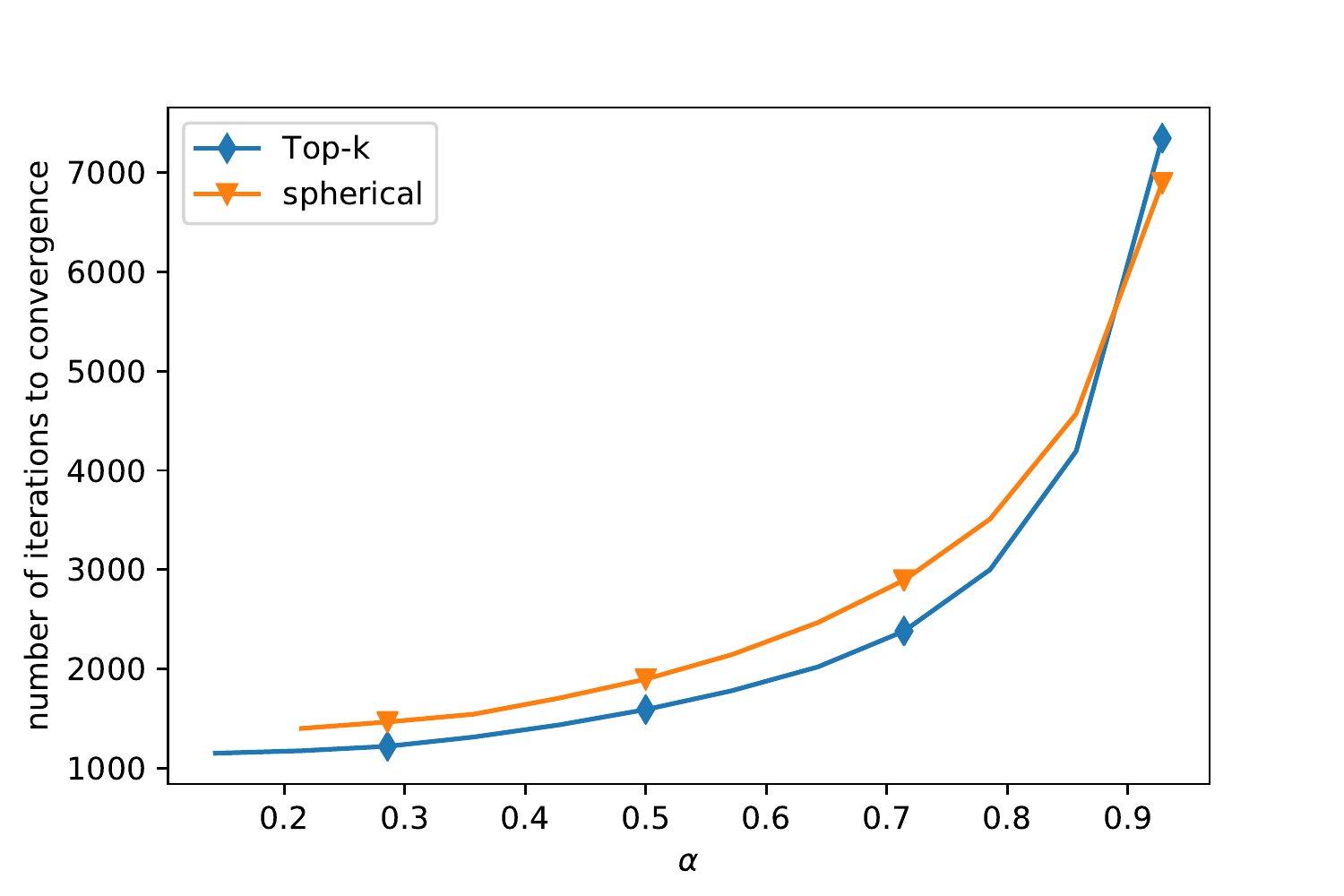}
    \includegraphics[scale=0.53]{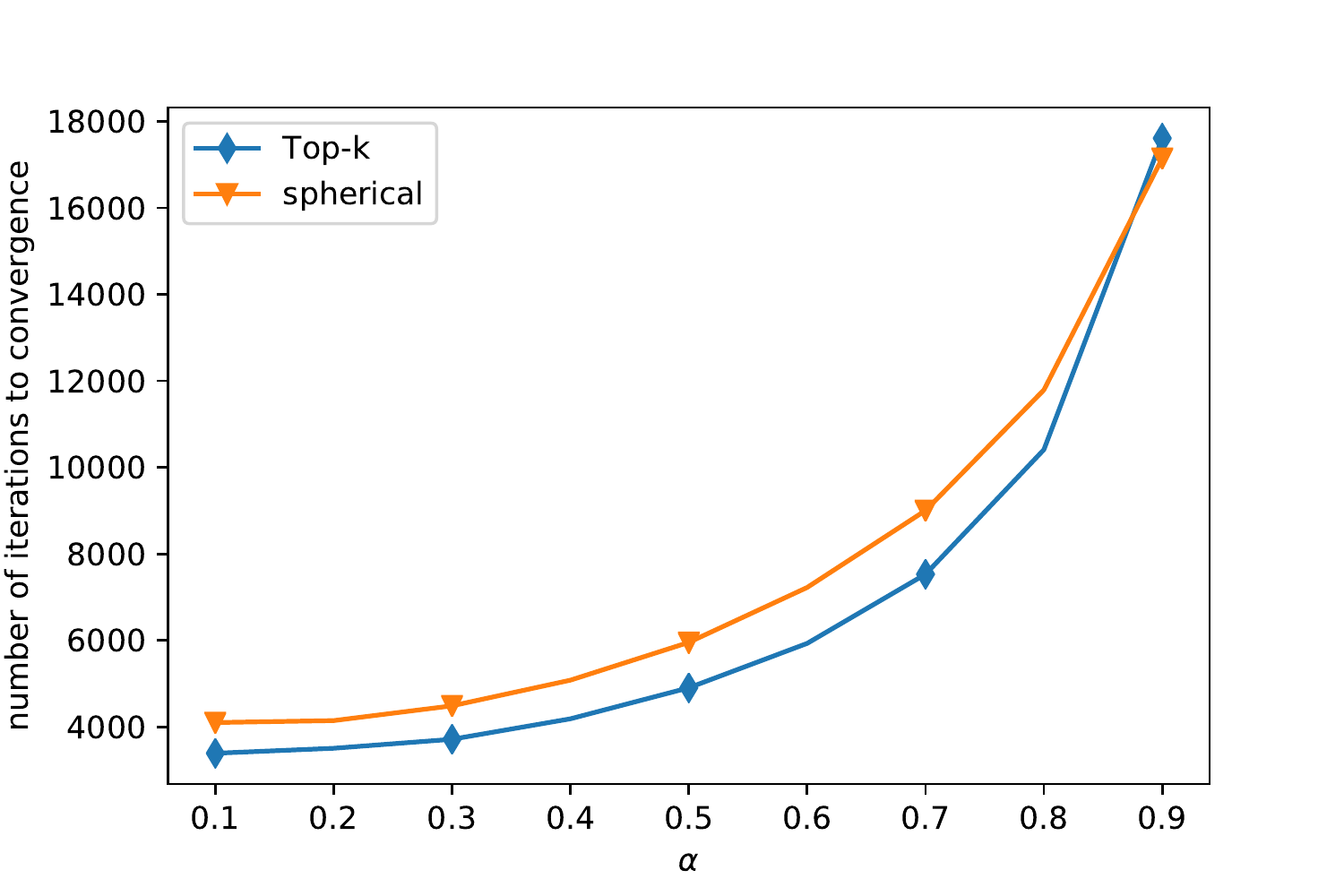}
    \includegraphics[scale=0.53]{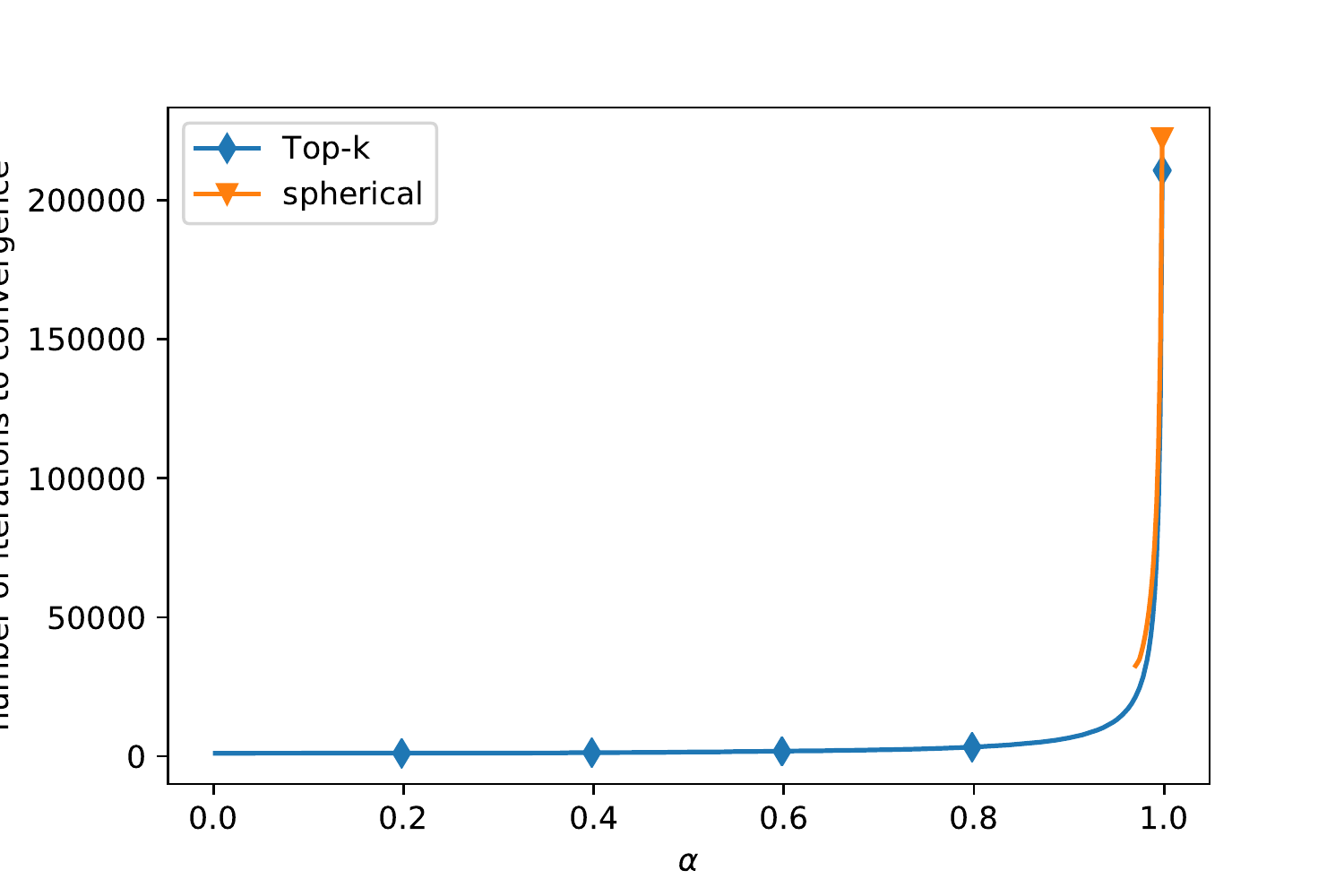}
    \includegraphics[scale=0.53]{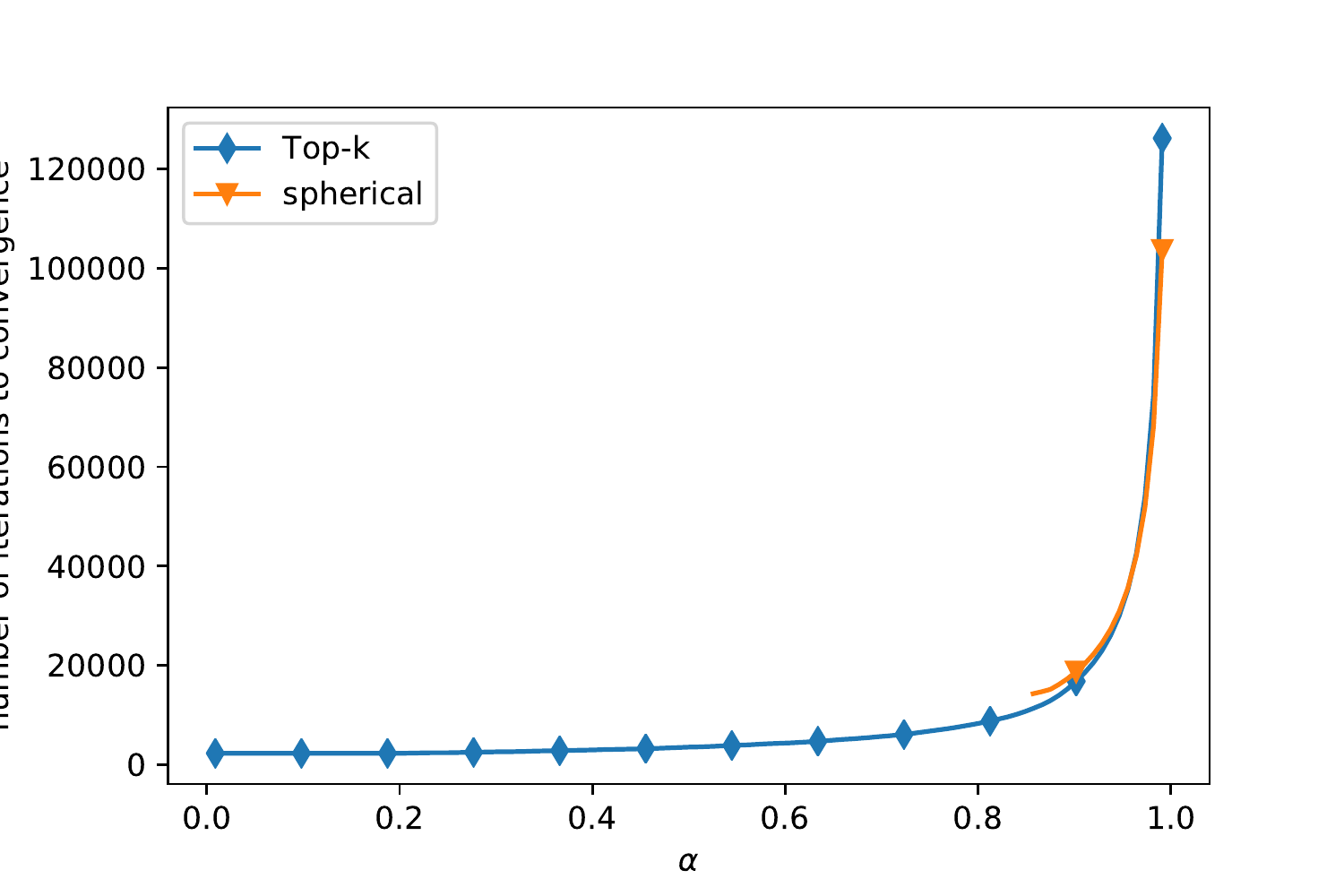}
\end{center} 
\caption{The first two plots correspond to ridge regression (Housing, Bodyfat datasets), while the next three plots correspond to regularized logistic regression (Breast Cancer, Madelon, Mushrooms datasets). This shows the number of iterations as a function of $\alpha$ for both Top-$k$ and Spherical compressions. Both curves look like $Y=\frac{1}{1-X}$, as predicted by the theory.}
\end{figure}


\clearpage
\bibliographystyle{plainnat}
\bibliography{references}

\begin{thebibliography}{42}
\providecommand{\natexlab}[1]{#1}
\providecommand{\url}[1]{\texttt{#1}}
\expandafter\ifx\csname urlstyle\endcsname\relax
  \providecommand{\doi}[1]{doi: #1}\else
  \providecommand{\doi}{doi: \begingroup \urlstyle{rm}\Url}\fi

\bibitem[Alistarh et~al.(2017{\natexlab{a}})Alistarh, Grubic, Li, Tomioka, and
  Vojnovic]{AGLTV}
Dan Alistarh, Demjan Grubic, Jerry Li, Ryota Tomioka, and Milan Vojnovic.
\newblock {QSGD}: {C}ommunication-efficient {SGD} via gradient quantization and
  encoding.
\newblock In \emph{Neural Information Processing Systems Conf. (NeurIPS)},
  2017{\natexlab{a}}.

\bibitem[Alistarh et~al.(2017{\natexlab{b}})Alistarh, Grubic, Li, Tomioka, and
  Vojnovic]{qsgd}
Dan Alistarh, Demjan Grubic, Jerry Li, Ryota Tomioka, and Milan Vojnovic.
\newblock {QSGD}: Communication-efficient sgd via gradient quantization and
  encoding.
\newblock In \emph{Neural Information Processing Systems Conf. (NeurIPS)},
  2017{\natexlab{b}}.

\bibitem[Alistarh et~al.(2018)Alistarh, Hoefler, Johansson, Khirirat,
  Konstantinov, and Renggli]{Alistarh-SparsGradMethods2018}
Dan Alistarh, Torsten Hoefler, Mikael Johansson, Sarit Khirirat, Nikola
  Konstantinov, and C\'{e}dric Renggli.
\newblock The convergence of sparsified gradient methods.
\newblock In \emph{Neural Information Processing Systems Conf. (NeurIPS)},
  2018.

\bibitem[Basu et~al.(2019)Basu, Data, Karakus, and Diggavi]{basu}
Debraj Basu, Deepesh Data, Can Karakus, and Suhas~N. Diggavi.
\newblock Qsparse-local-{SGD}: Distributed {SGD} with quantization,
  sparsification, and local computations.
\newblock In \emph{Neural Information Processing Systems Conf. (NeurIPS)},
  2019.

\bibitem[Bekkerman et~al.(2011)Bekkerman, Bilenko, and
  Langford]{bekkerman2011scaling}
Ron Bekkerman, Mikhail Bilenko, and John Langford.
\newblock \emph{Scaling up machine learning: Parallel and distributed
  approaches}.
\newblock Cambridge University Press, 2011.

\bibitem[Beznosikov et~al.(2020)Beznosikov, Horv{\'a}th, Richt{\'a}rik, and
  Safaryan]{bez20}
Alexandre Beznosikov, Samuel Horv{\'a}th, Peter Richt{\'a}rik, and Mher
  Safaryan.
\newblock On biased compression for distributed learning.
\newblock preprint arXiv:2002.12410, 2020.

\bibitem[B{\"o}r{\"o}czky and Wintsche(2003)]{BorWin}
K{\'a}roly B{\"o}r{\"o}czky and Gergely Wintsche.
\newblock Covering the sphere by equal spherical balls.
\newblock In Boris Aronov, Saugata Basu, J{\'a}nos Pach, and Micha Sharir,
  editors, \emph{Discrete and Computational Geometry: The Goodman-Pollack
  Festschrift}, pages 235--251. Springer, Berlin, Heidelberg, 2003.

\bibitem[Bu et~al.(2020)Bu, Gao, Zou, and Veeravalli]{YWSV-modelcompression}
Yuheng Bu, Weihao Gao, Shaofeng Zou, and Venugopal~V. Veeravalli.
\newblock Information-theoretic understanding of population risk improvement
  with model compression.
\newblock In \emph{AAAI Conference on Artificial Intelligence}, pages
  3300--3307, February 2020.

\bibitem[Cover and Thomas(2006)]{elements_IT}
Thomas~M. Cover and Joy~A. Thomas.
\newblock \emph{Elements of Information Theory (Wiley Series in
  Telecommunications and Signal Processing)}.
\newblock Wiley-Interscience, USA, 2006.

\bibitem[Dryden et~al.(2016)Dryden, Moon, Jacobs, and Essen]{Dryden2016:topk}
N.~Dryden, T.~Moon, S.~A. Jacobs, and B.~V. Essen.
\newblock Communication quantization for data-parallel training of deep neural
  networks.
\newblock In \emph{2nd Workshop on Machine Learning in HPC Environments
  (MLHPC)}, pages 1--8, Nov 2016.

\bibitem[Dumer(2007)]{Ilya2007}
Ilya Dumer.
\newblock Covering spheres with spheres.
\newblock \emph{Discrete {\&} Computational Geometry}, 38:\penalty0 665--679,
  2007.

\bibitem[Gao et~al.(2019)Gao, Liu, Wang, and Oh]{GLWO-modelcompression}
Weihao Gao, Yu-Han Liu, Chong Wang, and Sewoong Oh.
\newblock Rate distortion for model {C}ompression:{F}rom theory to practice.
\newblock In \emph{Int. Conf. Machine Learning (ICML)}, volume PMLR 97, pages
  2102--2111, 2019.

\bibitem[Horv\'{a}th et~al.(2019{\natexlab{a}})Horv\'{a}th, Ho, Horv\'{a}th,
  Sahu, Canini, and Richt\'{a}rik]{Cnat}
Samuel Horv\'{a}th, Chen-Yu Ho, \v{L}udov\'{i}t Horv\'{a}th, Atal~Narayan Sahu,
  Marco Canini, and Peter Richt\'{a}rik.
\newblock Natural compression for distributed deep learning.
\newblock preprint arXiv:1905.10988, 2019{\natexlab{a}}.

\bibitem[Horv\'{a}th et~al.(2019{\natexlab{b}})Horv\'{a}th, Kovalev,
  Mishchenko, Stich, and Richt\'{a}rik]{DIANA-VR}
Samuel Horv\'{a}th, Dmitry Kovalev, Konstantin Mishchenko, Sebastian Stich, and
  Peter Richt\'{a}rik.
\newblock Stochastic distributed learning with gradient quantization and
  variance reduction.
\newblock preprint arXiv:1904.05115, 2019{\natexlab{b}}.

\bibitem[Karimireddy et~al.(2019{\natexlab{a}})Karimireddy, Kale, Mohri, Reddi,
  Stich, and Suresh]{Karimireddy2019}
Sai~Praneeth Karimireddy, Satyen Kale, Mehryar Mohri, Sashank~J. Reddi,
  Sebastian~U. Stich, and Ananda~Theertha Suresh.
\newblock {SCAFFOLD: S}tochastic controlled averaging for on-device federated
  learning.
\newblock preprint arXiv:1910.06378, 2019{\natexlab{a}}.

\bibitem[Karimireddy et~al.(2019{\natexlab{b}})Karimireddy, Rebjock, Stich, and
  Jaggi]{karimireddy2019error}
Sai~Praneeth Karimireddy, Quentin Rebjock, Sebastian~U Stich, and Martin Jaggi.
\newblock Error feedback fixes {S}ign{SGD} and other gradient compression
  schemes.
\newblock preprint arXiv:1901.09847, 2019{\natexlab{b}}.

\bibitem[Khaled et~al.(2020)Khaled, Mishchenko, and
  Richt\'{a}rik]{localSGD-AISTATS2020}
Ahmed Khaled, Konstantin Mishchenko, and Peter Richt\'{a}rik.
\newblock Tighter theory for local {SGD} on identical and heterogeneous data.
\newblock In \emph{Int. Conf. Artificial Intelligence and Statistics
  (AISTATS)}, 2020.

\bibitem[Khirirat et~al.(2018)Khirirat, Feyzmahdavian, and
  Johansson]{khirirat2018distributed}
Sarit Khirirat, Hamid~Reza Feyzmahdavian, and Mikael Johansson.
\newblock Distributed learning with compressed gradients.
\newblock preprint arXiv:1806.06573, 2018.

\bibitem[Kochol(1994)]{Koch2}
Martin Kochol.
\newblock Constructive approximation of a ball by polytopes.
\newblock \emph{Mathematica Slovaca}, 44\penalty0 (1):\penalty0 99--105, 1994.

\bibitem[Kone\v{c}n\'{y} and Richt\'{a}rik(2018)]{RDME}
Jakub Kone\v{c}n\'{y} and Peter Richt\'{a}rik.
\newblock Randomized distributed mean estimation: accuracy vs communication.
\newblock \emph{Frontiers in Applied Mathematics and Statistics}, 4\penalty0
  (62):\penalty0 1--11, 2018.

\bibitem[Kone\v{c}n\'{y} et~al.(2016)Kone\v{c}n\'{y}, McMahan, Yu,
  Richt\'{a}rik, Suresh, and Bacon]{FEDLEARN}
Jakub Kone\v{c}n\'{y}, H.~Brendan McMahan, Felix Yu, Peter Richt\'{a}rik,
  Ananda~Theertha Suresh, and Dave Bacon.
\newblock Federated learning: strategies for improving communication
  efficiency.
\newblock In \emph{NIPS Private Multi-Party Machine Learning Workshop}, 2016.

\bibitem[Li et~al.(2020)Li, Kovalev, Qian, and Richt{\'a}rik]{AccCGD}
Zhize Li, Dmitry Kovalev, Xun Qian, and Peter Richt{\'a}rik.
\newblock Acceleration for compressed gradient descent in distributed and
  federated optimization.
\newblock In \emph{Int. Conf. Machine Learning (ICML)}, 2020.

\bibitem[Lin et~al.(2018)Lin, Han, Mao, Wang, and Dally]{LHMWD}
Yujun Lin, Song Han, Huizi Mao, Yu~Wang, and William~J. Dally.
\newblock Deep gradient compression: Reducing the communication bandwidth for
  distributed training.
\newblock In \emph{Int. Conf. Learning Representations (ICLR)}, 2018.

\bibitem[McMahan et~al.(2017)McMahan, Moore, Ramage, Hampson, and Ag\"{u}era~y
  Arcas]{FL2017-AISTATS}
H~Brendan McMahan, Eider Moore, Daniel Ramage, Seth Hampson, and Blaise
  Ag\"{u}era~y Arcas.
\newblock Communication-efficient learning of deep networks from decentralized
  data.
\newblock In \emph{Int. Conf. Artificial Intelligence and Statistics
  (AISTATS)}, 2017.

\bibitem[Mishchenko et~al.(2019)Mishchenko, Gorbunov, Tak\'a\v{c}, and
  Richt\'arik]{MGTR}
Konstantin Mishchenko, Eduard Gorbunov, Martin Tak\'a\v{c}, and Peter
  Richt\'arik.
\newblock Distributed learning with compressed gradient differences.
\newblock preprint arXiv:1901.09269, 2019.

\bibitem[Safaryan et~al.(2020)Safaryan, Shulgin, and
  Richt\'{a}rik]{up_kashin_2020}
Mher Safaryan, Egor Shulgin, and Peter Richt\'{a}rik.
\newblock Uncertainty principle for communication compression in distributed
  and federated learning and the search for an optimal compressor.
\newblock preprint arXiv:2002.08958, 2020.

\bibitem[Schmidhuber(2015)]{Sch}
J\"urgen Schmidhuber.
\newblock Deep learning in neural networks: An overview.
\newblock In \emph{Neural networks}, volume~61, page 85–117, 2015.

\bibitem[Seide et~al.(2014)Seide, Fu, Droppo, Li, and Yu]{SFDLY}
Frank Seide, Hao Fu, Jasha Droppo, Gang Li, and Dong Yu.
\newblock 1-bit stochastic gradient descent and application to data-parallel
  distributed training of speech {DNN}s.
\newblock In \emph{Fifteenth Annual Conference of the International Speech
  Communication Association}, 2014.

\bibitem[Shannon(1948)]{shannon-1}
C.E. Shannon.
\newblock Coding theorems for a discrete source with a fidelity criterion.
\newblock \emph{IRE Nat. Conv. Rec.}, 27:\penalty0 379--423,623--656, 1948.

\bibitem[Shannon(1959)]{shannon-2}
C.E. Shannon.
\newblock A mathematical theory of communication.
\newblock \emph{Bell Syst. Tech. J.}, 4:\penalty0 142--163, 1959.

\bibitem[Stich(2019)]{Stich2018:localsgd}
Sebastian~U. Stich.
\newblock Local {SGD} converges fast and communicates little.
\newblock In \emph{Int. Conf. Learning Representations (ICLR)}, 2019.

\bibitem[Stich and Karimireddy(2019)]{stich2019}
Sebastian~U. Stich and Sai~Praneeth Karimireddy.
\newblock The error-feedback framework: Better rates for {SGD} with delayed
  gradients and compressed communication.
\newblock preprint arXiv:1909.05350, 2019.

\bibitem[Suresh et~al.(2017)Suresh, Yu, Kumar, and McMahan]{Suresh2017}
Ananda~Theertha Suresh, Felix~X. Yu, Sanjiv Kumar, and H.~Brendan McMahan.
\newblock Distributed mean estimation with limited communication.
\newblock In \emph{Int. Conf. Machine Learning (ICML)}, 2017.

\bibitem[Tang et~al.(2019)Tang, Yu, Lian, Zhang, and Liu]{DoubleSqueeze2019}
Hanlin Tang, Chen Yu, Xiangru Lian, Tong Zhang, and Ji~Liu.
\newblock $\texttt{DoubleSqueeze}$: Parallel stochastic gradient descent with
  double-pass error-compensated compression.
\newblock In \emph{Int. Conf. Machine Learning}, volume PMLR 97, pages
  6155--6165, 2019.

\bibitem[Vaswani et~al.(2019)Vaswani, Bach, and Schmidt]{Vaswani2019-overparam}
Sharan Vaswani, Francis Bach, and Mark Schmidt.
\newblock Fast and faster convergence of {SGD} for over-parameterized models
  and an accelerated perceptron.
\newblock In \emph{Int. Conf. Artificial Intelligence and Statistics
  (AISTATS)}, 2019.

\bibitem[Vogels et~al.(2019)Vogels, Karimireddy, and Jaggi]{vogels}
Thijs Vogels, Sai~Praneeth Karimireddy, and Martin Jaggi.
\newblock Power{SGD}: Practical low-rank gradient compression for distributed
  optimization.
\newblock In \emph{Neural Information Processing Systems Conf. (NeurIPS)},
  2019.

\bibitem[Wang et~al.(2018)Wang, Sievert, Liu, Charles, Papailiopoulos, and
  Wright]{WSLCPW}
Hongyi Wang, Scott Sievert, Shengchao Liu, Zachary Charles, Dimitris
  Papailiopoulos, and Stephen Wright.
\newblock Atomo: Communication-efficient learning via atomic sparsification.
\newblock In \emph{Neural Information Processing Systems Conf. (NeurIPS)},
  2018.

\bibitem[Wangni et~al.(2018)Wangni, Wang, Liu, and Zhang]{tonko}
Jianqiao Wangni, Jialei Wang, Ji~Liu, and Tong Zhang.
\newblock Gradient sparsification for communication-efficient distributed
  optimization.
\newblock In \emph{Neural Information Processing Systems Conf. (NeurIPS)},
  2018.

\bibitem[Wen et~al.(2017)Wen, Xu, Yan, Wu, Wang, Chen, and Li]{terngrad}
Wei Wen, Cong Xu, Feng Yan, Chunpeng Wu, Yandan Wang, Yiran Chen, and Hai Li.
\newblock Terngrad: Ternary gradients to reduce communication in distributed
  deep learning.
\newblock In \emph{Neural Information Processing Systems Conf. (NeurIPS)}, page
  1509–1519, 2017.

\bibitem[Zhang et~al.(2017)Zhang, Li, Kara, Alistarh, Liu, and Zhang]{ZLKALZ}
Hantian Zhang, Jerry Li, Kaan Kara, Dan Alistarh, Ji~Liu, and Ce~Zhang.
\newblock {ZipML}: Training linear models with end-to-end low precision, and a
  little bit of deep learning.
\newblock In \emph{Int. Conf. Machine Learning (ICML)}, volume~70, page
  4035–4043, 2017.

\bibitem[Zhang et~al.(2013)Zhang, Duchi, Jordan, and Wainwright]{ZDJWM}
Yuchen Zhang, John Duchi, Michael~I Jordan, and Martin~J Wainwright.
\newblock Information-theoretic lower bounds for distributed statistical
  estimation with communication constraints.
\newblock In \emph{Advances in Neural Information Processing Systems 26
  (NIPS)}, pages 2328--2336, 2013.

\bibitem[Zheng et~al.(2019)Zheng, Huang, and Kwok]{zheng}
Shuai Zheng, Ziyue Huang, and James~T. Kwok.
\newblock Communication-efficient distributed blockwise momentum {SGD} with
  error-feedback.
\newblock In \emph{Neural Information Processing Systems Conf. (NeurIPS)},
  2019.

\end{thebibliography}

\newpage

\appendix

\part*{Appendix}

\section{Discussion on Finite Precision Floats}

In the paper, we formally consider compression of arbitrary vectors of $\mathbb{R}^d$, but in practice, in computers, real numbers are represented with finite-precision floats, typically using 32 bits. As a consequence, a nonzero real cannot be too small, too large, and its precision is limited. Compression, like every operation, amounts to a sequence of elementary arithmetic operations, each being exact only up to so-called `machine precision', which is difficult to model. Thus, we could restrict ourselves to vectors in 
$\{0\}\cup \{x : r\leq \|x\|\leq R\}$ for some $0<r<R$, instead of the whole space $\mathbb{R}^d$, but this would not account for the finite precision of floats, and since this set is not stable by arithmetic operations, this would not be enough to model the setting in all rigor. So, we prefer to stick with the general setting of $\mathbb{R}^d$ throughout the paper, since there is no issue with limit cases of  very large or very small nonzero numbers, that would deserve a particular discussion; the finite precision makes them   automatically  irrelevant in practice. In other words, the finite representation of reals is not more problematic with compression than for any learning or optimization task, and more generally for the numerical implementation of any mathematical algorithm.

In particular, considering floats with 32 bits, a non-compressed vector $x$ of $\mathbb{R}^d$ is actually represented using $32d$ bits. When we decompose $x$ into its `gain' $\|x\|$ and `shape' $x/\|x\|\in\mathbb{S}^d$, there is no trickery in considering that $\|x\|$ is represented using 31 bits (the sign bit can be omitted) and that $x/\|x\|$ is actually compressed. The multiplication by $\|x\|$ at decompression has finite precision, just like any arithmetic operation.

\section{Proofs for Section \ref{sec:operator-classes}}

\subsection{Relaxed classes of compression operators}

As mentioned in the paper and in Appendix~A, any operator from $\U(\omega),\,\B(\alpha),\,\C(\alpha)$ cannot be encoded with a finite number of bits. For example, in the case of $\B(\alpha)$, the inequality (\ref{class-biased}) breaks near $x=0$ and $\|x\|\rightarrow \infty$. However, in practice, machine floats have finite precision and we do not deal with nonzero values that are too small and too large. To reflect this practical aspect into the theory, we adjust the definition of $\alpha$-contractive compressors $\B(\alpha)$ and consider the following class instead. For the sake of concreteness, we carry out the discussion for the class $\B(\alpha)$ only and note that analogous observations can be adopted for other two classes.

\begin{definition}[Practical $\alpha$-contractive compressions]
Let $\alpha\in(0,1)$ and $R\ge1$ be fixed. We denote by $\B^1(\alpha, R)$ the class of (possibly randomized) operators $\cC\colon \R^d\to\R^d$ such that
\begin{align*}
\E\[\|\mathcal{C}(x)-x\|^2\] \le \alpha\|x\|^2, \quad &\text{if} \quad \nicefrac{1}{R}\le\|x\|\le R \\
\cC(x) = 0, \quad &\text{if} \quad \|x\| < \nicefrac{1}{R} \;\text{ or }\; \|x\| > R.
\end{align*}
\end{definition}

Note that, for simplicity, we take $r=\nicefrac{1}{R}$.

The class of all $\alpha$-contractive operators $\B(\alpha)$ can be seen as the limit of the class $\B^1(\alpha, R)\to\B(\alpha)$ as $R\to\infty$. 
The advantage of the class $\B^1(\alpha, R)$ compared to $\B(\alpha)$ is that it allows an encoding with finite number of bits. Next, we relax the definition of $\B^1(\alpha, R)$ as follows:

\begin{definition}[Weak $\alpha$-contractive compressions]
Let $\alpha\in(0,1)$ and $R>0$ be fixed. We denote by $\B^2(\alpha, R)$ the class of (possibly randomized) operators $\cC\colon \R^d\to\R^d$ such that
\begin{align*}
\E\[\|\mathcal{C}(x)-x\|^2\] \le \alpha R^2, \quad &\text{if} \quad \|x\| \le R \\
\cC(x) = 0, \quad &\text{if} \quad \|x\| > R.
\end{align*}
\end{definition}

The following simple lemma shows that the latter class is much more general and contains the first class.
\begin{lemma} If $R\ge\alpha^{-\nicefrac{1}{4}}$ then $\B^1(\alpha, R) \subset \B^2(\alpha, R)$. \end{lemma}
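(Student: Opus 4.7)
The plan is to take an arbitrary $\cC\in\B^1(\alpha,R)$ and verify directly that it satisfies the two defining conditions of $\B^2(\alpha,R)$, splitting according to the three regimes that appear in the definition of $\B^1(\alpha,R)$: (i) $\|x\|>R$, (ii) $1/R\le\|x\|\le R$, and (iii) $\|x\|<1/R$. Regime (i) is immediate because both classes impose $\cC(x)=0$ on that set. Regime (ii) is also easy: the $\B^1$ bound $\E[\|\cC(x)-x\|^2]\le\alpha\|x\|^2$ combined with $\|x\|\le R$ gives $\E[\|\cC(x)-x\|^2]\le\alpha R^2$, which is the required $\B^2$ bound.

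The only regime that uses the hypothesis $R\ge\alpha^{-1/4}$ is regime (iii). Here $\B^1$ forces $\cC(x)=0$, so $\E[\|\cC(x)-x\|^2]=\|x\|^2<1/R^2$. For the $\B^2$ bound we need $\|x\|^2\le\alpha R^2$; since we only know $\|x\|<1/R$, the sufficient condition is $1/R^2\le\alpha R^2$, i.e.\ $R^4\ge 1/\alpha$, which is exactly $R\ge\alpha^{-1/4}$. Under the assumed lower bound on $R$ the $\B^2$ inequality holds in this regime as well.

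Combining the three cases shows that $\cC\in\B^2(\alpha,R)$, proving the inclusion. The only non-trivial step is the small-norm regime (iii), where zeroing out $x$ produces error $\|x\|^2$; this error is tiny in absolute terms but needs to be compared against the uniform threshold $\alpha R^2$ used by $\B^2$, which is where the assumption $R\ge\alpha^{-1/4}$ becomes necessary and tight.
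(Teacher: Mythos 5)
Your proof is correct and matches the paper's own argument case for case: the small-norm regime uses $\cC(x)=0$ to get $\E[\|\cC(x)-x\|^2]=\|x\|^2<1/R^2\le\alpha R^2$, and the middle regime uses $\|x\|\le R$ directly; you merely spell out the $\|x\|>R$ case, which the paper leaves implicit.
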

\begin{proof}
Let $\cC\in\B^1(\alpha, R)$ with $\alpha R^4 \ge 1$. If $\|x\| < \nicefrac{1}{R}$ then
$$
\E\[\|\mathcal{C}(x)-x\|^2\] = \|x\|^2 < \frac{1}{R^2} \le \alpha R^2.
$$

If $\nicefrac{1}{R}\le\|x\|\le R$ then
$$
\E\[\|\mathcal{C}(x)-x\|^2\] \le \alpha \|x\|^2 \le \alpha R^2.
$$
\end{proof}

Now, the lower bound $\alpha\, 4^{\nicefrac{b}{d}} \ge 1$ was proved for any $\cC\in\B^2(\alpha, R)$ and hence for any $\cC\in\B^1(\alpha, R)$ with sufficiently large $R$. Since this lower bound is independent of $R$, it can be associated with the limit class $\B(\alpha)$ under the described practical caveat.

Lastly, we define another class of contractive compression operators which will be used to provide some examples related to the optimality.

\begin{definition}[Spherical $\alpha$-contractive compressions]
Let $\alpha\in(0,1)$ and $R\ge1$ be fixed. We denote by $\B^3(\alpha, R)$ the class of (possibly randomized) operators $\cC\colon \R^d\to\R^d$ such that
\begin{align*}
\E\[\|\mathcal{C}(x)-x\|^2\] \le \alpha, \quad &\text{if} \quad \|x\| = 1 \\
\cC(x) = \|x\| \, \cC\(\nicefrac{x}{\|x\|}\), \quad &\text{if} \quad \nicefrac{1}{R} \le \|x\| \le R \\
\cC(x) = 0, \quad &\text{if} \quad \|x\| < \nicefrac{1}{R} \;\text{ or }\; \|x\| > R.
\end{align*}
\end{definition}

The advantage of this class is that any operator $\cC\in\B^3(\alpha, R)$ can be uniquely identified by its restriction $\cC\colon\sphere^{d}\to\R^d$ to the unit sphere. To compress a given vector $x\in\R^d$, we compress its projection $\nicefrac{x}{\|x\|}\in\sphere^{d}$ by applying $\cC$ and we send $\cC\(\nicefrac{x}{\|x\|}\)$ together with the norm $\|x\|\in\R$.

Subsequently, we will concentrate on the compression of unit vectors with as few bits as possible.

\begin{lemma} $\B^3(\alpha, R) \subset \B^1(\alpha, R)$. \end{lemma}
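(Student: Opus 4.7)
The plan is to unpack the two definitions and observe that the spherical structure imposed by $\B^3(\alpha, R)$ automatically propagates the contractivity from the unit sphere to the whole annulus $\{x : 1/R \le \|x\| \le R\}$ by homogeneity. I would fix $\cC\in\B^3(\alpha,R)$ and argue case by case according to the norm of $x$.

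First, the extreme cases are immediate: if $\|x\|<1/R$ or $\|x\|>R$, the definition of $\B^3(\alpha,R)$ forces $\cC(x)=0$, which is exactly what $\B^1(\alpha,R)$ requires. So no inequality needs to be checked in that regime.

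The remaining case is $1/R\le \|x\|\le R$. Here I would use the defining identity $\cC(x)=\|x\|\,\cC(x/\|x\|)$ and the fact that $u\eqdef x/\|x\|$ lies on the unit sphere, so the spherical contractivity $\E[\|\cC(u)-u\|^2]\le \alpha$ applies. Then a one-line computation using the homogeneity of the Euclidean norm yields
\begin{equation*}
\E\bigl[\|\cC(x)-x\|^2\bigr]
= \E\bigl[\|\,\|x\|\cC(u)-\|x\|u\,\|^2\bigr]
= \|x\|^2\,\E\bigl[\|\cC(u)-u\|^2\bigr]
\le \alpha\|x\|^2,
\end{equation*}
which is exactly the defining inequality of $\B^1(\alpha,R)$ in this regime.

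There is essentially no obstacle here; the lemma is a direct consequence of the spherical scaling rule built into $\B^3(\alpha,R)$, and the proof is a single application of the 2-homogeneity of $\|\cdot\|^2$ together with the hypothesis on unit vectors. The only thing to be mildly careful about is that the scaling identity is stated for $x$ in the annulus (not on all of $\R^d$), so one must split the argument at $\|x\|=1/R$ and $\|x\|=R$ rather than trying to use the identity globally; but this split matches exactly the case analysis required by $\B^1(\alpha,R)$.
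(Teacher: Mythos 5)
Your proof is correct and follows essentially the same approach as the paper's: in the annulus $1/R\le\|x\|\le R$ you invoke the spherical scaling identity $\cC(x)=\|x\|\,\cC(x/\|x\|)$, pull out $\|x\|^2$ by 2-homogeneity of the squared norm, and apply the unit-sphere contractivity bound, while the boundary cases are handled directly since $\cC(x)=0$ in both definitions. The paper does exactly this, merely compressing the trivial cases into a closing remark.
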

\begin{proof}
Let $\cC\in\B^3(\alpha, R)$. If $\nicefrac{1}{R} \le \|x\| \le R$ then
\begin{align*}
\E\[\|\mathcal{C}(x)-x\|^2\]
&= \E\[\|\nicefrac{\mathcal{C}(x)}{\|x\|} - \nicefrac{x}{\|x\|}\|^2\] \, \|x\|^2 \\
&= \E\[\|\mathcal{C}\(\nicefrac{x}{\|x\|}\) - \nicefrac{x}{\|x\|}\|^2\] \, \|x\|^2 \\
&\le \alpha \|x\|^2.
\end{align*}
The other cases are trivial.
\end{proof}

\begin{lemma}[Lemma 1 in \cite{up_kashin_2020}]\label{lem:inclusion}
If $\mathcal{C}\in\U(\omega)$, then $\frac{1}{\omega+1}\mathcal{C}\in\B(\frac{\omega}{\omega+1})$.
\end{lemma}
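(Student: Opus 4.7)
The plan is a direct computation: set $\cC'(x) \eqdef \frac{1}{\omega+1}\cC(x)$ and expand $\E\bigl[\|\cC'(x) - x\|^2\bigr]$ using only unbiasedness and the variance bound (\ref{class-unbiased}). No geometric or measure-theoretic machinery is needed; the entire content is a bias--variance decomposition followed by choosing the optimal shrinkage factor.

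First, I would expand the squared norm:
\[
\E\bigl[\|\cC'(x) - x\|^2\bigr] = \frac{1}{(\omega+1)^2}\E\bigl[\|\cC(x)\|^2\bigr] - \frac{2}{\omega+1}\<\E[\cC(x)], x\> + \|x\|^2.
\]
The unbiasedness $\E[\cC(x)] = x$ immediately turns the cross term into $-\frac{2}{\omega+1}\|x\|^2$. Next, I would rewrite $\E\bigl[\|\cC(x)\|^2\bigr]$ via the same unbiasedness trick, namely $\|\cC(x)\|^2 = \|\cC(x) - x\|^2 + 2\<\cC(x) - x, x\> + \|x\|^2$, whose expectation yields $\E\bigl[\|\cC(x)\|^2\bigr] = \E\bigl[\|\cC(x)-x\|^2\bigr] + \|x\|^2 \le (\omega+1)\|x\|^2$ by (\ref{class-unbiased}).

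Substituting both identities back gives
\[
\E\bigl[\|\cC'(x) - x\|^2\bigr] \le \frac{(\omega+1)\|x\|^2}{(\omega+1)^2} - \frac{2\|x\|^2}{\omega+1} + \|x\|^2 = \left(1 - \frac{1}{\omega+1}\right)\|x\|^2 = \frac{\omega}{\omega+1}\|x\|^2,
\]
which is exactly the condition for $\cC' \in \B\bigl(\frac{\omega}{\omega+1}\bigr)$ in Definition~\ref{def:biased}.

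There is no real obstacle here; the only thing worth remarking is that the shrinkage factor $\frac{1}{\omega+1}$ is in fact the one that minimizes $\E\bigl[\|t\cC(x) - x\|^2\bigr]$ over $t\in\R$, which is why the resulting contraction constant $\frac{\omega}{\omega+1}$ is the best one attainable by this scaling construction. One could alternatively derive the result by first noting $\argmin_{t} \E\bigl[\|t\cC(x)-x\|^2\bigr] = \frac{\|x\|^2}{\E[\|\cC(x)\|^2]}$, bounding this minimizer using $\E[\|\cC(x)\|^2]\le(\omega+1)\|x\|^2$, and plugging in, but the direct expansion above is cleaner.
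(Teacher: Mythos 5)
Your proof is correct and is the standard argument for this fact. The paper does not reprove this lemma---it cites it directly from \cite{up_kashin_2020}---and your bias--variance expansion (using $\E[\cC(x)]=x$ to kill the cross term and to get $\E[\|\cC(x)\|^2]\le(\omega+1)\|x\|^2$, then substituting) is exactly the argument given there; the closing remark that $\frac{1}{\omega+1}$ is the optimal shrinkage factor is a nice observation that motivates why this particular contraction constant appears.
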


\subsection{Two senses of optimality for compression: Proof of Proposition \ref{prop:cgd}}

If $\cC\in\B(\alpha)$ with $\alpha\in[0,1)$, then to minimize $L$-smooth and $\mu$-strongly convex function $f$, CGD needs $\cO\(\frac{1}{1-\alpha}\kappa\log\frac{1}{\epsilon}\)$ steps for $\epsilon$-accuracy, where $\kappa=\frac{L}{\mu}$ is the condition number of $f$ (see e.g. Theorem 13 of \cite{bez20}). If we choose to not use compression operator and send uncompressed gradients ($\alpha=0$) then we get iteration complexity of GD $\cO\(\kappa\log\frac{1}{\epsilon}\)$, which is $\frac{1}{1-\alpha}$ times smaller than for CGD. If compression operator is unbiased with variance $\alpha\ge0$, then the iteration complexity becomes $\cO\((1+\alpha)\kappa\log\frac{1}{\epsilon}\)$ (see e.g. \cite{khirirat2018distributed}). Alternatively, for an unbiased compression operator $\cC\in\U(\alpha)$ one has $\frac{1}{1+\alpha}\cC\in\B\(\frac{\alpha}{1+\alpha}\)$, which implies the iteration complexity $\cO\(\frac{1}{1-\frac{\alpha}{1+\alpha}}\kappa\log\frac{1}{\epsilon}\) = \cO\((1+\alpha)\kappa\log\frac{1}{\epsilon}\)$.

\subsection{Dimension-tolerant compression schemes: Proof of Theorem \ref{thm:dim-tol-op}}

Statement (i) directly follows from (\ref{up-alpha}) and Lemma \ref{lem:inclusion}, since $b \ge d \, \log_4 \frac{1}{\alpha}$ in the biased case and $b \ge d \, \log_4 \frac{\omega}{\omega+1}$ in the unbiased case.

Statement (ii): first we construct an unbiased compression operator on the unit sphere, which together with $\|x\|$ factor will prove the unbiased case. It follows from \cite{Koch2} (see also \cite{up_kashin_2020}, Section 3) that one can construct an unbiased compression operator $\cC\colon\sphere^{d}\to\R^d$ with $\omega = \cO\(\frac{d}{\log\nicefrac{m}{d}}\)$ variance and $\log m$ bits where the dependence of $m$ from $d$ can be up to exponential. Choosing $m=2^{c d - 31}$, we obtain a number of $c d$  bits to encode $\cC(x/\|x\|)$ together with $\|x\|$ and variance
$$
\omega = \cO\(\frac{d}{\log m - \log d}\) = \cO\(\frac{d}{c d - \log d - 31}\) = \cO(\nicefrac{1}{c}).
$$

For the biased case, Lemma \ref{lem:inclusion} implies that the operator $\frac{1}{\omega+1}\cC$ has variance
$$
\alpha = 1 - \frac{1}{\omega+1} = 1 - \frac{1}{\cO(\nicefrac{1}{c})+1} = \frac{1}{1+\Omega(c)}
$$
and uses the same number $c d$ of bits as $\cC$.

\section{Proofs for Section \ref{sec:wca}}

\subsection{Asymptotic tightness of the lower bound (\ref{up-alpha}): Proof of Theorem \ref{thm:tight-construction-alpha}}

First of all, note that to construct a $\alpha$-contractive compression operator $\cC\colon\sphere^{d}\to\R^d$ on the unit sphere, it is sufficient to cover the unit sphere $\sphere^{d}$ by spherical caps generated from balls of radius $\sqrt{\alpha}$. To see this, let $B^d(x^0,\sqrt{\alpha})$ be the ball of radius $\sqrt{\alpha}$ and center $x^0\in\R^d$ and $C^d(x^0,\sqrt{\alpha}) \eqdef B^d(x^0,\sqrt{\alpha})\cap\sphere^{d}$ be the corresponding spherical cap. Then compressing all points $x\in C^d(x^0,\sqrt{\alpha})$ to the center $x^0$ (i.e. $\cC(x)=x^0$) we preserve $\alpha$-contractive property $\|\cC(x) - x\|^2 \le \alpha$ since $\|\cC(x)-x\| = \|x^0-x\| \le \sqrt{\alpha}$.

It can be shown that in order to maximize the surface area of $C^d(x^0,\sqrt{\alpha})$, the center $x^0$ should be on the sphere of radius $\sqrt{1-\alpha}$, namely $\|x^0\| = \sqrt{1-\alpha}$. Based on the formula\footnote{see \url{https://en.wikipedia.org/wiki/Spherical_cap\#Hyperspherical_cap}} for the surface area of spherical caps, we compute the normalized surface area of $C^d(x^0,\sqrt{\alpha})$ to be $\frac{1}{2}I_{\alpha}(\frac{d-1}{2}, \frac{1}{2})$. Thus, $C^d(x^0,\sqrt{\alpha})$ covers $\frac{1}{2}I_{\alpha}(\frac{d-1}{2}, \frac{1}{2})$ portion of the unit sphere $\sphere^{d}$, where $I$ is the regularized incomplete beta function
\begin{equation}\label{def:ri-beta}
I_p(a,b) = \frac{B(p;a,b)}{B(a,b)} = \frac{\int_0^p t^{a-1}(1-t)^{b-1}\,dt}{\int_0^1 t^{a-1}(1-t)^{b-1}\,dt}, \quad a,b > 0,\, p\in[0,1].
\end{equation}

Next, we use the following result on covering the sphere with balls:

\begin{theorem}[see Theorem 1 in \cite{BorWin}]
For any $d\ge 3$ and $r\in(0,1)$, the unit sphere $\sphere^{d}$ can be covered with balls of radius $r$ in a way that no point of $\;\sphere^{d}$ is covered more than $400\, d\ln d$ times.
\end{theorem}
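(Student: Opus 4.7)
The plan is to prove this via a probabilistic covering combined with a thinning step, in the spirit of Rogers' classical theorem on covering $\R^n$ by translates of a convex body with density $n\log n + \cO(n)$. Passing from Euclidean balls of radius $r$ to spherical caps $C^d(y,r)$ of normalized area $\sigma(r)$ (computable via the regularized incomplete beta function already introduced in the paper), the question becomes whether we can find $N$ caps whose union is $\sphere^d$ such that every point lies in at most $400\,d\ln d$ of them.

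The first step is a random construction. Draw $N$ iid uniform centers $y_1,\dots,y_N$ on $\sphere^d$. For any fixed $x\in\sphere^d$, the multiplicity $M(x)=\sum_{i=1}^N \I\{x\in C^d(y_i,r)\}$ is $\mathrm{Bin}(N,\sigma)$ with mean $N\sigma$. I would calibrate $N\sigma \asymp d\ln d$, giving expected multiplicity of the right order and, by a Chernoff bound, $\P(M(x)>400\,d\ln d)\le e^{-\Omega(d\ln d)}$. To promote this pointwise estimate to a uniform one, take a $\delta$-net $\mathcal N\subset\sphere^d$ of cardinality $e^{\cO(d\log(1/\delta))}$ and apply a union bound on $\mathcal N$ simultaneously for the multiplicity tail and for coverage (via $\P(x\text{ uncovered})\le (1-\sigma)^N\le e^{-N\sigma}$); then pass from net to full sphere using the near-constancy of cap indicators at scale $\delta\ll r$.

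The main obstacle is the $r$-independence of the constant $400$. A plain iid scheme forces $N\sigma\gtrsim d\log(1/\sigma)\asymp d^2\log(1/r)$ in order for the union bound on coverage to close, producing multiplicity that blows up as $r\to 0$. To eliminate the $\log(1/r)$ factor, I would follow the B\"or\"oczky--Wintsche strategy of stratifying in $r$. For moderate $r$ (say $r\gtrsim 1/\sqrt{d}$) one has $\log(1/r)=\cO(\log d)$ and the random construction already yields the target bound. For small $r$, replace iid placement with a semi-deterministic lattice-type construction on $\sphere^d$, obtained by pulling back a perturbed Euclidean lattice in $\R^d$ under a Lipschitz projection (say from a $\sqrt{d}$-enlarged coordinate chart around each point); the multiplicity there is controlled by a volumetric ratio of two spherical caps of comparable angular radii rather than by net entropy, which strips the $\log(1/r)$ dependence and leaves a bound depending on $d$ alone. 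Patching the two regimes near $r\sim 1/\sqrt{d}$ and carefully tracking the constants from the Chernoff step, the net-size overhead, and the volumetric ratio yields the stated uniform $400\,d\ln d$ bound.
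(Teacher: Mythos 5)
The paper does not prove this theorem; it imports it verbatim as Theorem~1 of B\"or\"oczky--Wintsche and uses it as a black box, so there is no internal argument to compare against. The only question is whether your blind sketch constitutes a proof, and it does not.

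The moderate-$r$ half is essentially right in outline (random centers, Chernoff on the binomial multiplicity, a $\delta$-net at scale $\delta\asymp r$, coverage via $\P(\text{uncovered})\le e^{-N\sigma}$), though the union-bound cost is $N\sigma\gtrsim\log(\text{net size})\asymp d\log(1/r)\asymp\log(1/\sigma)$, not $d\log(1/\sigma)\asymp d^2\log(1/r)$: a $\delta$-net at scale $r$ on the $(d-1)$-dimensional sphere has $\log$-cardinality $\cO(d\log(1/r))$. Your qualitative diagnosis --- an unavoidable $\log(1/r)$ penalty from the iid scheme --- is correct, but the constants in your threshold analysis carry a spurious extra factor of $d$.

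The substantive gap is the small-$r$ regime, which is where the theorem actually lives: the claim is a density $\cO(d\ln d)$ covering uniformly over \emph{all} $r\in(0,1)$. ``Pull back a perturbed Euclidean lattice under a Lipschitz projection'' is a phrase, not a construction. To close it you would need to (a) fix a chart and bound its bi-Lipschitz distortion uniformly over a cap of macroscopic angular size, (b) glue finitely many charts so that chart overlaps do not multiply the final multiplicity, and (c) invoke a Euclidean covering of $\R^{d-1}$ by balls with density $\cO(d\ln d)$ --- and Rogers' $d\ln d$ covering-density theorem is achieved by randomized periodic constructions, not by lattices, so ``lattice-type'' is already the wrong primitive. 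The sentence asserting that multiplicity is ``controlled by a volumetric ratio of two spherical caps of comparable angular radii'' replaces the estimates (a)--(b) would have to supply with a hope that they come out right. As written, you have reconstructed the \emph{strategy} of B\"or\"oczky--Wintsche (split on $r$, probabilistic covering above threshold, chart-and-patch below), but not the proof; the hard half is left at the level of a gesture.
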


Let $m$ be the number of balls of radius $\sqrt{\alpha}$ that cover the whole unit sphere with density at most $400 \,d\ln d$. This implies that
$$
m \, \frac{1}{2}I_{\alpha}\(\frac{d-1}{2}, \frac{1}{2}\) \le 400\, d\ln d.
$$

Now these $m$ balls can be encoded using $b=\lceil\log m\rceil$ bits, so that $m\ge 2^{b-1}$. Therefore
\begin{equation}\label{temp-lower-bound}
2^b \, I_{\alpha}\(\frac{d-1}{2}, \frac{1}{2}\) \le 1600\, d\ln d.
\end{equation}

It remains to lower bound the function $I$, which we do as follows
\begin{align*}
I_{\alpha}\(\frac{d-1}{2}, \frac{1}{2}\) 
&= \frac{1}{B\(\frac{d-1}{2},\frac{1}{2}\)} \int_0^{\alpha} t^{\frac{d-3}{2}}(1-t)^{-\frac{1}{2}}\;dt = \frac{\Gamma\(\frac{d}{2}\)}{\Gamma\(\frac{d-1}{2}\)\Gamma\(\frac{1}{2}\)} \int_0^{\alpha} t^{\frac{d-3}{2}}(1-t)^{-\frac{1}{2}}\;dt \\
&\ge \frac{1}{\sqrt{\pi}} \int_0^{\alpha} t^{\frac{d-3}{2}}\;dt = \frac{2}{\sqrt{\pi}(d-1)} \alpha^{\frac{d-1}{2}} \ge \frac{\alpha^{\nicefrac{d}{2}}}{d}.
\end{align*}

Applying this lower bound to (\ref{temp-lower-bound}) and making some simplifications we get
\begin{equation}
2^b \, \alpha^{\nicefrac{d}{2}} \le 1600\, d^2\ln d,
\end{equation}
which is equivalent to (\ref{tight-up-bound}). Finally, since $1\le\sqrt[d]{d}\to1$ as $d\to\infty$, we can make the right hand side of (\ref{tight-up-bound}) smaller than $1+\epsilon$ for any fixed $\epsilon$ just by choosing a large $d$.

\subsection{Deterministic-biased version of SD: Proof of Theorem \ref{thm:DSD}}

{\bf Compression operator and variance bound.} To compress a given nonzero vector $x\in\R^d$, we first compress the normalized vector $u = x/\|x\|\in\sphere^{d}$ and then rescale it. To quantize the coordinates of unit vector $u$, we apply dithering with levels $2k_i h,\; k_i\ge 0$, where $h = \sqrt{\nicefrac{\nu}{d}}$ is the half-step and $\nu\ge 0$ is a free parameter of the compression operator. For each $u_i, i\in[d]$ we choose the nearest level so that $||u_i| - 2k_ih|\le h$. Letting $\hat{u}_i = \sign(u_i) \, 2k_ih$ we have $|u_i-\hat{u}_i|\le h$ for all $i\in[d]$. Therefore
$$
\|u - \hat{u}\|^2 = \sum_{i=1}^d (u_i-\hat{u}_i)^2 \le d h^2 = \nu.
$$

Note that, after rescaling with $\|x\|$, this gives a compression with variance at most $\nu$. However, $\|x\|$ is not always the best option. Specifically, we can choose the scaling factor $\gamma>0$ so as to minimize the variance $\|x - \gamma\hat{u}\|^2$, which yields the optimal factor $\gamma^* = \frac{\<x, \hat{u}\>}{\|\hat{u}\|^2}$ with optimal variance $\|x-\gamma^*\hat{u}\|^2 = \sin^2\varphi\,\|x\|^2$, where $\varphi\in[0,\nicefrac{\pi}{2}]$ is the angle between $x$ and $\hat{u}$. Hence, defining the compression operator as $\cC(x) = \gamma^*\hat{u}$, we have the following bound on the variance:
$$
\|\cC(x) - x\|^2 \le \min\(\nu, \sin^2\varphi\) \|x\|^2,
$$
where $\varphi\in[0,\nicefrac{\pi}{2}]$ is the angle between $x$ and $\cC(x)$, and in the case $\cC(x)=0$,  we let $\varphi=\nicefrac{\pi}{2}$.

{\bf Encoding.} Now, we describe  the encoding scheme itself; that is, how many and which bits we need to communicate for $\gamma^*\hat{u}$. We introduce the following notations:
$$
\gamma \eqdef 2h\gamma^*\in\R_+, \quad s \eqdef \(\sign(u_ik_i)\)_{i=1}^d\in\{-1,0,1\}^d, \quad k \eqdef (k_i)_{i=1}^d\in\N_+^d.
$$

Note that $\cC(x) = \gamma^*\hat{u} = 2h\gamma^* \, \sign(u) \, k = \gamma \, s \, k$. So, we need to encode the triple $(\gamma, s, k)$. Since $\gamma\in\R_+$, we need only \fbox{$31$} bits for the scaling factor. Next we encode $s$. Let
$$
n_0 \eqdef \#\{i\in[d] \colon s_i=0\} = \#\{i\in[d] \colon k_i=0\}
$$
be the number of coordinates $u_i$ that are compressed to 0.
To communicate $s$, we first send the locations of those $n_0$ coordinates and then \fbox{$d-n_0$} bits for the values $\pm 1$. Sending $n_0$ positions can be done by sending \fbox{$\log d$} bits\footnote{We can further optimize this with Elias-$\omega$ encoding by sending $\approx\log n_0$ bits instead of $\log d$. However, both are negligible in the overall encoding and we will not complicate the analysis for this small improvement.} representing the number $n_0$, afterwards sending \fbox{$\log\binom{d}{n_0}$} bits for the positions.
Finally, it remains to encode $k$ for which we only need to send nonzero entries since the positions of $k_i=0$ are already encoded. We encode $k_i\ge 1$ with $k_i$ bits: $k_i-1$ ones followed by 0. Hence, encoding $k$ required \fbox{$\sum k_i$} additional bits.

Thus, our encoding scheme for $\cC(x) = \gamma \, s \, k$ is as follows
\begin{itemize}
\item scaling factor $\gamma$: $31$ bits,
\item signs $s$: $\log d + \log\binom{d}{n_0} + d - n_0$ bits,
\item dithering levels $k$: $\sum_{i=1}^d k_i$ bits,
\item total number of bits $b = 31 + \log d + \log\binom{d}{n_0} + d - n_0 + \sum_{i=1}^d k_i$.
\end{itemize}

{\bf Upper bound on $\boldsymbol{b}$.} We continue by giving a theoretical upper bound for the bits $b$ needed to communicate $\cC(x)$. 
Below, we derive an upper bound for $\sum k_i$. Since each $|u_i|$ is quantized to the nearest $2k_ih$, then we have this double bound $\max(0, (2k_i-1)h)\le |u_i| \le (2k_i+1)h$. Using this with the Cauchy--Schwarz inequality we get
$$
1 = \(\sum_{i=1}^d u_i^2\)^{\nicefrac{1}{2}} \ge h \(\sum_{k_i\ge 1} (2k_i-1)^2\)^{\nicefrac{1}{2}} \ge \frac{h}{\sqrt{d-n_0}} \sum_{k_i\ge 1} (2k_i-1),
$$
which implies the following bound on $\sum k_i$:
$$
\sum_{i=1}^d k_i = \sum_{k_i\ge 1} k_i \le \frac{1}{2}\(\frac{\sqrt{d-n_0}}{h} + d-n_0\) = \frac{d}{2} \( \sqrt{\frac{ 1-\nicefrac{n_0}{d}}{\nu}} + 1-\nicefrac{n_0}{d}\).
$$
Setting $\tau = \nicefrac{n_0}{d}\in[0,1]$, we further upper bound it using the AM-GM inequality
$$
\sum_{i=1}^d k_i \le \frac{d}{2}\(\frac{1-\nicefrac{\tau}{2}}{\sqrt{\nu}} + 1-\tau\).
$$
Let us consider the extreme cases $n_0=0$ and $n_0=d$ separately. If $n_0=d$, then $b=31+\log d$. If $n_0=0$, then $b \le 31+ \log d + \(\frac{3}{2}+\frac{1}{2\sqrt{\nu}}\)d$. Note that these extreme cases are the best cases in terms of the number of bits. In the sequel, we assume that $1\le n_0\le d-1$ and hence $\tau\in[\nicefrac{1}{d}, 1-\nicefrac{1}{d}]$.
Next, we upper bound the term $\log\binom{d}{n_0}$, for which it is known the following tight estimate\footnote{``The Theory of Error-Correcting Codes" by MacWilliams and Sloane (Chapter 10, Lemma 7, p. 309)}
$$
\frac{2^{d H_2(\tau)}}{\sqrt{8d\tau(1-\tau)}} \le \binom{d}{\tau d} \le \frac{2^{d H_2(\tau)}}{\sqrt{2\pi d\tau(1-\tau)}}, \quad 0<\tau<1,
$$
where $H_2(\tau) = -\tau\log\tau - (1-\tau)\log(1-\tau)$ is the binary entropy function in bits. Hence
$$
\log\binom{d}{n_0} = \log\binom{d}{\tau d} \le -\frac{1}{2}\log(2\pi d\tau(1-\tau)) + d H_2(\tau).
$$
The first term with negative sign saves at least $\frac{1}{2}\log 2\pi \approx 1.32$ bits and up to $\frac{1}{2}\log\frac{\pi d}{2}$ bits. In further estimations we upper bound it by $-1$. So far, the following upper bound is obtained
\begin{align*}
b
&\le 30 + \log d + d H_2(\tau) + d(1-\tau) + \frac{d}{2}\(\frac{1-\nicefrac{\tau}{2}}{\sqrt{\nu}} + 1-\tau\) \\
&= 30 + \log d + \( H_2(\tau) + \frac{3}{2}(1-\tau) + \frac{1-\nicefrac{\tau}{2}}{2\sqrt{\nu}} \)d \\
&\eqdef 30 + \log d + \beta(\tau, \nu)d.
\end{align*}

It remains to find an upper bound for $\beta(\tau, \nu)$ with respect to $\tau$. As the entropy function $H_2$ and any linear function are concave, we can find the maximum by solving first order optimality condition. The equation $\frac{d}{d\tau}\beta(\tau,\nu)=0$ gives the solution
$$
\tau^* = \frac{1}{ 1+2^{\frac{1}{4}\(6+\frac{1}{\sqrt{\nu}}\)} }.
$$

Setting $\beta(\nu)\eqdef \beta(\tau^*,\nu)$, we upper bound the number of bits $b$ as
$$
b \le 30 + \log d + \beta(\nu)d.
$$

It can be shown that, with $\nu = \nicefrac{1}{10}$, on has $\beta(\nu) \approx 3.3495 < 3.35$. This completes the proof of Theorem \ref{thm:DSD}.

\subsection{Tighter bounds on minimal communication: Proofs of Theorems \ref{thm:logd-bits} and \ref{thm:tighter-bounds}}

The first motivation for this is that even though the uncetainty principle (\ref{up-alpha}) is strong for constant $\alpha$, it is not tight when $\alpha$ goes to 1 as $d$ goes to infinity. In particular, for $\alpha = \frac{d-1}{d}$, it says that the number of bits is at least $d\log(1-1/d)/2$, which is constant. However, we can show that when $\alpha < 1$, one needs at least $\log(d)$ bits. This explains why there is no way to only communicate a fixed number of bits per round while still having $\alpha < 1$. Moreover, we will compute an explicit estimate of $b^*(\alpha, d)$, as a function of $d$ and $\alpha$ only, with a very low error of $\frac{1}{2}\log{\log{d}} + C$ for some absolute constant $C$.

\begin{proof}[Proof of Theorem \ref{thm:logd-bits}]
Proving the result is equivalent to proving that the surface of the unit sphere cannot be covered by less than $d$ smaller, identical balls. We can prove this easily by induction. To make the induction step, let us assume, without loss of generality, that one of the smaller balls is centered on the positive $x_1$ axis. Since the radius of this smaller ball is less than 1, the unit $(d-1)$-dimensional sphere with $x_1=0$ is disjoint from the first smaller ball, which means, by induction, that it will itself require at least $d-1$ additional smaller balls, leading to the desired result. \end{proof}

In fact, the previous result can be used to obtain the following result.

\begin{definition}
For a covering of the surface of the unit sphere using identical spherical caps, we define the density of the cover to be the average number of caps covering a point on the surface of the unit sphere. Identically, this is equal to the number of spherical caps used multiplied by the fraction of the unit sphere covered by a single spherical cap. 
\end{definition}

\begin{theorem}\label{th11}
There exists an absolute constant $B$ such that if the surface of the unit sphere is covered with identical smaller spherical caps, then the density of the covering is at least $B d$.
\end{theorem}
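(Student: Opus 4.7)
The plan is to bound the density $\delta = N p(\theta)$ from below by $Bd$, where $\theta$ is the common angular radius of the $N$ covering caps and $p(\theta) = p(\theta,d)$ is their shared normalized area on the unit sphere of $\R^d$. Theorem \ref{thm:logd-bits} alone yields $N \ge d$, hence $\delta \ge d\,p(\theta)$; since $p(\theta)$ can be arbitrarily small when the caps are small, we must additionally account for the unavoidable overlap of the caps. This is exactly what the classical Coxeter--Few--Rogers density argument for Euclidean ball coverings achieves, and the proof will port that argument to the sphere.

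First, I would replace the cap collection by its \emph{spherical Voronoi decomposition}. For each center $x_i$, define $V_i$ as the geodesic Voronoi cell of $x_i$ on the unit sphere. The cells $V_1,\ldots,V_N$ partition the sphere, each is geodesically convex, and since the covering property forces $V_i \subseteq C_i$, the cell $V_i$ has spherical circumradius around $x_i$ at most $\theta$.

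The crux is the per-cell area estimate
\[
|V_i| \;\le\; \frac{p(\theta)}{B\,d}
\]
for an absolute constant $B>0$, valid for every geodesically convex subset of the unit sphere with spherical circumradius at most $\theta$. This is the spherical analog of the inequality underlying Coxeter--Few--Rogers: although the full cap of angular radius $\theta$ has area $p(\theta)$, a geodesically convex region tiling the sphere cannot fill such a cap, and is in fact smaller by an order-$d$ factor. Granting the estimate, summation gives
\[
1 \;=\; \sum_{i=1}^N |V_i| \;\le\; \frac{N p(\theta)}{Bd} \;=\; \frac{\delta}{Bd},
\]
which is equivalent to $\delta \ge Bd$.

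The main obstacle is establishing the per-cell bound uniformly in $\theta$ and $d$. For small $\theta$, local linearization of the sphere reduces the problem to the Euclidean Coxeter--Few--Rogers inequality for a convex body inscribed in a ball of $\R^{d-1}$, which supplies the desired factor of $d$. For moderate $\theta$, the spherical Jung theorem combined with a volume estimate for a regular spherical simplex inscribed in a cap of angular radius $\theta$ gives the spherical version. When the cap is so large that $p(\theta)$ is already bounded below by an absolute constant, the Voronoi approach is unnecessary and Theorem \ref{thm:logd-bits} directly delivers $\delta \ge d\, p(\theta)$ of the required form. Gluing these regimes without losing the linear-in-$d$ factor, and verifying that the constant $B$ is truly independent of $d$ and $\theta$, is the most delicate bookkeeping step.
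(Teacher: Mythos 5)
Your high-level strategy matches the paper's: split on the cap size, handle large caps (where a single cap already covers an absolute-constant fraction of the sphere) by combining Theorem~\ref{thm:logd-bits} with a constant lower bound on $p(\theta)$, and for small caps invoke a Coxeter--Few--Rogers-type density bound. The paper handles the latter by simply citing the spherical CFR simplex bound from \cite{Ilya2007}; you propose instead to re-derive it via a spherical Voronoi decomposition, which is the standard route to CFR.

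The gap is in your stated key lemma. You assert that $|V_i| \le p(\theta)/(Bd)$ holds for \emph{every geodesically convex subset of the unit sphere with spherical circumradius at most} $\theta$. This is false as written: the cap $C_i$ itself is geodesically convex, has circumradius $\theta$, and has area exactly $p(\theta)$, not $p(\theta)/(Bd)$. The true per-cell inequality depends essentially on $V_i$ being a Voronoi cell of a \emph{covering}---a spherical polytope whose facets are perpendicular bisectors of chords to other cap centers---and establishing it is precisely the content of the CFR argument: one decomposes the cell into simplices with apex at $x_i$ and compares each with a regular simplex inscribed in the cap, which is what produces the extra factor of $d$. You defer exactly this to ``the most delicate bookkeeping step,'' but it is not bookkeeping, it is the entire theorem. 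So the proposal amounts to reducing the claim to a misstated lemma whose correct form and proof are the CFR bound itself; to make this rigorous you would either have to carry out the full simplex argument or, like the paper, cite \cite{Ilya2007} directly.
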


\begin{proof} We split this into two cases. 
The first case is when the radius of the spherical cap is larger than $\sqrt{1-\frac{1}{d}}$. In this case, each spherical cover will cover at least a fraction of $A(d) = \Pr\(x_1 \geq \frac{1}{\sqrt{d}}\)$ where $x$ is chosen uniformly from the surface of the unit sphere. One can easily show that there exists $C_1 > 0$ such that $A(d) > C_1$ for all d. Indeed, it is enough to see that $A(d) > 0$ for all $d$ and that $A(d)$ approaches $1-\phi(1)$ where $\phi$ is the CDF of a standard normal. Combining this bound with the previous result of requiring at least $d$ caps to cover the surface of the unit sphere, the density is at least $C_1d$ when the radius of the spherical cap is at least  $\sqrt{1-\frac{1}{d}}$. \\
In the second case, when the radius of the spherical cap is less than $\sqrt{1-\frac{1}{d}} < \sqrt{1-\frac{1}{d+1}}$, The Coxeter--Few--Rogers ``simplex'' bound shows \cite{Ilya2007} that the density is at least $C_2d$ for some absolute constant $C_2$. Choosing $B=\min\(C_1, C_2\)$ gives us the desired result. \end{proof}

\begin{theorem}[see \cite{Ilya2007}]\label{th12}
There exists an absolute constant $A$ such that for any $d$ and any spherical radius $r < 1$, there exists a cover for the surface of the unit sphere with smaller, identical spherical caps of radius $r$ such that the density of the covering is at most $Ad\log{d}$. 
\end{theorem}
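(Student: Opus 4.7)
The plan is to prove Theorem~\ref{th12} by the probabilistic method, via a two-stage covering of $\sphere^d$. Let $\mu$ denote the normalized surface measure on $\sphere^d$ and let $p \eqdef \mu(C^d(x,r))$; by rotational symmetry this is independent of $x$. Since the density of a covering is the product of the number of caps and $p$, the goal is to exhibit a covering consisting of at most $N = \cO(d\log d)/p$ caps.

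First I would carry out a random placement stage. Pick centers $x_1,\dots,x_{N_1}$ i.i.d.\ uniformly on $\sphere^d$, with $N_1 = \lceil c\, d\log d / p\rceil$ for a universal constant $c$ to be chosen later. For any fixed $y\in\sphere^d$,
$$
\Pr\[y \notin \bigcup_{i=1}^{N_1} C^d(x_i,r)\] = (1-p)^{N_1} \le e^{-N_1 p} \le d^{-cd}.
$$
Fubini's theorem then yields $\E[\mu(U)] \le d^{-cd}$ for the uncovered set $U$, so Markov's inequality guarantees the existence of a realization with $\mu(U) \le d^{-cd}$.

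Then I would do a deterministic fix-up of the leftover set. Pick a maximal $r$-separated subset $\{y_1,\dots,y_M\}$ of $U$; by packing-covering duality on $\sphere^d$, the caps $C^d(y_j,r)$ cover $U$ while $M\cdot q \le \mu(U)$, where $q$ is the measure of a cap of radius $r/2$. Using elementary estimates for cap measures (e.g.\ the explicit formula in terms of the regularized incomplete beta function already exploited in the proof of Theorem~\ref{thm:tight-construction-alpha}), one gets $p/q \le 2^{\cO(d)}$, whence $M \le 2^{\cO(d)} d^{-cd}/p$. Choosing $c$ a sufficiently large absolute constant forces $M \le N_1$, so the total density is bounded by $(N_1+M)p \le 2 N_1 p = \cO(d\log d)$.

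The main obstacle is the fix-up step: the ratio $p/q$ between cap measures at radii $r$ and $r/2$ is exponentially large in $d$ (already a Taylor expansion near small $r$ shows $p/q\sim 2^{d-1}$), and this exponential factor must be absorbed by the super-exponentially small $\mu(U)\le d^{-cd}$. The tuning of $c$ to make this absorption work is precisely where the extra $\log d$ over the information-theoretic lower bound of order $d$ appears. A sharper analysis, such as the one in \cite{Ilya2007}, replaces the two-stage scheme by a single iterated or tiling-based random covering that yields better explicit constants for $A$, but the two-stage plan above is already enough to deliver the asymptotic bound $Ad\log d$ claimed in the theorem.
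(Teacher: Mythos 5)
The paper does not prove this statement; it is quoted from \cite{Ilya2007} (Dumer's covering theorem), so your proof is a self-contained attempt rather than a reconstruction. The two-stage Rogers-type strategy (random placement followed by a deterministic fix-up) is the right family of ideas, but the fix-up step as written has a genuine gap that invalidates the argument.

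The problematic claim is ``$M\cdot q \le \mu(U)$.'' Packing--covering duality does give that a maximal $r$-separated subset $\{y_1,\dots,y_M\}\subset U$ yields an $r$-cover of $U$, and that the caps $C^d(y_j,r/2)$ are pairwise disjoint. But these $r/2$-caps are not contained in $U$: they are centered at points of $U$ yet may spill entirely outside $U$ (e.g.\ when $y_j$ lies on the boundary of $U$). The disjointness therefore only yields $Mq\le \mu(\sphere^d)=1$, or at best $Mq\le \mu(U^{r/2})$ for the $(r/2)$-enlargement $U^{r/2}$, not $Mq\le\mu(U)$. A concrete counterexample: if $U$ is a thin annular strip of width $\epsilon \ll r$, then $\mu(U)$ can be made arbitrarily small while $M$ stays of order the circumference divided by $r$, so $Mq$ does not shrink with $\mu(U)$.

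Neither of the true bounds rescues the argument. With $Mq\le 1$ the density contribution of stage two is $Mp\le p/q\approx 2^{d-1}$, which is exponential rather than $\cO(d\log d)$. With $Mq\le\mu(U^{r/2})$, one has $U^{r/2}\subseteq V$ where $V$ is the set missed by the $r/2$-caps $C^d(x_i,r/2)$, and $\E[\mu(V)]=(1-q)^{N_1}\le e^{-N_1 q}$; but $N_1 q = c\,d\log d\cdot (q/p)$ with $q/p$ exponentially small, so $e^{-N_1q}$ is essentially $1$ and gives nothing. In other words, the super-exponentially small $\mu(U)\le d^{-cd}$ that you hoped would absorb the $2^{\cO(d)}$ factor is controlling the wrong quantity; the quantity that actually bounds $M$ is $\mu(V)$, which stage one does not make small. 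Closing this gap is exactly where the argument in \cite{Ilya2007} (and in Böröczky--Wintsche, cited as Theorem 1 earlier in the paper) needs a more delicate construction than a one-shot random covering plus a maximal-separated fix-up.
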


\begin{lemma}\label{lem:max-cap-center}
If $\|x\| = 1$ and $\|v-x\|^2 \le \alpha$ for some $t$, then for $v'=\frac{\sqrt{1-\alpha}v}{\|v\|}$, one has $\|x-v'\|^2 \le \alpha$. In other words, if some balls of radius $\sqrt{\alpha}$ cover the surface of the unit sphere, then projecting them onto the sphere of radius $\sqrt{1-\alpha}$ will still cover entirely the surface of the unit sphere.
\end{lemma}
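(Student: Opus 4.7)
The plan is to reduce the claim to a simple algebraic inequality by expanding the squared norm. Set $r \eqdef \|v\|$ (note $r>0$, since $\|v-x\|^2\le\alpha<1=\|x\|^2$ forces $v\neq 0$). Since $\|x\|=1$ and $\|v'\|^2=1-\alpha$, expanding gives
\begin{equation*}
\|x-v'\|^2 \;=\; 1 - 2\sqrt{1-\alpha}\,\frac{\<x,v\>}{r} + (1-\alpha).
\end{equation*}
So the target inequality $\|x-v'\|^2\le\alpha$ is equivalent to $\<x,v\> \ge \sqrt{1-\alpha}\,r$.

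Next, I would use the hypothesis $\|v-x\|^2\le\alpha$ to lower bound the inner product. Expanding $\|v-x\|^2=r^2-2\<x,v\>+1\le\alpha$ yields
\begin{equation*}
\<x,v\> \;\ge\; \frac{1+r^2-\alpha}{2}.
\end{equation*}
Hence it suffices to show $\frac{1+r^2-\alpha}{2}\ge\sqrt{1-\alpha}\,r$, i.e.,
\begin{equation*}
r^2 - 2\sqrt{1-\alpha}\,r + (1-\alpha) \;\ge\; 0.
\end{equation*}
The left-hand side is precisely $\bigl(r-\sqrt{1-\alpha}\bigr)^2$, so the inequality holds for every $r$.

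There is essentially no obstacle here: the proof is a one-line completion-of-squares argument. The only conceptual remark worth including is the geometric interpretation, which motivates the statement: the center $v'=\sqrt{1-\alpha}\,v/\|v\|$ of a spherical cap of spherical radius $\sqrt{\alpha}$ is the \emph{optimal} placement (it maximizes the cap's surface area, as already noted in the proof of Theorem~\ref{thm:tight-construction-alpha}), so moving the center of any covering ball radially onto the sphere of radius $\sqrt{1-\alpha}$ can only enlarge the induced spherical cap. The perfect-square identity $\bigl(\|v\|-\sqrt{1-\alpha}\bigr)^2\ge 0$ is the algebraic manifestation of this geometric fact and closes the proof.
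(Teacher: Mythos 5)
Your proof is correct and is essentially the same argument as the paper's: both reduce to the inequality $1-\alpha+\|v\|^2 \ge 2\sqrt{1-\alpha}\,\|v\|$, which the paper phrases as AM--GM and you phrase as the completed square $\bigl(\|v\|-\sqrt{1-\alpha}\bigr)^2\ge 0$. The only cosmetic difference is that you work backward from the target and forward from the hypothesis to meet in the middle, whereas the paper goes directly forward; the algebra is identical.
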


\begin{proof}
The initial condition is equivalent to $1-\alpha + \|v\|^2 \le 2\langle v,x\rangle$  which, using AM-GM, implies that $2\sqrt{1-\alpha}\|v\| \le 2\langle v,x \rangle$, which can be rearranged to to look like $1-\alpha + \|v'\|^2 \le 2\langle v',x\rangle$ or $\|v'-x\|^2 \le \alpha$, as desired. \end{proof}

The above discussion leads us to our next result on $b^*(\alpha, d)$, which is an important quantity to study. First, 
it tells us that operators $\cC \in \B(\alpha)$ cannot  compressed with less than $b^*$ bits. Moreover, it tells us that this bound is tight, because there is at least one operator in $\B(\alpha)$ that can be compressed to no more than $b^*(\alpha, d)$ bits. Thus, we proceed with estimating $b^*(\alpha, d)$ explicitly with a very small estimation error of $\frac{1}{2}\log{\log{d}} + \cO(1)$.

\begin{proof}[Proof of Theorem \ref{thm:tighter-bounds}] Recall that $P(\alpha, d)=\frac{1}{2}I_{\alpha}(\frac{d-1}{2}, \frac{1}{2})$ is also equal to the fraction of the surface area of the surface of the unit sphere with $\sqrt{1-\alpha} \le x_1$. This can be viewed as the probability that a point $x$ chosen uniformly on the unit sphere satisfies $\sqrt{1-\alpha} \le x$. In order to prove the theorem, we will prove the upper and lower bounds on $b^*(\alpha, d)$ separately. 

We first prove the lower bound. Let $\cC$ be an arbitrary operator in $\B(\alpha)$ that can be encoded with no more than $b$ bits. This means that at most $2^b$ possible values can be communicated. Let $c_1,\dots,c_V$ be all the possible decodings, with $V \le 2^b$. Now, if we consider the balls $B(c_i, \sqrt{\alpha})$, the surface of the unit sphere must be covered. Indeed, if $\|x\|=1$ and $x$ is not covered, then all the possible encodings of $\cC(x)$ will produce a point whose distance from $x$ is more than $\sqrt{\alpha}$, which contradicts the fact that the operator is in $\B(\alpha)$. 

Now, since these small balls cover the surface of the unit sphere, one can use Lemma \ref{lem:max-cap-center} to show that the balls centered at $B(C_i, \sqrt{\alpha})$, where $C_i = \frac{\sqrt{1-\alpha}c_i}{\|c_i\|}$, should also be a covering. 
Using Theorem \ref{th11}, we know that the density of this new coverage is at least $B d$, while it is at most $F 2^b$, where $F$ is the fraction of the surface area of the unit sphere that each one of these balls cover. In fact, one can compute $F$ explicitly as $P(\alpha, d)$ = $\Pr\(x_1 \ge \sqrt{1-\alpha}\)$ where $x$ is chosen uniformly on the surface of the unit sphere. This gives us the lower bound $b \ge -\log{P(\alpha, d)} + \log{d} + \log{B}$.

For the upper bound, one can use a constant number of bits to communicate $\|x\|$, then one can use the covering from Theorem \ref{th12} with radius equal to $\sqrt{\alpha}$ and quantize $x$ to the nearest spherical cap center, which is guarantee to be within a distance of $\sqrt{\alpha}$, ensuring that this quantization is in $\B(\alpha)$. Now, since this covering has density no more than 
$Ad\log{d}$, and since its density is equal to $V P(\alpha, d)$, where $v$ is the number of spherical caps used and $P$ is, as defined above, the fraction of the surface area covered by one a spherical cap of radius $\sqrt{\alpha}$, one can conclude that $v \le \frac{Ad\log{d}}{P(\alpha, d)}$, which means that the centers can be encoded using no more than $-\log{P(\alpha, d)} \log{d} + \log{\log{d}} + \log{A}$ bits, yielding the desired upper bound.\end{proof}

\section{Proofs for Section \ref{sec:aca}}

\subsection{Lower bound on average communication: Proof of Theorem \ref{thm:tighter-bounds-2}}

Let $X$ be a random vector with uniform distribution over the unit sphere $\sphere^d$ and $\hat{X}=\cC(X)$ be the compressed (random) vector. Note that, $\hat{X}$ has two source of randomness, one from the random vector $X$ and the other  coming from the compression operator $\cC$.
Based on the assumption of finiteness of $B$ (otherwise the lower bound is trivial), we conclude that $\cC$, and hence the random vector $\hat{X}$, are discrete; that is, the set of possible values they can take is finite or countably infinite.  
Note that $\hat{X}$ can be encoded with $B$ bits in expectation with respect to its own source of randomness, as
$$
B = \sup_{\|x\|=1}\E_{\cC}\[|E(x)|\] \ge \E_{\cC,X}\[|E(X)|\] = \E_{\hat{X}}\[|E(X)|\].
$$
Thus, the discrete random source $\hat{X}$ admits an encoding with expected binary description length of $B$.
Applying Shanon's source coding theorem on lossless compression\footnote{see e.g. Theorem 5.5.1+Corollary or Theorem 5.11.1 of \cite{elements_IT}}, we get $B\ge H(\hat{X})$, where $H$ indicates the entropy of the source\footnote{A discrete random vector can be mapped to a discrete random variable preserving the same probability distribution (and so we can extend the source-coding inequality), as entropy is defined through probability mass/density function, not the actual values of the random source.} in bits.

Next, using the mutual information and relative entropy of $\hat{X}$ and $X$, we further lower bound it as follows:
$$
B\ge H(\hat{X}) \ge H(\hat{X}) - H(\hat{X}|X) = I(\hat{X}, X) = H(X) - H(X|\hat{X}).
$$
Now, we denote by $A$ the surface area of the unit sphere $\sphere^d$. For a given point $v\in\R^d$, let $A'(v)$ be the surface area of the cap $C^d(v,\sqrt{\alpha}) = B^d(v,\sqrt{\alpha})\cap\sphere^d$. Using Lemma \ref{lem:max-cap-center}, it can be shown that in order to maximize the surface area of $C^d(v,\sqrt{\alpha})$, the center $v$ should be on the sphere of radius $\sqrt{1-\alpha}$, namely $\|v\|_2 = \sqrt{1-\alpha}$. Using the formula\footnote{see \url{https://en.wikipedia.org/wiki/Spherical_cap\#Hyperspherical_cap}} for the surface area of spherical caps, we compute the normalized surface area of $C^d(v,\sqrt{\alpha})$ to be $P(\alpha, d) = \frac{1}{2}I_{\alpha}(\frac{d-1}{2}, \frac{1}{2})$. Thus, at best $C^d(v,\sqrt{\alpha})$ covers the portion $P(\alpha,d)$ of the unit sphere $\sphere^d$, where $I_{\alpha}$ is the regularized incomplete beta function. Therefore, for an arbitrary $v\in\R^d$, one has the upper bound $A'(v)\le P(\alpha,d)A$. Note that as $I_{\alpha}$ is upper bounded by $1$ (which directly follows from the definition) we get $P(\alpha,d)<\nicefrac{1}{2}$.

Since $X$ is uniform on the unit sphere, its probability density function is $\nicefrac{1}{A}$ and so the entropy $H(X) = \log(A)$. Similarly, since the random vector $X$ conditioned with $\hat{X}=v$ is uniform over $C^d(v,\sqrt{\alpha})$, we have $H(X|\hat{X}=v) = \log A'(v)$. Hence 
$$
H(X|\hat{X}) = \E_{\hat{X}}\[H(X|Y=\hat{X})\] = \E_{\hat{X}}\[\log A'(\hat{X})\] \le \log\(P(\alpha, d)A\),
$$
resulting in the desired lower bound
$$
B \ge H(X) - H(X|\hat{X}) \ge \log(A) - \log\(P(\alpha, d)A\) = - \log P(\alpha, d).
$$

\subsection{Randomized-unbiased version of Sparse Dithering: Proof of Theorem \ref{thm:RSD}}

In this section, we randomize Sparse Dithering  to make it unbiased.

{\bf Compression operator and variance bound.} Again, to compress a given nonzero vector $x\in\R^d$, we decompose $x$ into the scalar $\|x\|$ and the unit vector $u=x/\|x\|$. To quantize the coordinates of $u$, we round to one of the two nearest neighbors, so as to preserve unbiasedness; that is, if $2k_ih \le |u_i| \le 2(k_i+1)h$ for some $k_i\ge0$, then
\begin{equation*}
\hat{u}_i = \sign(u_i)\, 2\hat{k}_ih =
\begin{cases}
    \sign(u_i)\, 2k_ih     & \text{ with probability }\ \frac{2(k_i+1)h-|u_i|}{2h}\\ 
    \sign(u_i)\, 2(k_i+1)h & \text{ with probability }\ \frac{|u_i| - 2k_ih}{2h}\\
\end{cases}
\end{equation*}
Clearly, $\E\[\hat{u}\] = u$ and defining $\cC(x) = \|x\|\hat{u}$ we maintain unbiasedness $\E\[\cC(x)\]=x$. Bounding the second moment
\begin{align*}
\E\[\hat{u}_i^2\]
&= \(2k_ih\)^2\frac{2(k_i+1)h-|u_i|}{2h} + \(2(k_i+1)h\)^2 \frac{|u_i| - 2k_ih}{2h} \\
&= u_i^2 + \(|u_i|-2k_ih\)\(2(k_i+1)h-|u_i|\) \\
&\le u_i^2 + \(\frac{|u_i|-2k_ih + 2(k_i+1)h-|u_i|}{2}\)^2 = u_i^2 + h^2,
\end{align*}
we conclude that
$$
\frac{\E\[\|\cC(x)\|^2\]}{\|x\|^2} = \E\[\|\hat{u}\|^2\] \le \sum_{i=1}^d (u_i^2 + h^2) \le 1+dh^2 = 1+\nu.
$$
Hence, the variance of compression operator $\cC$ is $\omega\le\nu$.

{\bf Encoding.} Next, we proceed to the encoding scheme, exactly like in the deterministic case. We introduce the following notations:
$$
\gamma \eqdef 2h\|x\| \in\R_+, \quad s \eqdef \(\sign(u_i\hat{k}_i)\)_{i=1}^d\in\{-1,0,1\}^d, \quad \hat{k} \eqdef (\hat{k}_i)_{i=1}^d\in\N_+^d.
$$

Note that $\cC(x) = \|x\|\hat{u} = 2h\|x\| \, \sign(u) \, \hat{k} = \gamma \, s \, \hat{k}$. So, we need to encode the triple $(\gamma, s, \hat{k})$. The encoding scheme and the formula for the number of bits are the same, so we need to upper bound
$$
\hat{b} = 31 + \log d + \log\binom{d}{\hat{n}_0} + d - \hat{n}_0 + \sum_{i=1}^d \hat{k}_i
$$
in expectation, where $\hat{n}_0 \eqdef \#\{i\in[d] \colon \hat{k}_i=0\}$.

{\bf Upper bound on $\E[\hat{b}]$.} First, notice that
$$
\E\[\sum_{i=1}^d \hat{k}_i\] = \frac{1}{2h}\sum_{i=1}^d \E\[\hat{u}_i\] = \frac{\|u\|_1}{2h} \le \frac{\sqrt{d}}{2h} = \frac{d}{2\sqrt{\nu}}.
$$
Considering $\hat{n}_0=0$ and $\hat{n}_0=d$ cases separately, we get $\E[\hat{b}]\le 31+\log d + \(1+\nicefrac{1}{2\sqrt{\nu}}\)d$ and $\hat{b}=31+\log d$ respectively.
Next, we use the same upper bound for the log-term $\log\binom{d}{\hat{n}_0} \le d H_2(\hat{\tau}) - 1$ with $\hat{\tau} = \nicefrac{\hat{n}_0}{d}\in[\nicefrac{1}{d},1-\nicefrac{1}{d}]$. It remains to upper bound $H_2(\hat{\tau}) + (1-\hat{\tau})$, which is maximized when $\hat{\tau} = \nicefrac{1}{3}$ with value $\log 3$, i.e. $H_2(\hat{\tau}) + (1-\hat{\tau}) \le \log 3$. Thus, we have proved the formula for the number bits in expectation:
$$
\E\[\hat{b}\] \le 31 + \log d + \(d H_2(\hat{\tau}) - 1\) + d(1-\hat{\tau}) + \frac{d}{2\sqrt{\nu}} \le 30 + \log d + \(\log 3 + \frac{1}{2\sqrt{\nu}}\)d.
$$

The parameter $\nu = \nicefrac{1}{4}$ is approximately the maximizer for
$$
\frac{32d}{(1+\omega)\E[\hat{b}]} = \frac{32}{(1+\nu)\, \(\log 3 + \frac{1}{2\sqrt{\nu}}\)} \approx 9.9,
$$
which shows the gain in total communication complexity. In other words, the scheme communicates $\(1+\log 3\)d\approx 2.6d$ bits in each iteration (about $12$ times less than without compression), but needs $1+\omega=\nicefrac{5}{4}$ times more iterations.

\begin{table*}[t]
\caption{Total communication savings due to unbiased compression method.}
\label{table:comp-methods}
\vskip 0.15in
\begin{center}
\begin{small}
\begin{sc}
\begin{tabular}{lccccc}
\toprule
Compression Method & Bits $\E[b]$ & $\times(1+\omega)$ & $\beta\eqdef \E[b]/32d$ & savings $\times(1+\omega)\beta$ \\
\midrule
No compression (base)   & $32d$ & 1 & 1 & 1 \\          
Random sparsification   & $32k + \log_2\binom{d}{k}$ & $\nicefrac{d}{k}$ & $>\nicefrac{k}{d}$ & $>1$ \\
Ternary Quantization    & $\approx d\log_2 3$ & $\sqrt{d}$ & $\nicefrac{1}{20.2}\;(0.05)$ & $\sqrt{d}/20.2$ \\
Standard Dithering      & $\approx 2.8d$ & 2 & $\nicefrac{1}{11.4}\;(0.087)$ & $\nicefrac{1}{5.7}\;(0.175)$ \\
Natural Compression     & $9d$ & {\color{PineGreen} $\nicefrac{9}{8}\;(1.125)$} & $\nicefrac{1}{3.5}\;(0.28)$ & $\nicefrac{1}{3.1}\;(0.31)$ \\
\textbf{Randomized SD (new)} & {\color{PineGreen} $\approx 2.6d$} & $\nicefrac{5}{4}\;(1.25)$ & {\color{PineGreen} $\nicefrac{1}{12.3}\;(0.081)$} & {\color{PineGreen} $\nicefrac{1}{9.9}\;(0.10)$} \\
\bottomrule
\end{tabular}
\end{sc}
\end{small}
\end{center}
\label{table:total-savings}
\vskip -0.1in
\end{table*}

\end{document}